\let\oldaddcontentsline\addcontentsline 
\let\addcontentsline\oldaddcontentsline
\crefname{equation}{}{}
\def\balign#1\ealign{\begin{align}#1\end{align}}
\def\baligns#1\ealigns{\begin{align*}#1\end{align*}}
\def\balignat#1\ealign{\begin{alignat}#1\end{alignat}}
\def\balignats#1\ealigns{\begin{alignat*}#1\end{alignat*}}
\def\bitemize#1\eitemize{\begin{itemize}#1\end{itemize}}
\def\benumerate#1\eenumerate{\begin{enumerate}#1\end{enumerate}}
\newenvironment{talign*}
 {\csname align*\endcsname}
 {\endalign}
\newenvironment{talign}
 {\csname align\endcsname}
 {\endalign}
\def\balignst#1\ealignst{\begin{talign*}#1\end{talign*}}
\def\balignt#1\ealignt{\begin{talign}#1\end{talign}}
\newcommand{\qtext}[1]{\quad\text{#1}\quad}
\let\originalleft\left
\let\originalright\right
\renewcommand{\left}{\mathopen{}\mathclose\bgroup\originalleft}
\renewcommand{\right}{\aftergroup\egroup\originalright}
\def\tinycitep*#1{{\tiny\citep*{#1}}}
\def\tinycitealt*#1{{\tiny\citealt*{#1}}}
\def\tinycite*#1{{\tiny\cite*{#1}}}
\def\smallcitep*#1{{\scriptsize\citep*{#1}}}
\def\smallcitealt*#1{{\scriptsize\citealt*{#1}}}
\def\smallcite*#1{{\scriptsize\cite*{#1}}}
\def\mbf#1{\mathbf{#1}}
\def\mbb#1{\mathbb{#1}}
\def\mrm#1{\mathrm{#1}}
\def\reals{\mathbb{R}} %
\def\R{\mathbb{R}}
\def\naturals{\mathbb{N}} %
\def\N{\mathbb{N}}
\def\<{\left\langle} %
\def\>{\right\rangle}
\def\defeq{\triangleq} %
\def\half{\frac{1}{2}}
\newcommand{\ident}{\mbf{I}} %
\def\norm#1{\left\|{#1}\right\|} %
\newcommand{\twonorm}[1]{\norm{#1}_2} %
\newcommand{\opnorm}[1]{\norm{#1}_{\mathrm{op}}} %
\def\staticnorm#1{\|{#1}\|} %
\newcommand{\twonorms}[1]{\staticnorm{#1}_2} 
\newcommand{\onenorms}[1]{\staticnorm{#1}_1} 
\newcommand{\opnorms}[1]{\staticnorm{#1}_{\mathrm{op}}} %
\newcommand{\fronorms}[1]{\staticnorm{#1}_{F}} %
\def\maxeig#1{\lambda_{\mathrm{max}}\left({#1}\right)}
\def\mineig#1{\lambda_{\mathrm{min}}\left({#1}\right)}
\def\indic#1{\mathbbm{1}\left[{#1}\right]} %
\def\E{\mbb{E}} %
\def\P{\mbb{P}} %
\def\Var{\mrm{Var}} %
\def\Cov{\mrm{Cov}} %
\newcommand{\toas}{\stackrel{\mrm{a.s.}}{\longrightarrow}}
\def\independenT#1#2{\mathrel{\rlap{$#1#2$}\mkern4mu{#1#2}}}
\newcommand{\iid}{\textrm{i.i.d.}\xspace}
\newcommand{\dist}{\sim}
\newcommand{\distiid}{\overset{\textrm{\tiny\iid}}{\dist}}
\providecommand{\argmin}{\mathop\mathrm{arg min}}
\providecommand{\tr}{\mathop\mathrm{tr}}
\renewenvironment{proof}{\noindent\textbf{Proof}\hspace*{1em}}{\qed\\}
\newenvironment{proof-sketch}{\noindent\textbf{Proof Sketch}
  \hspace*{1em}}{\qed\bigskip\\}
\newenvironment{proof-idea}{\noindent\textbf{Proof Idea}
  \hspace*{1em}}{\qed\bigskip\\}
\newenvironment{proof-of-lemma}[1][{}]{\noindent\textbf{Proof of Lemma {#1}}
  \hspace*{1em}}{\qed\\}
\newenvironment{proof-of-theorem}[1][{}]{\noindent\textbf{Proof of Theorem {#1}}
  \hspace*{1em}}{\qed\\}
\newenvironment{proof-attempt}{\noindent\textbf{Proof Attempt}
  \hspace*{1em}}{\qed\bigskip\\}
\newcommand{\ncref}[1]{\cref{#1}: \nameref*{#1}} %
\newcommand{\pcref}[1]{Proof of \ncref{#1}} %
\def\independenT#1#2{\mathrel{\rlap{$#1#2$}\mkern2mu{#1#2}}}
\newcommand{\indp}{\protect\mathpalette{\protect\independenT}{\perp}}
\renewcommand{\tr}{\ensuremath{\operatorname{tr}}}
\def\R{\mathbb{R}}
\renewcommand{\N}{\mathcal{N}}
\newcommand{\Zset}{\mathcal{Z}}
\newcommand{\Xset}{\mathcal{X}}
\newcommand{\Yset}{\mathcal{Y}}
\newcommand{\Rhat}{\hat{R}_n}
\newcommand{\Rcondcv}{R_n}
\newcommand{\sig}{\sigma}
\newcommand{\betatrue}{\beta^\star}
\newcommand{\betatrueT}{\beta^{\star\top}}
\newcommand{\betatruesq}{\beta^{\star 2}}
\newcommand{\rstab}{r}
\newcommand{\ie}{\textrm{i.e.}\xspace}
\newcommand{\X}{\mathbf{X}}
\newcommand{\Y}{\mathbf{Y}}
\newcommand{\lasso}[1][\lambda_n]{\hat{\beta}^{\textsc{lasso}}_{#1}}
\newcommand{\lassoT}[1][\lambda_n]{\hat{\beta}^{\textsc{lasso}\top}_{#1}}
\newcommand{\st}[1][\lambda_n]{\hat{\beta}_{#1}}
\newcommand{\stT}[1][\lambda_n]{\hat{\beta}_{#1}^\top}
\newcommand{\ols}[1][]{\hat{\beta}_{\textsc{ols}#1}}
\newcommand{\olsi}{\ols[,i]}
\newcommand{\Lasso}[1][\lambda_n]{\textup{Lasso({$#1$})}\xspace}
\newcommand{\ST}[1][\lambda_n]{\textup{ST({$#1$})}\xspace}
\newcommand{\hdiff}{h_n^{\mrm{diff}}}
\newcommand{\sdiff}{\sig^2(\hdiff)}
\newcommand{\gdiff}{\gamma(\hdiff)}
\newcommand{\lhdiff}{\tilde{h}_n^{\mrm{diff}}} %
\newcommand{\lsdiff}{\sig^2(\lhdiff)}
\newcommand{\lgdiff}{\gamma(\lhdiff)}
\newcommand{\hsing}{h_n^{\mrm{sing}}}
\newcommand{\ssing}{\sig^2(\hsing)}
\newcommand{\gsing}{\gamma(\hsing)}
\newcommand{\lhsing}{\tilde{h}_n^{\mrm{sing}}}
\newcommand{\lssing}{\sig^2(\lhsing)}
\newcommand{\lgsing}{\gamma(\lhsing)}
\newcommand{\lam}{\lambda}
\theoremstyle{plain}
\newtheorem{theorem}{Theorem}[section]
\newtheorem{proposition}[theorem]{Proposition}
\newtheorem{lemma}[theorem]{Lemma}
\newtheorem{definition}[theorem]{Definition}
\theoremstyle{remark}
\icmltitlerunning{The Relative Instability of Model Comparison with Cross-validation}
\begin{document}

\twocolumn[
    \icmltitle{The Relative Instability of Model Comparison with Cross-validation}

  \icmlsetsymbol{equal}{*}

  \begin{icmlauthorlist}
    \icmlauthor{Alexandre Bayle}{1}
    \icmlauthor{Lucas Janson}{1}
    \icmlauthor{Lester Mackey}{2}
  \end{icmlauthorlist}

  \icmlaffiliation{1}{Department of Statistics, Harvard University, Cambridge, MA, USA}
  \icmlaffiliation{2}{Microsoft Research New England, Cambridge, MA, USA}

  \icmlcorrespondingauthor{Alexandre Bayle}{alexandre\_bayle@alumni.harvard.edu}

  \icmlkeywords{Machine Learning}

  \vskip 0.3in
]

\printAffiliationsAndNotice{}  %

\etoctocstyle{1}{Table of contents}
\etocdepthtag.toc{mtchapter}
\etocsettagdepth{mtchapter}{section}

\begin{abstract}

Cross-validation (CV) is known to provide asymptotically exact tests and confidence intervals for model improvement but only when the model comparison is \emph{relatively stable}. 
Surprisingly, we prove that even simple, individually stable models can generate relatively unstable comparisons, calling into question the validity of CV inference.  
Specifically, we show that the Lasso and its close cousin, soft-thresholding, generate relatively unstable comparisons and invalid CV inferences, even in the most favorable of learning settings
and even when both models are individually stable. 
These findings highlight the importance of verifying relative stability before deploying CV for model comparison.
\end{abstract}

\section{Introduction}
\label{sec:intro}

In machine learning, statistics, and the natural sciences, cross-validation (CV) \citep{Stone1974,geisser1975predictive} is routinely used %
to compare the performance of learning algorithms \citep[see, e.g.,][]{https://doi.org/10.1002/ecm.1557,doi:10.1148/ryai.220232}. A common practice is to pair a CV performance estimate with a hypothesis test or confidence interval that accounts for uncertainty. Yet the validity of such uncertainty quantification has been poorly understood until recently, and it is now understood to be closely related to notions of algorithmic stability \citep{MA-WZ:2020,BBJM:2020}. A wide variety of learning algorithms are known to be stable \citep{devroye-wagner:1979a,bousquet-elisseeff:2002,elisseeff-etal:2005,hardt-etal:2016,celisse-guedj:2016,arsov-etal:2019} and hence also enjoy asymptotically exact CV inference when assessed individually.  
However, as we will show, stable algorithms do not always give rise to stable comparisons, calling into question the validity of CV tests and intervals for some model comparisons. %
As a concrete illustration, \cref{fig:STLasso-det-lambda-coverage-prob} displays CV-based confidence intervals which accurately cover the performance of a single stable algorithm but sorely undercover the comparison between two stable instances of that same model with slightly different tuning parameters.

\newcommand{\subfigfracout}{0.5}
\newcommand{\subfigfracinone}{0.465}
\newcommand{\subfigfracintwo}{0.49}
\newcommand{\imgspaceone}{.04}
\newcommand{\imgspacetwo}{.015}
\newcommand{\imgfrac}{1}
\newcommand{\vgap}{.01}

\begin{figure}[t!]
\centering
\hspace{0.01\linewidth}
        \includegraphics[width=0.99\linewidth]{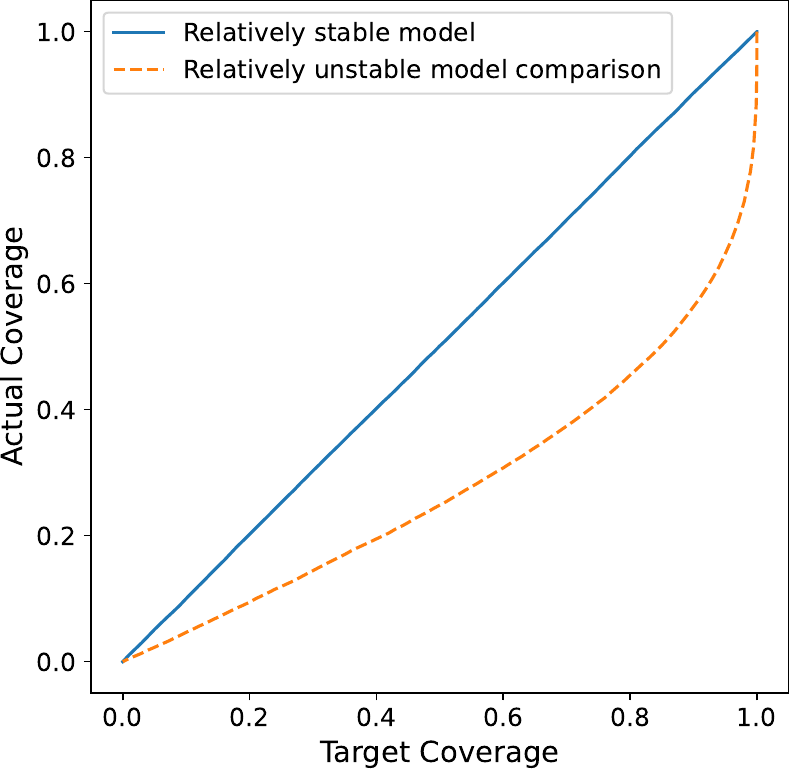}\hspace{0.02\linewidth}%
    \caption{The cross-validation central limit theorem \citep{BBJM:2020} yields accurate coverage for the relatively stable Lasso algorithm but severely undercovers for the relatively unstable comparison of two Lasso fits. See \cref{sec:experiment-details-coverage-prob-fig} for full experiment details. %
     }
    \label{fig:STLasso-det-lambda-coverage-prob}
\end{figure}

\paragraph{Our contributions} 
We expose an important failure mode of cross-validation inference that is tightly linked to the \emph{relative} stability of model comparisons, a notion formally introduced in \cref{preliminaries}.  
In \cref{importance-rel-stab} we show that, even in a simple learning setting, the popular Lasso \citep{tibshirani:1996} and  soft-thresholding (ST) \citep{donoho1994} algorithms fail to generate relatively stable comparisons (\cref{rel-instab-comp-stlasso}), despite being individually relatively stable (\cref{rel-stab-single-algo-stlasso}). 
As we explain in \cref{sec:importance-rs-cv}, this lack of relative stability can render standard CV-based hypothesis tests and confidence intervals invalid,  yielding the severe undercoverage that we observe in \cref{fig:STLasso-det-lambda-coverage-prob} and in our extended experiments in \cref{sec:num-exp}. 

\paragraph{Related work} The importance of the stability of an algorithm with respect to its generalization error \citep{bousquet-elisseeff:2002} has prompted numerous studies of the stability of popular classes of algorithms \citep{bousquet-elisseeff:2002,elisseeff-etal:2005,hardt-etal:2016,celisse-guedj:2016,arsov-etal:2019}. Across the years, different notions of stability have been introduced \citep{devroye-wagner:1979a,devroye-wagner:1979b,kearns-ron:1999,kutin-niyogi:2002,SK-RK-SV:2011,RK-ea:2013}, and, building upon the domain of algorithmic stability, multiple studies \citep{SK-RK-SV:2011, RK-ea:2013, celisse-guedj:2016, MA-WZ:2020, BBJM:2020} have established interesting relationships between the theoretical properties of cross-validation and the stability properties of the algorithms involved.
\citet{MA-WZ:2020} and \citet{BBJM:2020} derive central limit theorems and consistent variance estimators for the CV estimator under sufficient conditions on algorithmic stability. The former paper does so with the \emph{mean-square stability} \citep{SK-RK-SV:2011} and the latter with \emph{loss stability} \citep{RK-ea:2013},
both of which are known to decay to zero for a variety of algorithms.
However, to our knowledge, no prior works have explicitly studied
the sufficient conditions for asymptotic normality in the case
when the asymptotic variance in these central limit theorems goes to zero, as would be expected in the common scenario of comparing the performance of two algorithms that converge to the same prediction rule (e.g., if they are both consistent for the optimal prediction rule). This is the focus of this paper, leading to novel negative results about stability and validity of CV confidence intervals even in very regular settings.

\citet{luo2024limits} study the role of algorithmic stability in model comparison, but while our work focuses on drawing inferences about the test error $\Rcondcv$, defined in \cref{eq:rcond-def-rscv}, theirs focuses on the expected test error $\E[\Rcondcv]$ and shows that inference concerning $\E[\Rcondcv]$ is often difficult even when inference concerning $\Rcondcv$ is easy.
We note that some recent works \cite{lei2020cross,li2023asymptotics,bates2024cross} have studied various other aspects of asymptotic distributional properties of CV, but none present negative results comparable to ours.
Again focusing on expected test error $\E[R_n]$, \citet[Sec.~5.2.1]{bates2024cross} consider the case of low-dimensional linear regression with standard Gaussian features and noise and observe that large sample sizes may be needed for valid CV coverage of single algorithm performance using asymptotically exact intervals. 
Meanwhile, we demonstrate that, for model comparisons, valid test error coverage can fail to materialize for any sample size, even when it does for single algorithm evaluation.

\paragraph{Notation}
For each $n\in\naturals$, we define the set $[n] \defeq \{1,\dots, n\}$.
We denote by $\mineig{\mathbf{A}}$ the minimum eigenvalue of a matrix $\mathbf{A}$.
For deterministic sequences $(f_n)_n$ and $(g_n)_n$, $f_n = o(g_n)$ if $\frac{f_n}{g_n} \rightarrow 0$ as $n \rightarrow \infty$, and $f_n = O(g_n)$ if $\frac{f_n}{g_n}$ is asymptotically bounded. Following canonical notations, we write $f_n = \omega(g_n)$ to mean that $g_n = o(f_n)$ as $n \rightarrow \infty$, we write $f_n = \Omega(g_n)$ to mean that $g_n = O(f_n)$ as $n \rightarrow \infty$, and we write $f_n = \Theta(g_n)$ to mean that $f_n = O(g_n)$ and $f_n = \Omega(g_n)$ as $n \rightarrow \infty$. Finally, we write $f_n \sim g_n$ to mean that $\frac{f_n}{g_n} \rightarrow 1$ as $n \rightarrow \infty$.

\section{Preliminaries}
\label{preliminaries}

Before presenting our results, we establish some necessary definitions, largely following the notation and nomenclature of \citet{BBJM:2020}. 
We will consider a sequence of random data points $(Z_i = (X_i, Y_i))_{i\geq 0}$ taking values in a set $\Zset = \Xset \times \Yset$ and a scalar performance measure $h_n(Z_0,\mathbf{Z})$ where $\mathbf{Z}$ is a training set of size $n$.  A typical choice for $h_n$ in the regression setting is squared error loss, 
\[
h_n(Z_0, \mathbf{Z}) = (Y_0 - \hat{f}(X_0; \mathbf{Z}))^2,
\]
applied to the predicted response value of a test point $Z_0 = (X_0, Y_0)$, obtained from an algorithm fitting a prediction rule $\hat{f}(\cdot; \mathbf{Z})$ to training data $\mathbf{Z}$. 
When comparing the performance of two algorithms, we will choose $h_n$ to be the difference between the losses of two prediction rules. When we switch between the settings of single algorithm assessment and comparison of algorithms, we will make the distinction clear by adding a superscript to $h_n$: $\hsing$ and $\hdiff$, respectively. In addition, our asymptotic statements should all be interpreted as taking $n\to\infty$.

At the center of our study is a notion of relative algorithmic stability based on the {loss stability} of  \citet{RK-ea:2013}.
\begin{definition}[Relative loss stability]
For $n > 0$, let $Z_0$ and $Z'_1, Z_1, \ldots, Z_{n}$ be \iid data points with $\mathbf{Z} = (Z_1, \hdots, Z_n)$ and $\mathbf{Z'} = (Z'_1, Z_2, \hdots, Z_n)$. For any function $h_n : \Zset \times \Zset^{n} \to \reals$
that is invariant to the order of the $n$ elements of its second argument, the \emph{loss stability} \citep{RK-ea:2013} is defined as
\begin{talign}
\label{eq:sym-stab}
\gamma(h_n) \defeq \E[
&(h_n(Z_0,\mathbf{Z}) - \E[h_n(Z_0,\mathbf{Z}) \mid \mathbf{Z}] \nonumber \\
&- (h_n(Z_0,\mathbf{Z'}) - \E[h_n(Z_0,\mathbf{Z'}) \mid \mathbf{Z'}]))^2].
\end{talign}
We also define $\sigma^2(h_n) \defeq \Var(\E[h_n(Z_0, \mathbf{Z}) \mid Z_0])$.
Finally, we define the \emph{relative loss stability} as
\begin{talign}\label{eq:rel_stab}
\rstab(h_n) \defeq \frac{n \, \cdot \, \gamma(h_n)}{\sigma^2(h_n)}.
\end{talign}
\end{definition}

We introduced these quantities for a function $h_n$, but we will generically refer to the loss stability and the relative loss stability of an algorithm or a comparison of algorithms when $h_n$ is clear from context. Note that we include the factor of $n$ in the numerator of \cref{eq:rel_stab}
because it sets the baseline rate of $\rstab(h_n)$ to constant order:
we will say that an algorithm or a comparison of algorithms satisfies the \emph{relative loss stability condition} if $\rstab(h_n) = o(1)$, which is equivalent to a key sufficient condition for the central limit theorem and consistent variance estimation for CV proved in \citet{BBJM:2020}; see \cref{sec:importance-rs-cv} for more on the connection between relative stability and CV.

We will illustrate the importance of relative stability for CV by studying the stability of two widely-used regression algorithms in a simple learning setting. 
Throughout, we will consider $\iid$ data points $Z_i = (X_i, Y_i)$ from the (intercept-free) linear model
\begin{talign}
\label{eq:linear-model}
Y_i = X_i^\top \betatrue + \varepsilon_i, \hspace{1.5cm} \\
\centering
X_i\sim \N(0,\ident),
    \quad
\varepsilon_i \sim \N(0, \tau^2),
    \quad
\varepsilon_i \indp X_i, \nonumber
\end{talign}
parametrized by the unknown vector $\betatrue \in \R^p$ with $p\leq n$ and the noise level $\tau > 0$. 
Here, $\mathbf{Y} = (Y_1, \hdots, Y_n) \in \R^n$ is the vector of response variables or targets, $\mathbf{X} = (X_1, \hdots, X_n)^\top \in \R^{n \times p}$ is the matrix of regressors or features, and $\varepsilon = (\varepsilon_1, \hdots, \varepsilon_n) \in \R^n$ is the noise vector. 
Note that $\mathbf{X}^\top \mathbf{X}$ is almost surely invertible since $p\le n$ \citep[Prop.~8.2]{eaton2007multivariate}.

When assessing a single linear prediction rule, we will use the squared error loss $$\hsing(Z_0, \mathbf{Z}) \defeq (Y_0 - X_0^\top \hat\beta)^2,$$ where
the estimated parameter vector $\hat\beta$ is learned from the training set $\mathbf{Z} = (Z_1, \hdots, Z_n)$. When comparing two prediction rules, we will consider the loss difference 
\begin{equation}\label{eq:hdiff}
    \hdiff(Z_0, \mathbf{Z}) \defeq (Y_0 - X_0^\top \hat\beta^{(1)})^2 - (Y_0 - X_0^\top \hat\beta^{(2)})^2
\end{equation} 
for $\hat\beta^{(1)}$ and $\hat\beta^{(2)}$ both learned on the training set $\mathbf{Z}$.

A classical way to estimate $\betatrue$ is the ordinary least squares (OLS) estimator defined as
\begin{talign}
\ols  \defeq (\mathbf{X}^\top \mathbf{X})^{-1} \mathbf{X}^\top \mathbf{Y}.
\end{talign}
To simplify notation, 
we leave the dependence of $\ols$ on the sample size $n$ implicit. %
When the parameter vector $\betatrue$ is expected to exhibit some level of sparsity, that is to have some number of zero entries, a popular alternative estimator is the Lasso \citep{tibshirani:1996}, 
\begin{talign}\label{eq:lasso}
\lasso \in \argmin_{\beta\in\reals^p} \frac{1}{2n} \twonorms{\Y - \X\beta}^2 + \frac{\lambda_n}{n} \onenorms{\beta},
\end{talign}
which uses a penalty parameter $\lambda_n$ to determine the level of sparsity in the learned parameter vector. 
As a convenient stepping stone for our Lasso analysis, we also study its close cousin, soft-thresholding. 

\begin{definition}[Soft-thresholding (ST)]\label{def:st}
We define the \ST estimator $\st$ elementwise as
\begin{talign}\label{eq:st}
\hat \beta_{\lambda_n, i} \defeq \mrm{sign}(\olsi) (|\olsi| - \frac{\lambda_n}{n})_{+}, \ \ i = 1, \hdots, p.
\end{talign}
\end{definition}

Our first result shows that the ST and Lasso estimators are close whenever $\X^\top \X/n$ is close to the identity. In the sequel, we will use this proximity to deduce Lasso stability and instability results from ST stability and instability.
\begin{lemma}[Lasso-ST proximity]%
\label{lasso-st-close}
For any $\lambda_n > 0$, $\Y\in\reals^n$, and $\X\in\reals^{n\times p}$, the \ST estimator $\st$ \cref{eq:st} and \Lasso estimator $\lasso$ \cref{eq:lasso}
satisfy 
\begin{talign}
\twonorms{\st - \lasso} \leq \frac{\opnorms{\X^\top \X/n - \ident}}{\mu_n} \frac{\lambda_n\sqrt{p}}{n},
\end{talign}
where $\mu_n \defeq \mineig{\X^\top \X/n}$.
Moreover, if $\X =  (X_1,\dots,X_n)^\top$ with $X_i\distiid \N(0,\ident)$, then 
$
\E\Big[\frac{\opnorms{\X^\top \X/n - \ident}^q}{\mu_n^q}\Big] = O(\frac{1}{n^{q/2}})$ 
for any 
$q\in \naturals.$
\end{lemma}

The proof of \cref{lasso-st-close} in \cref{sec:proof-lasso-st-close} follows from viewing $\lasso$ and $\st$ as the optimizers of closely related objective functions and using the optimizer comparison lemma of \citet[Lem.~1]{wilson-etal:2020} to deduce their proximity.

\section{Main Results}
\label{importance-rel-stab}

We now state the  main results of this work.
Our primary theoretical result is that even a simple learning algorithm (ST) in a simple, well-behaved learning setting can fail to generate relatively stable comparisons.
\begin{theorem}[Relative instability of ST comparisons]
\label{rel-instab-comp-stlasso}
Assume the linear model \cref{eq:linear-model} 
with  $\|\betatrue\|_0 < p$. 
For $\lambda_n = O(\sqrt{n})$, $\lambda_n = \omega(1)$, and $\delta_n = \Theta(1)$, consider the algorithm comparison of \ST with $\ST[\lambda_n + \delta_n]$, i.e., $\hdiff$ is defined via \eqref{eq:hdiff} with $\hat\beta^{(1)}=\hat\beta_{\lambda_n}$ and $\hat\beta^{(2)}=\hat\beta_{\lambda_n+\delta_n}$. Then,
\begin{talign}
\frac{n^2}{\delta_n^2} \, \sigma^2(\hdiff) \rightarrow 4 \tau^2 \|\betatrue\|_0 \qtext{and} \gamma(\hdiff) = \Omega(\frac{1}{n^2 \sqrt{n}}). 
\end{talign}
Thus this ST comparison is relatively unstable with 
\begin{talign}
\rstab(\hdiff) = \Omega(\sqrt{n}) \neq o(1).
\end{talign}
\end{theorem}
The proof of \cref{rel-instab-comp-stlasso} can be found in \cref{sec:proof-rel-instab-comp-stlasso}.
Notably, since this main result is a \emph{lower} bound, the stringent assumptions like linearity, sparsity, covariate independence, and Gaussianity serve to strengthen the result, indicating that failure occurs even in this best-case scenario.

Perhaps surprisingly, under the same conditions, we have relative stability for single algorithm assessment.  %
\begin{theorem}[Relative stability of ST]
\label{rel-stab-single-algo-stlasso}
Assume the linear model \cref{eq:linear-model}.
For the single algorithm assessment of \ST, define the loss $\hsing(Z_0, \mathbf{Z}) = (Y_0 - X_0^\top \hat\beta_{\lambda_n})^2$. 
If $\lambda_n = o(n)$, then 
\begin{talign}
\sigma^2(\hsing) \rightarrow 2 \tau^4 \qtext{and} \gamma(\hsing) \sim \frac{C}{n^2}
\end{talign}
for a constant $C > 0$ defined explicitly in \cref{eq:C-value}, so ST is relatively stable with 
\begin{talign}
\rstab(\hsing) \sim \frac{C}{2 \tau^4} \cdot \frac{1}{n} = o(1).
\end{talign}
\end{theorem}
The proof of \cref{rel-stab-single-algo-stlasso} can be found in \cref{sec:proof-rel-stab-single-algo-stlasso}.
This secondary result is also significant as it shows that a CV user can be easily duped into thinking that confidence intervals that rely on stability will yield valid inference in the algorithm comparison setting simply because an algorithm is stable when considered in isolation.

We can think of \cref{rel-instab-comp-stlasso} as a stylized version of a setting where one wants to compare two similar machine learning algorithms, such as when the two only differ by a tuning parameter. Note that $\lambda_n = O(\sqrt{n})$ implies $\lambda_n = o(n)$, and that $\Theta(1)$ is also $o(n)$ as it is asymptotically lower- and upper-bounded by a constant, which means that $\lambda_n + \delta_n = o(n)$ under the conditions of \cref{rel-instab-comp-stlasso} and thus both \ST and ST$(\lambda_n + \delta_n)$ individually satisfy the relative loss stability condition thanks to \cref{rel-stab-single-algo-stlasso}.
So, taken together, \cref{rel-instab-comp-stlasso,rel-stab-single-algo-stlasso} show that even if two learning algorithms are individually well-behaved, their comparison may not be, even when the data comes from a very regular distribution.

We now discuss the penalty parameter regimes appearing in \cref{rel-instab-comp-stlasso,rel-stab-single-algo-stlasso}. In our \cref{sec:num-exp} simulations with features and targets sampled in the same conditions as the theorems, we observe that the values selected for $\lambda_n$ via CV are concentrated around a constant times $\sqrt{n}$. It therefore makes sense to compare two versions of ST with penalization of order $\sqrt{n}$ in \cref{rel-instab-comp-stlasso}, and we do so by setting the base level of penalization to $\lambda_n$ of order $\sqrt{n}$ and parameterizing the difference in penalization of the ST algorithms by $\delta_n$ of order $1$. Note that both $\lambda_n$ and $\delta_n$ are assumed deterministic in the theorems, but we will also present simulations with stochastic $\lambda_n$ selected via inner CV in \cref{sec:num-exp}. Under some regularity conditions on the features, \citet[Thm.~1]{knight-fu:2000} proved that choosing $\lambda_n = o(n)$ ensures weak consistency of the Lasso estimator for $\betatrue$, $\ie$, it converges in probability to $\betatrue$, and it is therefore natural that the regimes we study are always within this weak consistency regime.
As for the $\sqrt{n}$ order of the penalization specific to our primary result, it has been shown to be a regime of interest for variable selection consistency \citep{wainwright-sharp:2009,wainwright-book:2019}.
The powerful Lasso-ST proximity bound of \cref{lasso-st-close} allows us to translate \cref{rel-instab-comp-stlasso,rel-stab-single-algo-stlasso} into identical results for the popular Lasso algorithm, showing that our conclusions about ST are by no means specific to that method.
\begin{theorem}[Relative instability of Lasso comparisons]
\label{rel-instab-comp-lasso}
Assume the linear model \cref{eq:linear-model} 
with  $\|\betatrue\|_0 < p$. 
For $\lambda_n = O(\sqrt{n})$, $\lambda_n = \omega(1)$, and $\delta_n = \Theta(1)$, consider the algorithm comparison of \Lasso with $\Lasso[\lambda_n + \delta_n]$, i.e., $\tilde h_n^{\mrm{diff}}$ is defined via \eqref{eq:hdiff} with $\hat\beta^{(1)}=\lasso$ and $\hat\beta^{(2)}=\lasso[\lambda_n+\delta_n]$. Then
\begin{talign}
\lsdiff = O(\frac{1}{n^2}) \qtext{and} \lgdiff = \Omega(\frac{1}{n^2 \sqrt{n}}).
\end{talign}
Thus this Lasso comparison is relatively unstable with
\begin{talign}
\rstab(\lhdiff) = \Omega(\sqrt{n}) \neq o(1).
\end{talign}
\end{theorem}
Our proof in \cref{proof:rel-instab-comp-lasso} combines the ST instability bounds of \cref{rel-instab-comp-stlasso} with the powerful Lasso-ST proximity bound of \cref{lasso-st-close}. 
As with ST, Lasso comparison instability occurs even though the Lasso algorithm itself is relatively stable.
\begin{theorem}[Relative stability of the Lasso]
\label{rel-stab-single-algo-lasso}
Assume the linear model \cref{eq:linear-model}.
For the single algorithm assessment of \Lasso, define the loss $\lhsing(Z_0, \mathbf{Z}) = (Y_0 - X_0^\top \lasso)^2$. 
If $\lambda_n = o(n)$, then 
\begin{talign}
\sigma^2(\lhsing) = \Omega(1)  \qtext{and} \gamma(\lhsing) = o(\frac{1}{n}),
\end{talign}
so the Lasso algorithm is relatively stable with
\begin{talign}
\rstab(\lhsing) = o(1).
\end{talign}
\end{theorem}
Our Lasso stability proof in \cref{proof:rel-stab-single-algo-lasso} may be of independent interest as the Lasso is known to be unstable under the more stringent notion of \emph{uniform stability} \citep{xu2011sparse}.

\section{Importance of Relative Stability for Cross-validation}
\label{sec:importance-rs-cv}

Let us now consider the implications of our results for CV. When discussing CV, we will use $n$ to refer to the size of each CV training set (rather than the size of the full data sample as in \citet{BBJM:2020}). 
We also let $k$ denote the number of CV folds, note that $k$ can depend on $n$ (for example, leave-one-out CV corresponds to $k=n+1$), and assume that $k-1$ evenly divides $n$. In this notation, the full data sample size equals $\frac{nk}{k-1}$.

Consider $k$ vectors of integers, $\{B_j'\}_{j=1}^k$, each of length $\frac{n}{k-1}$, with elements that partition the indices $[\frac{n k}{k-1}]$. For each $B'_j$, define $B_j$ as a vector of the $n$ indices in $[\frac{n k}{k-1}]$ that are not in $B'_j$, so that we can consider each $(B_j,B'_j)$ as a \emph{train-validation split}.
For $B$ a vector of indices in $[\frac{n k}{k-1}]$, we denote by $Z_B$ the subvector of $(Z_1, \hdots, Z_{\frac{n k}{k-1}})$ corresponding to the entries of $B$. 
Then for a scalar performance measure $h_n$,
we define the \emph{$k$-fold cross-validation error}
\begin{talign}\label{eq:cv-err}
\Rhat 
    \defeq 
    \frac{k - 1}{n k}
    \sum_{j=1}^k
    \sum_{i\in B_j'} h_n(Z_i, Z_{B_j})
\end{talign}
and our inferential target, the \emph{$k$-fold test error}
\begin{talign}\label{eq:rcond-def-rscv}
    \Rcondcv
    \defeq
    \frac{k - 1}{n k}
    \sum_{j=1}^k
    \sum_{i\in B_j'}
    \E[h_n(Z_i, Z_{B_j})\mid Z_{B_j}],
\end{talign}
where the conditioning in $\E[h_n(Z_i, Z_{B_j})\mid Z_{B_j}]$ is on the data points from the $j$-th validation set $Z_{B_j}$ and thus the expectation is taken over only the test point $Z_i$, as the function $h_n$ is treated as non-random.

In our notation, \citet{BBJM:2020} used the stability condition $\gamma(h_n) = o(\frac{\sigma^2(h_n)}{n})$, equivalent to $r(h_n) = o(1)$, to establish the relative error consistency (i.e., $\frac{\hat{\sigma}_n(h_n)}{\sigma(h_n)}\to 1$) of the within-fold variance estimator $\hat{\sigma}_n^2(h_n)$ introduced in \citet[Prop.~1]{MA-WZ:2020} and to prove the central limit theorem (CLT)
\begin{talign}\label{eq:clt}
\frac{\sqrt{\frac{n k}{k-1}}}{\sigma(h_n)} (\Rhat - \Rcondcv)
\stackrel{d}{\rightarrow} \N(0,1). 
\end{talign}
Together, these results enable the construction of asymptotically exact confidence intervals for $R_n$.

When assessing a single algorithm, unless we are in a fully noiseless setting, we might expect $\sigma^2(\hsing)$ to be of constant order in general. In this case, relative loss stability is implied by absolute loss stability of the form $\gamma(\hsing) = o(1/n)$. For instance, we state here in \cref{sigma-consistent-algo} and prove in \cref{proof:sigma-consistent-algo} that in the linear model with noise, $\sigma^2(\hsing)$ converges to a positive constant for any linear predictor satisfying a certain consistency condition.
\begin{lemma}[Convergence of $\sigma^2(\hsing)$ and $\sigma^2(\hdiff)$]%
\label{sigma-consistent-algo}
Consider a generalization of the linear model \cref{eq:linear-model} with $(X_i)_{i\geq0}$ drawn $\iid$ from a distribution with mean 0 and identity covariance. If a linear predictor $\hat\beta_n$ satisfies $\E[\hat\beta_n] \rightarrow \betatrue$ and $\E[\hat\beta_n \hat\beta_n^\top] \rightarrow \betatrue \betatrueT$, then $\sigma^2(\hsing) \rightarrow 2 \tau^4$, where $\tau^2$ is the noise variance. For two linear predictors, if we have $\E[\hat\beta_n^{(1)} - \hat\beta_n^{(2)}] \rightarrow 0$ and $\E[\hat\beta_n^{(1)} \hat\beta_n^{(1)\top} - \hat\beta_n^{(2)} \hat\beta_n^{(2)\top}] \rightarrow 0$, then $\sigma^2(\hdiff) \rightarrow 0$.
\end{lemma}
However, \cref{sigma-consistent-algo} also establishes that when comparing two consistent algorithms, $\sigma^2(\hdiff)$ often goes to $0$ as the performance of the learned algorithms become increasingly similar. In this setting, absolute stability alone is insufficient. In fact, in \cref{rel-instab-comp-stlasso} it turns out that $\gamma(\hdiff) = O(1/n^2)$ (see \cref{sec:rate-lstab-comp-stlasso}), so the ST comparison is loss stable in the absolute sense. However, the \emph{relative} loss stability condition does not hold because it properly accounts for the fact that $\sigma^2(\hdiff)$ also 
 goes to zero at a $1/n^2$ rate.
When relative loss stability is violated, the CLT and consistency guarantees of \citet{BBJM:2020} need no longer hold, and confidence intervals based on \cref{eq:clt} can lead to asymptotically invalid inferences. 
Unfortunately, this worst case does indeed occur in practice as we demonstrate next.

\section{Numerical Experiments}
\label{sec:num-exp}

To numerically confirm the dire consequences of relative instability for CV, we carry simulation results in the setting of \cref{rel-instab-comp-stlasso,,rel-stab-single-algo-stlasso,,rel-instab-comp-lasso,,rel-stab-single-algo-lasso}. 
We sampled the features, the independent noise terms, and the target variables from the linear model \cref{eq:linear-model} with parameter vector $\betatrue = (3, 1, -5, 3, 0, 0, 0, 0, 0, 0)$ of dimension $10$ and noise level $\tau = 10$.
We fix $k = 10$. To satisfy the assumptions of \cref{rel-stab-single-algo-stlasso,rel-instab-comp-stlasso}, we choose $\lambda_n = \sqrt{n}$ for the base level of penalization, and, when comparing algorithms, we set $\delta_n = 1$ for the difference in the penalization parameters.
To explore the asymptotic regime in our simulations, we work with $n$ ranging from $90$ to $90{,}000$ (so the total sample size $\frac{nk}{k-1}$ ranged from $10^2$ to $10^5$).
We used Monte Carlo estimation to compute both $\sigma^2(h_n)$ and $\gamma(h_n)$, leveraging \cref{sigma-mc-rewriting,cond-expec-closed-form} proved in \cref{sec:details-num-exp}. We provide Python code replicating all experiments at \url{https://github.com/alexandre-bayle/ricv} and additional details about the experiments in \cref{sec:details-num-exp}.

We present two types of plots. The first type displays the rates for $\sigma^2(h_n)$, $\gamma(h_n)$, and $\rstab(h_n)$ on the log--log scale by plotting their empirical values for increasing $n$ with dots and plotting lines for the corresponding rates predicted by our theory.
We display the values with a $\pm \, 2$ standard error confidence band with details on how to obtain it for $\rstab(h_n)$ in \cref{sec:details-num-exp}. Thanks to the large number of Monte Carlo replications used, the error bars are very small and thus are not visible. Note that we use the $x$-axis labels $n / 900$ to provide a better scale for visualization. $n / 900$ goes up to $10^2$, which is consistent with $n$ going up to $90{,}000$. For the second type of plot, using kernel density estimation (KDE), we plot the probability density function across sample sizes of both $\frac{\sqrt{\frac{n k}{k-1}}}{\sigma(h_n)} (\Rhat - \Rcondcv)$ and $\frac{\sqrt{\frac{n k}{k-1}}}{\hat\sigma_n(h_n)} (\Rhat - \Rcondcv)$, where $\hat\sigma_n^2(h_n)$ is the within-fold variance estimator, proved to be consistent for $\sigma^2(h_n)$ under the relative loss stability condition in \citet[Thm.~4]{BBJM:2020}. We expect convergence in distribution to $\N(0, 1)$ under the relative loss stability condition thanks to the combination of results of \citet[Thms.~1, 2, and 4]{BBJM:2020}. We therefore shade the area below the curve of the probability density function of $\N(0, 1)$ to make it clear whether the probability density function curves match or not. From its definition \cref{eq:rcond-def-rscv}, note that $\Rcondcv$ is straightforward to compute in the simulations thanks to \cref{cond-expec-closed-form}.
\begin{figure*}[t!]
\centering
\hspace{0.01\linewidth}
    \begin{subfigure}{\subfigfracinone\linewidth}
        \includegraphics[width=\imgfrac\linewidth]{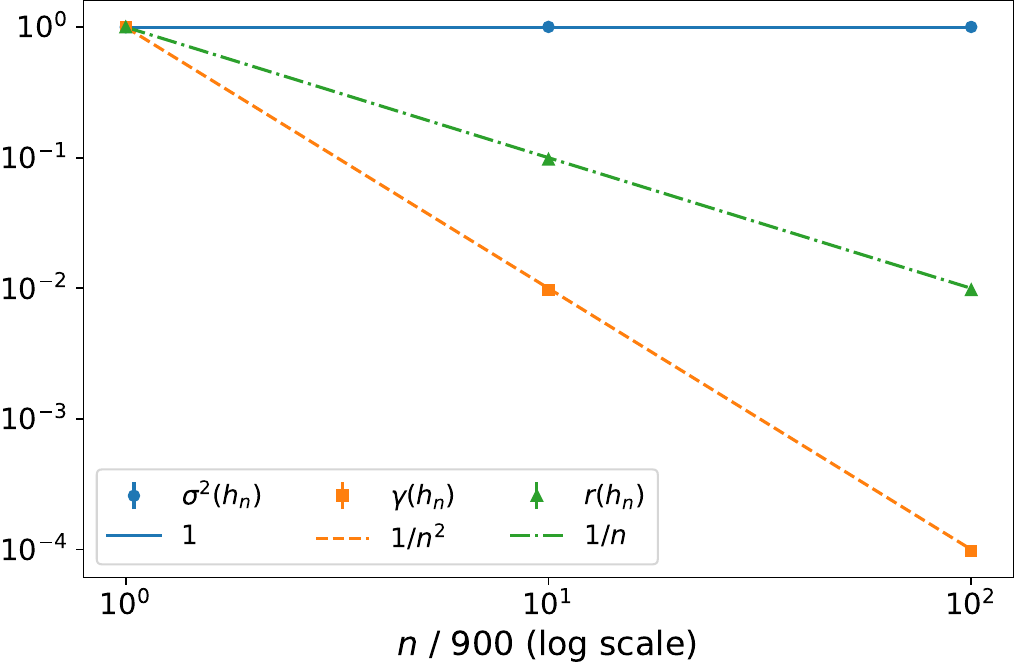}
    \end{subfigure}\hspace{\imgspaceone\linewidth}%
     \begin{subfigure}{\subfigfracinone\linewidth}
             \includegraphics[width=\imgfrac\linewidth]{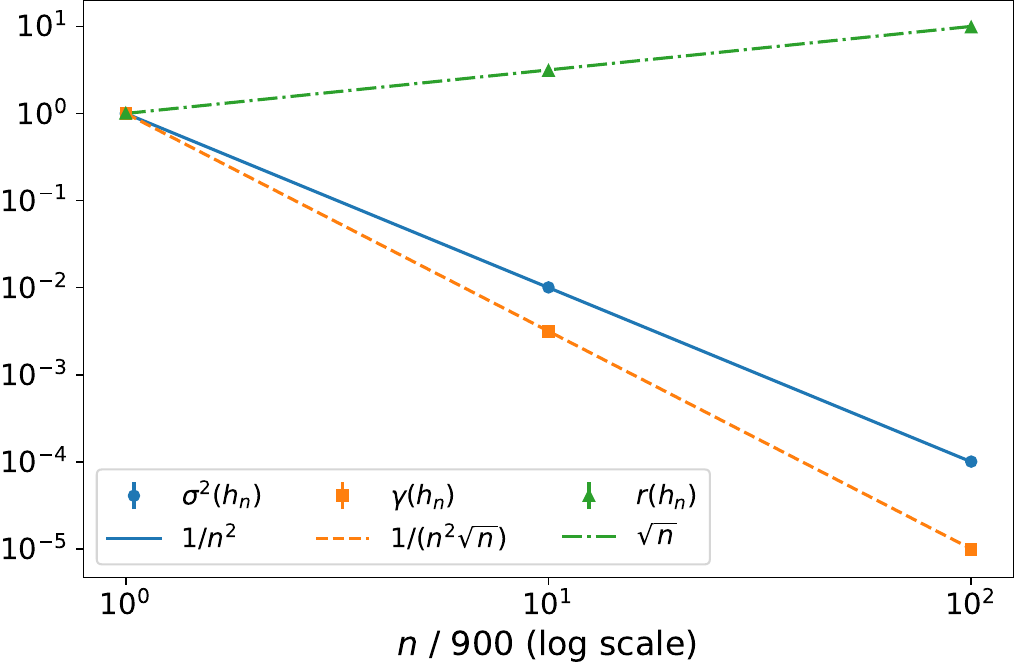}
    \end{subfigure}

    \vspace{\vgap\linewidth}
    \begin{subfigure}{\subfigfracintwo\linewidth}
        \includegraphics[width=\imgfrac\linewidth]{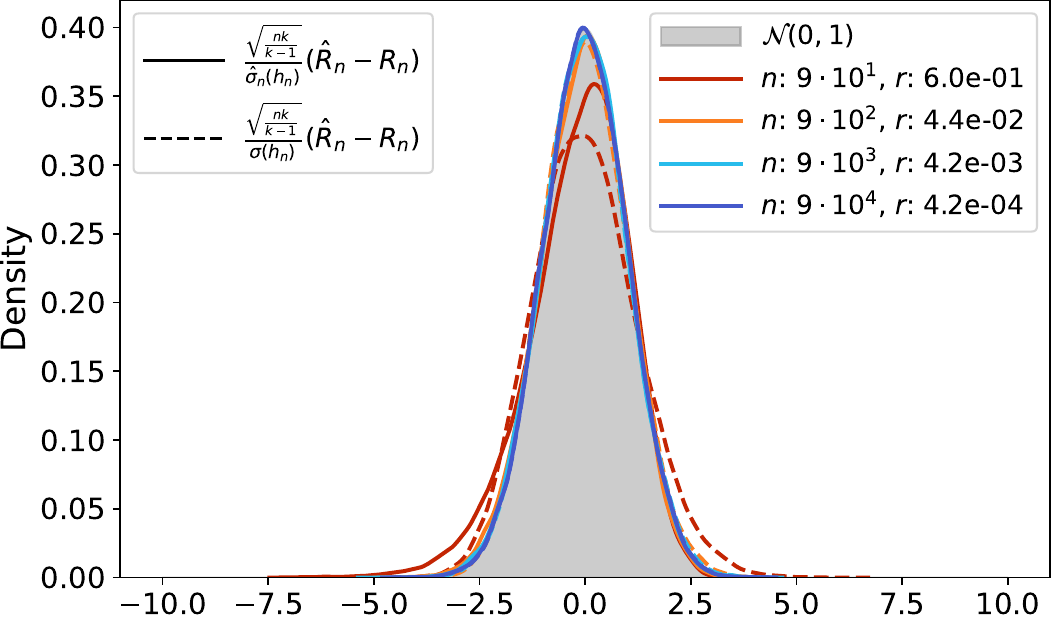}
    \caption{Relatively stable single algorithm ($\hsing$)}
    \end{subfigure}\hspace{\imgspacetwo\linewidth}%
    \begin{subfigure}{\subfigfracintwo\linewidth}
        \includegraphics[width=\imgfrac\linewidth]{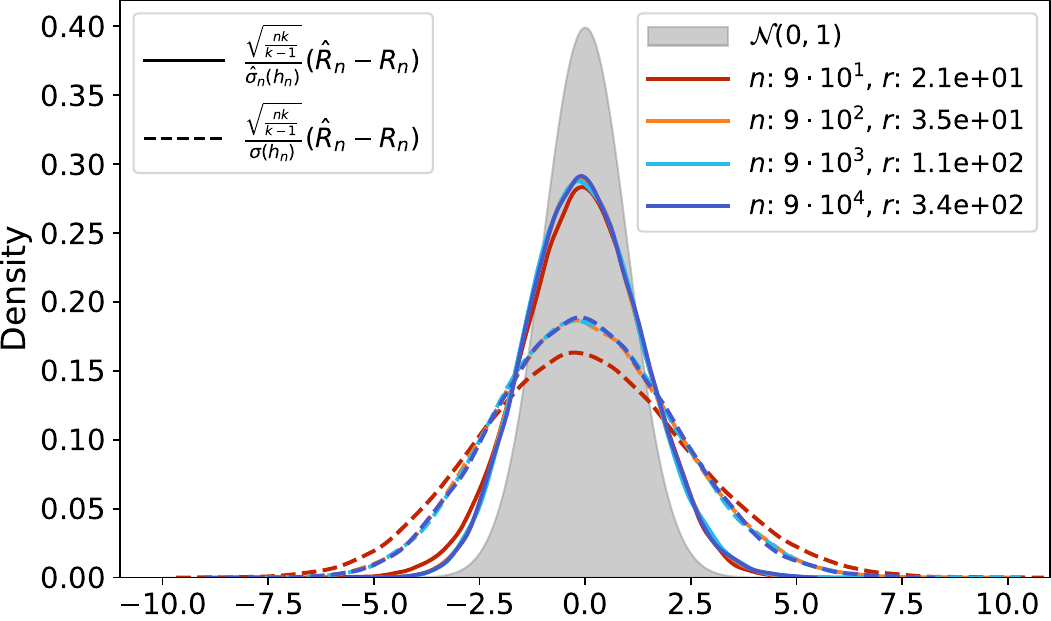}
    \caption{Relatively unstable comparison ($\hdiff$)}
    \end{subfigure}
    \caption{ST with $\lambda_n = \sqrt{n}$ when $\betatrue = (3, 1, -5, 3, 0, 0, 0, 0, 0, 0)$.
     \textbf{Top:} $\sigma^2(h_n)$, $\gamma(h_n)$ and $\rstab(h_n)$ all normalized by their values at $n = 900$. \textbf{Bottom:} (best viewed in color) KDE plots for $\frac{\sqrt{\frac{n k}{k-1}}}{\hat\sigma_n(h_n)} (\Rhat - \Rcondcv)$ (solid curves) and $\frac{\sqrt{\frac{n k}{k-1}}}{\sigma(h_n)} (\Rhat - \Rcondcv)$ (dashed curves).
     }
    \label{fig:STLasso-det-lambda}
\end{figure*}

The simulation results for ST are presented in \cref{fig:STLasso-det-lambda}. For the single algorithm assessment of ST, the rates of $\sigma^2(\hsing)$, $\gamma(\hsing)$, and $\rstab(\hsing)$ are constant order, $1/n^2$ order, and $1/n$ order, respectively, as stated in \cref{rel-stab-single-algo-stlasso}, and for the algorithm comparison of ST, when $\delta_n = 1$, we have the expected $1/n^2$ rate for $\sigma^2(\hdiff)$ and we actually observe that $\gamma(\hdiff)$ and $\rstab(\hdiff)$ seem to be scaling as
$1/(n^2 \sqrt{n})$
and $\sqrt{n}$, respectively, even though \cref{rel-instab-comp-stlasso} only established them being $\Omega$ of these rates.
As we can see for both choices of the dividing standard deviation in the KDE plots of \cref{fig:STLasso-det-lambda}, the asymptotic distribution seems to be Gaussian, but the asymptotic variance does not go to $1$ when the relative loss stability condition does not hold, that is to say in the comparison setting.
The variance estimator $\hat{\sigma}^2_n(h_n^{\text{diff}})$ has been proved to be a consistent estimator of the targeted variance of $\sqrt{\frac{n k}{k-1}} (\Rhat - \Rcondcv)$ under the loss stability condition in \citet{BBJM:2020}. In the setting we explored, the condition does not hold for the comparison, and we observed empirically that $\hat{\sigma}^2_n(h_n^{\text{diff}})$ underestimates the targeted variance of $\sqrt{\frac{n k}{k-1}} (\hat R_n - \Rcondcv)$, and overestimates $\sigma^2(h_n^{\text{diff}})$.
While the intervals proposed in \citet[Eq.~4.1]{BBJM:2020} are valid when the loss stability condition holds, they will not be wide enough when $\hat{\sigma}^2_n(h_n^{\text{diff}})$ underestimates the targeted variance of $\sqrt{\frac{n k}{k-1}} (\hat R_n - \Rcondcv)$, leading to undercoverage and hence asymptotic invalidity.

\begin{figure*}[t!]
\centering
\hspace{0.01\linewidth}
    \begin{subfigure}{\subfigfracinone\linewidth}
        \includegraphics[width=\imgfrac\linewidth]{figures/rates_STLasso_det_sing.pdf}
    \end{subfigure}\hspace{\imgspaceone\linewidth}%
     \begin{subfigure}{\subfigfracinone\linewidth}
             \includegraphics[width=\imgfrac\linewidth]{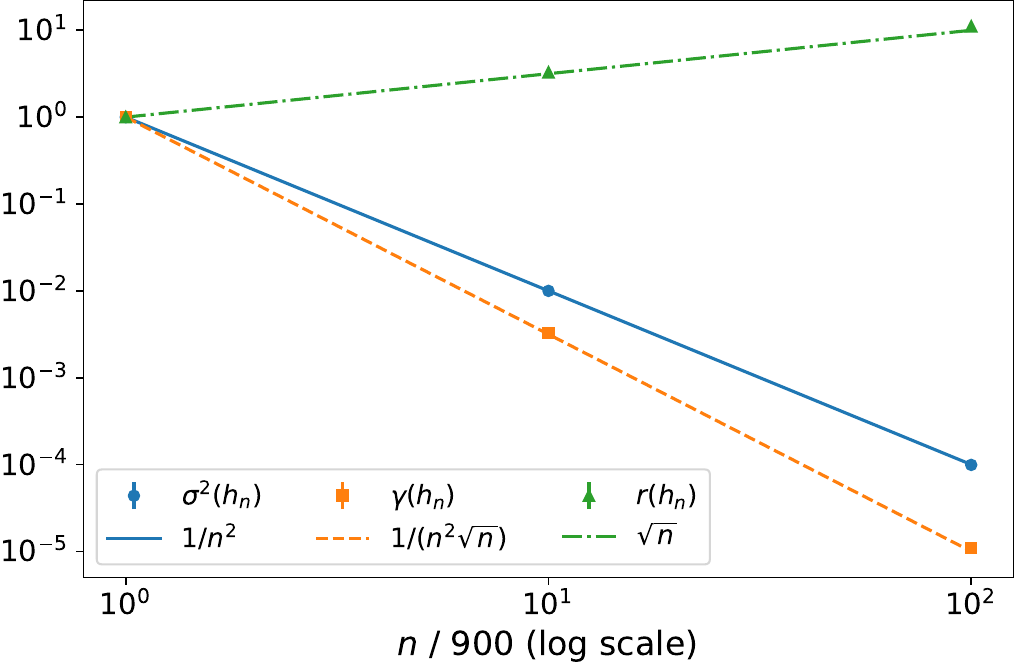}
    \end{subfigure}

    \vspace{\vgap\linewidth}
    \begin{subfigure}{\subfigfracintwo\linewidth}
        \includegraphics[width=\imgfrac\linewidth]{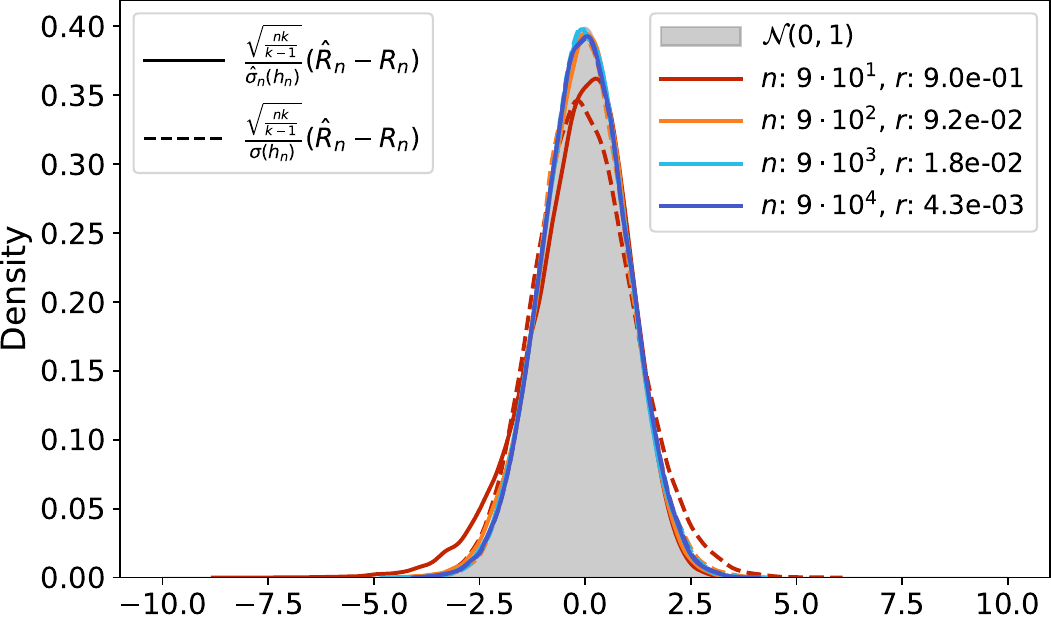}
    \caption{Relatively stable single algorithm ($\hsing$)}
    \end{subfigure}\hspace{\imgspacetwo\linewidth}%
    \begin{subfigure}{\subfigfracintwo\linewidth}
        \includegraphics[width=\imgfrac\linewidth]{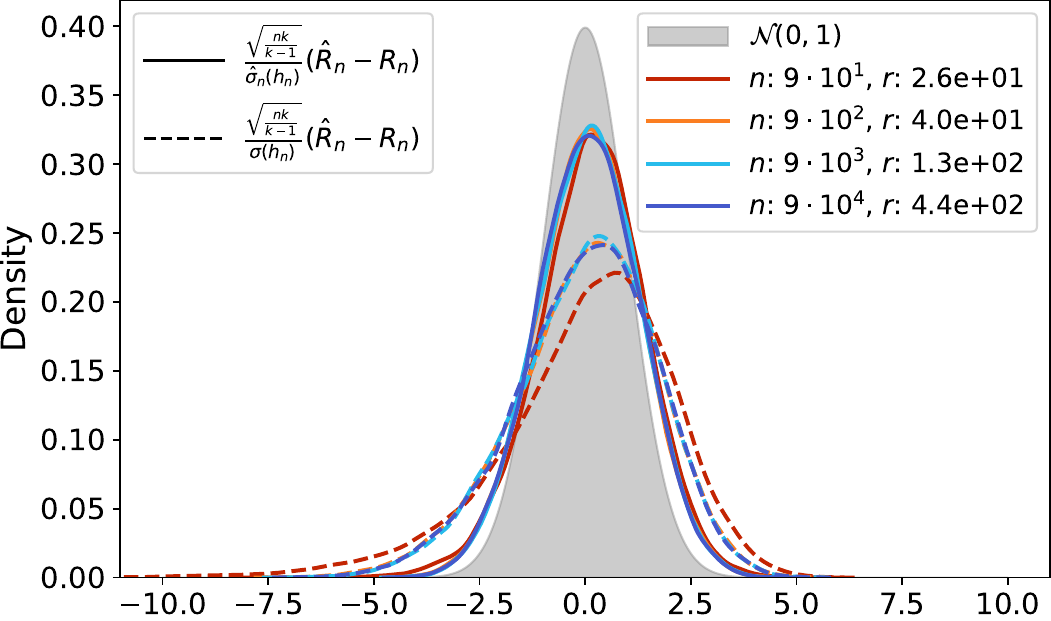}
    \caption{Relatively unstable comparison ($\hdiff$)}
    \end{subfigure}
    \caption{
    Lasso with cross-validated $\lambda_n$ when $\betatrue = (3, 1, -5, 3, 0, 0, 0, 0, 0, 0)$. \textbf{Top:} $\sigma^2(h_n)$, $\gamma(h_n)$ and $\rstab(h_n)$ all normalized by their values at $n = 900$.
    \textbf{Bottom:} (best viewed in color) KDE plots for $\frac{\sqrt{\frac{n k}{k-1}}}{\hat\sigma_n(h_n)} (\Rhat - \Rcondcv)$ (solid curves) and $\frac{\sqrt{\frac{n k}{k-1}}}{\sigma(h_n)} (\Rhat - \Rcondcv)$ (dashed curves).
    }
    \label{fig:Lasso-CV-lambda}
\vspace{-5mm}
\end{figure*}

Next, in light of our analogous theoretical results for the Lasso, we provide analogous simulations for the Lasso as well, though we make them even more realistic by
choosing $\lambda_n$ via inner cross-validation.
In particular, we ran simulations for the Lasso with $\lambda_n$ selected via an inner CV (see \cref{sec:details-num-exp}) for each of the $k$ iterations of the CV run, still with constant order $\delta_n = 1$ for the comparison. As mentioned in \cref{importance-rel-stab}, we actually observed in simulations that the values selected for $\lambda_n$ are concentrated around a constant times $\sqrt{n}$. The results for this setting are displayed in \cref{fig:Lasso-CV-lambda} and confirm that the same conclusions hold empirically for the cross-validated Lasso as for ST.

We note that the dichotomy exhibited by ST and Lasso is not universal: there are instances when an algorithm satisfies the relative loss stability condition both in its individual form and in the comparison setting. One example of this is ridge regression and we present the corresponding simulations in \cref{fig:Ridge-det-lambda} in \cref{app:addtional-experiments}.
\citet{bousquet-elisseeff:2002} proved that ridge regression with penalty parameter $\lam_n = \sqrt{n}$ and bounded targets has $O(1/n)$ uniform stability. This means it has $O(1/n^2)$ loss stability by \citet[Lem. 1]{SK-RK-SV:2011} and \citet[Lem. 2]{RK-ea:2013}. In the simulations, we see that for individual ridge with isotropic features and no boundedness assumption, loss stability scales as $1/n^2$ and the relative loss stability condition therefore holds since $\sigma^2(\hsing)$ is of constant order. Meanwhile, in the ridge comparison setting, loss stability scales as $1/n^4$, which, when compared to the observed $1/n^2$ rate of $\sigma^2(\hdiff)$, means the relative loss stability condition also holds for ridge comparisons.

As a matter of fact, when $\betatrue$ has no zero coefficients, the ST estimator can also be an example of an algorithm which satisfies the relative loss stability condition in both its individual form and in the comparison setting. The theory sheds light on the importance of the zero coefficients in the true parameter vector. When $\betatrue$ has no zero coefficients, $\ie$ $\|\betatrue\|_0 = p$, ST actually becomes stable for the algorithm comparison setting. The results of the simulations for this setting, with the choice $\betatrue = (3, 1, -5, 3, 4, -3, 10, 8, 5, 2)$, are presented in \cref{fig:STLasso-det-lambda-no-sparsity}  in \cref{app:addtional-experiments} and show how the convergence rate of $\gamma(\hdiff)$ changes compared to the $\|\betatrue\|_0 < p$ setting. It now scales as $1/n^4$, which means that ST satisfies the relative loss stability condition $\rstab(\hdiff) = o(1)$ in the comparison setting, since $\frac{n^2}{\delta_n^2} \, \sigma^2(\hdiff)$ still goes to $4 \tau^2 \|\betatrue\|_0$ when $\|\betatrue\|_0 = p$. Nonetheless, we reiterate that even a single zero coefficient in $\betatrue$ leads to instability for ST and, more generally, for the Lasso in the comparison setting.

\section{Conclusions and Future Work}
\label{sec:discussion}

Cross-validation is a powerful tool, but given its widespread use for comparing and selecting models, scrutiny of its statistical properties is critical for safe model deployment. 
This work highlights the importance of relative stability for CV and the challenges posed by relative instability for model comparison.
In particular, we proved that even simple, absolutely-stable learning algorithms %
can generate relatively unstable comparisons. 
In practice, this led to invalid and highly misleading confidence intervals for the test error difference with %
$\sigma^2(\hdiff)$ being well below the targeted variance of $\sqrt{\frac{n k}{k-1}} (\Rhat - \Rcondcv)$. 
Since CV is often used to conduct formal hypothesis tests for an improvement in test error between two  learning algorithms \cite{dietterich:1998,lim-etal:2000,nadeau-bengio:2003,bouckaert-frank:2004,demsar:2006,BBJM:2020}, our work shows that such tests can be misleading even for simple, absolutely stable algorithms and that method developers and consumers should first verify the relative stability of a comparison before applying them.

This paper uses ST and the Lasso to illustrate the dichotomy between algorithm evaluation and comparison when using CV for uncertainty quantification. While it is true that we expect this dichotomy to extend to other ML algorithms as well, we do not attempt to make any claims concerning other ML algorithms in this work. Importantly, we did not aim to show that the CV central limit theorem \cref{eq:clt} is always a poor choice for algorithm comparison. Indeed, \cref{sec:num-exp} presented examples (ST with fully dense $\beta^*$ and ridge regression) in which relative comparisons \emph{are} stable. That said, what we have shown is that even a simple ML algorithm, in the linear model setting, applied to very well-behaved data, can fail to satisfy relative stability in the comparison setting, which we hope is enough to at least convince users of CV that they should not expect by default that relative stability holds when comparing two algorithms (even if they are individually stable). This we believe is an important and practical realization that was previously unknown. Our main goal is to increase awareness of the pitfalls of CV, highlighting how simple it is for it to be misleading, especially if not studied through the proper lens of relative stability.

However, this work is not without its limitations. First, our analyses are fairly specific to our particular data distribution and the Lasso and ST models.
Establishing broad, easily verified conditions under which an algorithm comparison is or is not relatively stable is an important direction for future work. 
Second, while we prove the relative instability of ST and Lasso comparisons and demonstrate the invalidity of their CV confidence intervals, we leave open the question of whether relative instability always implies CV invalidity.
The focus of this work is on exposing a surprising failure mode of the commonly used CV procedure and not on identifying the best inference procedure for test error. This is why our experiments, designed principally to corroborate our theory, focused on CV intervals alone. While we have shown that the CV central limit theorem \cref{eq:clt} and hence the CV confidence interval construction of \citet{BBJM:2020} can break down in the presence of relatively unstable comparisons, our next result presents an alternative (possibly conservative) CI construction that yields asymptotic validity even if the comparison is not relatively stable. Specifically, it yields validity whenever each algorithm is individually relatively stable or, more generally, whenever one can construct a valid interval separately for each algorithm’s test loss.

\begin{proposition}[Comparison coverage from single algorithm coverage]\label{prop: comparison_coverage}
Let $\Rhat^{(1)}, \Rcondcv^{(1)}$ be the cross-validation error and test error of algorithm $\mathcal{A}_1$, and $\Rhat^{(2)}, \Rcondcv^{(2)}$ those of algorithm $\mathcal{A}_2$. To compare $\mathcal{A}_1$ and $\mathcal{A}_2$, 
if $[L_n^{(1)},U_n^{(1)}]$ and $[L_n^{(2)},U_n^{(2)}]$ are asymptotic $(1-\alpha/2)$-coverage confidence intervals for $R_n^{(1)}$ and $R_n^{(2)}$, respectively, then 
\[ [L_n^{(1)}-U_n^{(2)}, U_n^{(1)}-L_n^{(2)}] \]
will asymptotically cover $\Rcondcv^{(1)}-\Rcondcv^{(2)}$ with probability at least $1-\alpha$.
\end{proposition}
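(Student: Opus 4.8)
The plan is to reduce the proposition to a deterministic set-containment statement, combine it with a union bound, and then pass to the limit; the key point is that no joint behavior of the two algorithms is needed, only individual coverage of each interval. First I would record the algebraic containment. Fix $n$ and suppose both individual intervals cover their targets, i.e., $L_n^{(1)} \le R_n^{(1)} \le U_n^{(1)}$ and $L_n^{(2)} \le R_n^{(2)} \le U_n^{(2)}$. Negating the second chain and adding it to the first gives
\[
L_n^{(1)} - U_n^{(2)} \;\le\; R_n^{(1)} - R_n^{(2)} \;\le\; U_n^{(1)} - L_n^{(2)},
\]
which says exactly that $\Rcondcv^{(1)} - \Rcondcv^{(2)} = R_n^{(1)} - R_n^{(2)}$ lies in $[L_n^{(1)} - U_n^{(2)}, U_n^{(1)} - L_n^{(2)}]$. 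Writing $A_n \defeq \{L_n^{(1)} \le R_n^{(1)} \le U_n^{(1)}\}$, $B_n \defeq \{L_n^{(2)} \le R_n^{(2)} \le U_n^{(2)}\}$, and $C_n$ for the coverage event of the difference interval, this establishes $A_n \cap B_n \subseteq C_n$ deterministically.

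Next I would lower bound $\P(C_n)$ and take a limit. Monotonicity of probability together with the union bound gives, for every $n$,
\[
\P(C_n) \;\ge\; \P(A_n \cap B_n) \;\ge\; 1 - \P(A_n^c) - \P(B_n^c).
\]
Interpreting the asymptotic $(1-\alpha/2)$-coverage hypotheses as $\liminf_n \P(A_n) \ge 1 - \alpha/2$ and $\liminf_n \P(B_n) \ge 1 - \alpha/2$, these are equivalent to $\limsup_n \P(A_n^c) \le \alpha/2$ and $\limsup_n \P(B_n^c) \le \alpha/2$. Feeding these into the previous display yields $\liminf_n \P(C_n) \ge 1 - \alpha/2 - \alpha/2 = 1 - \alpha$, which is the claimed asymptotic coverage.

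There is no genuine obstacle; the result is a Bonferroni-style guarantee whose three steps are all routine. The only point deserving care is the asymptotic bookkeeping in the last step, where one must bound $\liminf_n(1 - \P(A_n^c) - \P(B_n^c))$ from below by $1 - \limsup_n \P(A_n^c) - \limsup_n \P(B_n^c)$ so that the two $\alpha/2$ budgets add to exactly $\alpha$. It is worth emphasizing what makes the construction useful: since $A_n \cap B_n \subseteq C_n$ holds pointwise regardless of any dependence between the two algorithms, the argument never invokes relative stability of the comparison, which is precisely why this conservative interval stays valid in the relatively unstable regimes exhibited by \cref{rel-instab-comp-stlasso}.
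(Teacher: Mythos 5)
Your proposal is correct and follows essentially the same route as the paper's proof: the deterministic containment $A_n \cap B_n \subseteq C_n$, a union bound over the two non-coverage events, and the $\limsup$ subadditivity bookkeeping are precisely the (implicit) steps in the paper's chain of inequalities. Your write-up merely makes the set-containment step and the dependence-free nature of the bound explicit, which the paper leaves implicit.
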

\begin{proof}
   \begin{talign}
&\liminf_{n\rightarrow\infty}\P(\Rcondcv^{(1)}-\Rcondcv^{(2)} \in [L_n^{(1)}-U_n^{(2)}, U_n^{(1)}-L_n^{(2)}]) \nonumber\\
&\geq 1-\limsup_{n\rightarrow\infty}\P(\Rcondcv^{(1)}  \notin [L_n^{(1)},U_n^{(1)}] \nonumber\\
&\hspace{32mm}\text{ or } \Rcondcv^{(2)} \notin [L_n^{(2)},U_n^{(2)}]) \nonumber\\
&\geq 1-\limsup_{n\rightarrow\infty}\P(\Rcondcv^{(1)}  \notin [L_n^{(1)},U_n^{(1)}]) \nonumber\\
&\hspace{7mm} -\limsup_{n\rightarrow\infty}\P(\Rcondcv^{(2)} \notin [L_n^{(2)},U_n^{(2)}]) \nonumber\\
&\geq 1-\alpha/2-\alpha/2 = 1-\alpha. \nonumber
\end{talign}
\end{proof}
The \cref{prop: comparison_coverage} approach ensures valid asymptotic coverage under individual algorithm stability without requiring any additional stability assumption on the comparison.
However, the interval could also be significantly wider than the interval derived from \citet{BBJM:2020}, due to strong positive correlations between $\hat{R}_n^{(1)}$ and $\hat{R}_n^{(2)}$ ignored in the construction of \cref{prop: comparison_coverage}. \cref{fig:STLasso-CVCLT-Prop-6-1-coverage-width} in \cref{app:addtional-experiments} illustrates this behavior in the setting used for \cref{fig:STLasso-det-lambda}.
An open question is whether one can derive tighter confidence intervals for algorithm comparisons when it is only known that each algorithm is individually stable. 

\section*{Impact Statement}
By highlighting a surprising failure mode of a commonly used procedure for quantifying confidence in the difference between learning algorithms, this paper's potential broader impact is to reduce the overinterpretation of small empirical CV differences between two learning algorithms, helping to more rigorously distinguish legitimate improvements from inconsequential changes.

\bibliography{references}
\bibliographystyle{icml2026}

\newpage
\appendix
\onecolumn

     \etoctocstyle{1}{Appendix Contents}
    \etocdepthtag.toc{mtappendix}
    \etocsettagdepth{mtchapter}{none}
    \etocsettagdepth{mtappendix}{section}
    \etocsettagdepth{mtappendix}{subsection}
    \etocsettagdepth{mtappendix}{subsubsection}
    {\small\tableofcontents}

\section{Additional Notation}
Let $\toas$ denote almost sure convergence.
Let $\indic{A}$ denote the indicator function of a subset $A$.
We will denote by $\Phi$ the cumulative distribution function of the standard Normal and by $\varphi$ its probability density function.
We define the sign function as $\text{sign}(x) = \frac{x}{|x|} \indic{x \neq 0}$ and the positive part as $x_+ = \max(x, 0)$.
We write $\mathbf{M} \sim W_p^{-1}(\Sigma, n)$ to indicate $\mathbf{M}$ follows the inverse-Wishart distribution with $n$ degrees of freedom and scale matrix $\Sigma \in \R^{p \times p}$.

\section{Experimental Details for Figure~\protect\ref{fig:STLasso-det-lambda-coverage-prob}}\label{sec:experiment-details-coverage-prob-fig}

The experimental setup for \cref{fig:STLasso-det-lambda-coverage-prob} is very similar to the one described in the first paragraph of \cref{sec:num-exp}. We consider the Lasso estimator here, with $\lambda_n = \sqrt{n}$ for the base level of penalization, and when comparing algorithms, $\delta_n = 1$ for the difference in the penalization parameters, where $\betatrue = (3, 1, -5, 3, 0, 0, 0, 0, 0, 0)$. For the largest sample size under consideration, we are plotting the actual coverage probability of the confidence interval $\Rhat \pm q_{1-\alpha/2} \, \sigma(h_n) / \sqrt{\frac{n k}{k-1}}$, over the full range $\alpha \in [0, 1]$, where $q_{1-\alpha/2}$ is the $(1 - \alpha/2)$-quantile of the standard normal distribution, built from the CV central limit theorem of \citet{BBJM:2020} using the true variance $\sigma^2(h_n)$, against the target coverage, in the single algorithm setting and the comparison setting.

\section{\texorpdfstring{\pcref{lasso-st-close}}{Proof of Lemma~\protect\ref{lasso-st-close}}}\label{sec:proof-lasso-st-close}

Define the objective functions
\begin{talign}
f_1(\beta) 
    \defeq 
\half \twonorms{(\X^\top \X)^{-1}\X^\top \Y - \beta}^2 + \frac{\lambda_n}{n} \onenorms{\beta} 
    \qtext{and}
f_2(\beta) 
    \defeq 
\frac{1}{2n} \twonorms{\Y - \X\beta}^2 + \frac{\lambda_n}{n} \onenorms{\beta}
\end{talign}
so that $\st \in \argmin_{\beta\in\reals^p}f_1(\beta)$ and $\lasso \in \argmin_{\beta\in\reals^p}f_2(\beta)$.
For any $\beta\in\reals^p$ we have  
\begin{talign}
\nabla f_2(\beta) - \nabla f_1(\beta) 
    &= \frac{1}{n} \X^\top \X \beta - \frac{1}{n} \X^\top \Y - (\beta - (\X^\top \X)^{-1}\X^\top \Y) \\
    &= (\ident - (\X^\top \X/n)^{-1}) (\X^\top  \X \beta - \X^\top  \Y)/n \\
    &= (\X^\top \X/n - \ident) (\beta - (\X^\top \X)^{-1}\X^\top \Y) 
    = (\X^\top \X/n - \ident) (\beta - \ols).
\end{talign}
Moreover, by the definitions of the operator norm and $\st$ (\cref{def:st}), 
\begin{talign}
\twonorms{\nabla f_2(\st) - \nabla f_1(\st)} 
    &= 
\twonorms{(\X^\top \X/n - \ident) (\st - \ols)} \\
    &\leq 
\opnorms{\X^\top \X/n - \ident} \twonorms{\st-\ols} \leq \opnorms{\X^\top \X/n - I} \sqrt{p}\lambda_n/n.   
\end{talign}
Finally, since $f_2$ is $\mu_n$ strongly convex, the optimizer comparison lemma of \citep[Lem.~1]{wilson-etal:2020} implies that $\mu_n \twonorms{\st - \lasso}^2 \leq \twonorms{\st - \lasso} \twonorms{\nabla f_2(\st) - \nabla f_1(\st)}$, yielding the first result.

Now fix any $q\in \naturals$, and suppose $X_i\distiid \N(0,\ident)$ and $(n-p+1)/2 > 2q$. Then $V = (\X^\top \X)^{-1}$ has an inverse-Wishart distribution with $n$ degrees of freedom, and each diagonal entry $V_{jj}$ has an inverse-gamma distribution with shape $=\frac{n-p+1}{2}$ and scale $=\half$  \citep[Cor.~1]{bodnar2016singular}.
Therefore, by Jensen's inequality and the moment formula for an inverse-gamma,
\begin{talign}
\E[1/\mu_n^{2q}] 
    &= 
\E[\maxeig{nV}^{2q}] 
    \leq
\E[\tr(nV)^{2q}]
    \leq
\E[p^{2q-1}\sum_{j=1}^p (nV_{jj})^{2q}]
    =
(np)^{2q}\E[V_{11}^{2q}] \\
    &=
(\frac{np}{2})^{2q}\frac{\Gamma(\frac{n-p+1}{2}-2q)}{\Gamma(\frac{n-p+1}{2})} 
    \leq
(\frac{n}{n-p+1-4q})^{2q} p^{2q}
    =
O(1).
\end{talign}

Next let $W = \X^\top\X$ so that each entry $W_{jk} = \sum_{i=1}^n X_{ij}X_{ik}$. 
Then, we may apply Jensen's inequality, the Marcinkiewicz-Zygmund \citep[Thm.~2.1]{rio2009moment} inequality, Jensen's inequality again, and finally the moment formula for a Gaussian random variable to find that
\begin{talign}
\E[\opnorms{\X^\top \X/n - \ident}^{2q}] 
    &= 
\E[\fronorms{\X^\top \X/n - \ident}^{2q}] 
    \leq
\E[\frac{p^{2q-2}}{n^{2q}}\sum_{j=1}^p\sum_{k=1}^p |W_{jk}-\E[W_{jk}]|^{2q}] \\
    &\leq
\frac{p^{2q-2}(2q-1)^{q}n^{q}}{n^{2q}}
(p\E[|X_{11}^2-1|^{2q}] + (p^2-p) \E[|X_{11}X_{12}|^{2q}]) \\
    &\leq
\frac{p^{2q-2}(2q-1)^{q}}{n^{q}}
(p2^{2q-1}(1+\E[|X_{11}|^{4q}])
+ (p^2-p) \E[|X_{11}X_{12}|^{2q}]) \\
    &=
\frac{p^{2q-2}(2q-1)^{q}}{n^{q}}
(p2^{2q-1} (1+(4q-1)!!)
+
(p^2-p)((2q-1)!!)^2)
    =
O(1/n^{q}).
\end{talign}
The second advertised result now follows by Cauchy--Schwarz.

\section{\texorpdfstring{\pcref{rel-instab-comp-stlasso}}{Proof of Theorem~\protect\ref{rel-instab-comp-stlasso}}}
\label{sec:proof-rel-instab-comp-stlasso}

\cref{rel-instab-comp-stlasso} follows immediately from the following two propositions, proved in \cref{sec:rate-sigma-comp-stlasso,sec:rate-lstab-comp-stlasso}, respectively. Note that the first proposition holds for $\lambda_n = o(n)$ and $\delta_n = o(n)$, and does not require the assumption $\|\betatrue\|_0 < p$, which makes this proposition a stronger result than what is needed for the proof of \cref{rel-instab-comp-stlasso} assuming $\lambda_n = O(\sqrt{n})$, $\lambda_n = \omega(1)$, $\delta_n = \Theta(1)$ and $\|\betatrue\|_0 < p$.

\begin{proposition}[Convergence rate of $\sigma^2(\hdiff)$ for comparison of \ST with ST$(\lambda_n + \delta_n)$]
\label{prop-rate-sigma-comp-stlasso}
Assume the linear model \cref{eq:linear-model}.
If $\lambda_n = o(n)$ and $\delta_n = o(n)$, then $\frac{n^2}{\delta_n^2} \, \sigma^2(\hdiff) \rightarrow 4\tau^2 \|\beta\|_0$.
\end{proposition}

\begin{proposition}[Lower-bounding rate of $\gamma(\hdiff)$ for comparison of \ST with ST$(\lambda_n + \delta_n)$]
\label{prop-rate-lstab-comp-stlasso}
Assume the linear model \cref{eq:linear-model},
and $\|\betatrue\|_0 < p$.
If $\lambda_n = O(\sqrt{n})$, $\lambda_n = \omega(1)$, and $\delta_n = \Theta(1)$, then $\gamma(\hdiff) =  \Omega(\frac{\delta_n^2}{n^2 \sqrt{n}})$.
\end{proposition}

\section{\texorpdfstring{\pcref{prop-rate-sigma-comp-stlasso}}{Proof of Proposition~\protect\ref{prop-rate-sigma-comp-stlasso}}}
\label{sec:rate-sigma-comp-stlasso}

We start by introducing a lemma which provides key equations in the comparison setting.

\begin{lemma}[Useful equations for comparison of two linear predictors]
\label{eqs-comp}
When defining $h_n(Z_0, \mathbf{Z}) = (Y_0 - X_0^\top \hat\beta^{(1)})^2 - (Y_0 - X_0^\top \hat\beta^{(2)})^2$, we have:
\begin{talign}
h_n(Z_0, \mathbf{Z})
&= 2 Y_0 X_0^\top (\hat\beta^{(2)} - \hat\beta^{(1)}) + \tr(X_0 X_0^\top (\hat\beta^{(1)} \hat\beta^{(1)\top} - \hat\beta^{(2)} \hat\beta^{(2)\top}))\\
\E[h_n(Z_0, \mathbf{Z}) \mid Z_0]
&= 2 Y_0 X_0^\top \E[\hat\beta^{(2)} - \hat\beta^{(1)}] + \tr(X_0 X_0^\top \E[\hat\beta^{(1)} \hat\beta^{(1)\top} - \hat\beta^{(2)} \hat\beta^{(2)\top}])\\
\E[h_n(Z_0, \mathbf{Z}) \mid \mathbf{Z}]
&= 2 \betatrueT \E[X_0 X_0^\top] (\hat\beta^{(2)} - \hat\beta^{(1)}) + \tr(\E[X_0 X_0^\top] (\hat\beta^{(1)} \hat\beta^{(1)\top} - \hat\beta^{(2)} \hat\beta^{(2)\top}))\\
\E[h_n(Z_0, \mathbf{Z})]
&= 2 \betatrueT \E[X_0 X_0^\top] \E[\hat\beta^{(2)} - \hat\beta^{(1)}] + \tr(\E[X_0 X_0^\top] \E[\hat\beta^{(1)} \hat\beta^{(1)\top} - \hat\beta^{(2)} \hat\beta^{(2)\top}])\\
\sigma^2(h_n)
&= \E[(2 (Y_0 X_0^\top - \betatrueT \E[X_0 X_0^\top]) \E[\hat\beta^{(2)} - \hat\beta^{(1)}]\\
&\hspace{6mm}+ \tr((X_0 X_0^\top - \E[X_0 X_0^\top]) \E[\hat\beta^{(1)} \hat\beta^{(1)\top} - \hat\beta^{(2)} \hat\beta^{(2)\top}]))^2]\\
\gamma(h_n)
&= \E[(2 (Y_0 X_0^\top - \betatrueT \E[X_0 X_0^\top]) (\hat\beta^{(2)} - \hat\beta^{(1)} - (\hat\beta'^{(2)} - \hat\beta'^{(1)}))\\
&\hspace{6mm}+ \tr((X_0 X_0^\top - \E[X_0 X_0^\top]) (\hat\beta^{(1)} \hat\beta^{(1)\top} - \hat\beta^{(2)} \hat\beta^{(2)\top} - (\hat\beta^{\prime(1)} \hat\beta^{\prime(1)\top} - \hat\beta^{\prime(2)} \hat\beta^{\prime(2)\top}))))^2]
\end{talign}
where $\hat\beta'^{(1)}$ and $\hat\beta'^{(2)}$ are the linear predictor counterparts of $\hat\beta^{(1)}$ and $\hat\beta^{(2)}$, but learned on a training set $\mathbf{Z'}$ that is the same as $\mathbf{Z}$ except for the first point $Z_1$ being replaced by an i.i.d copy $Z'_1$.
\end{lemma}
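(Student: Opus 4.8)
The plan is to mirror the derivation of \cref{eqs-single-algo} for a single linear predictor, exploiting the fact that $h_n^{\mrm{diff}}$ is just a difference of two single-predictor losses and that conditional expectation and variance act linearly or reduce to polynomial identities in $\hat\beta^{(1)}, \hat\beta^{(2)}, X_0, Y_0$. First I would expand each squared term exactly as in the first line of \cref{eqs-single-algo}, writing $(Y_0 - X_0^\top \hat\beta^{(k)})^2 = Y_0^2 - 2 Y_0 X_0^\top \hat\beta^{(k)} + \tr(X_0 X_0^\top \hat\beta^{(k)} \hat\beta^{(k)\top})$ for $k \in \{1,2\}$, using $(X_0^\top \hat\beta^{(k)})^2 = X_0^\top \hat\beta^{(k)} \hat\beta^{(k)\top} X_0 = \tr(X_0 X_0^\top \hat\beta^{(k)} \hat\beta^{(k)\top})$. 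Subtracting the two, the $Y_0^2$ terms cancel, leaving the first displayed identity; this cancellation is exactly what makes the difference loss cleaner than an individual loss.

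Next I would compute the conditional and marginal expectations. Since $\hat\beta^{(1)}$ and $\hat\beta^{(2)}$ are functions of the training set $\mathbf{Z}$ alone and $Z_0$ is independent of $\mathbf{Z}$, conditioning on $Z_0$ freezes $Y_0, X_0$ and replaces $\hat\beta^{(2)} - \hat\beta^{(1)}$ and $\hat\beta^{(1)}\hat\beta^{(1)\top} - \hat\beta^{(2)}\hat\beta^{(2)\top}$ by their unconditional expectations, giving the second identity; conditioning on $\mathbf{Z}$ instead freezes the estimators and replaces the $Z_0$-dependent factors $Y_0 X_0^\top$ and $X_0 X_0^\top$ by $\E[Y_0 X_0^\top]$ and $\E[X_0 X_0^\top]$. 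Here I would reuse the moment identity $\E[Y_0 X_0^\top] = \betatrueT \E[X_0 X_0^\top]$ already established in the proof of \cref{eqs-single-algo}, which yields the third identity, and taking a further expectation (or iterating the tower property) yields the fourth.

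For $\sigma^2(h_n^{\mrm{diff}})$ I would use $\sigma^2(h_n) = \Var(\E[h_n \mid Z_0]) = \E[(\E[h_n \mid Z_0] - \E[h_n])^2]$ and subtract the second and fourth identities; the non-random factors $\E[\hat\beta^{(2)} - \hat\beta^{(1)}]$ and $\E[\hat\beta^{(1)}\hat\beta^{(1)\top} - \hat\beta^{(2)}\hat\beta^{(2)\top}]$ pull out while the $Z_0$-dependent parts combine into $Y_0 X_0^\top - \betatrueT \E[X_0 X_0^\top]$ and $X_0 X_0^\top - \E[X_0 X_0^\top]$, producing the fifth identity. For $\gamma(h_n^{\mrm{diff}})$ I would start from its definition, first forming $h_n(Z_0, \mathbf{Z}) - \E[h_n(Z_0, \mathbf{Z}) \mid \mathbf{Z}]$ by subtracting the first and third identities (which again produces the centered factors $Y_0 X_0^\top - \betatrueT \E[X_0 X_0^\top]$ and $X_0 X_0^\top - \E[X_0 X_0^\top]$), and then differencing the $\mathbf{Z}$ and $\mathbf{Z'}$ versions. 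Because these centered $Z_0$-dependent factors are shared across both copies, they factor out and the estimator terms collapse to $(\hat\beta^{(2)} - \hat\beta^{(1)}) - (\hat\beta'^{(2)} - \hat\beta'^{(1)})$ together with the analogous outer-product difference, giving the final identity.

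I do not expect a substantive obstacle here: the lemma is essentially bookkeeping, and the only real care is tracking signs through the two rank-one outer-product terms and confirming the cancellations (of $Y_0^2$ in the loss, and of the uncentered pieces when forming $\sigma^2$ and $\gamma$). The one point worth a sentence of justification is the interchange of expectation with the conditioning operations, which is licensed by the independence of $Z_0$ from $\mathbf{Z}$ (and from $\mathbf{Z'}$) together with the measurability of each $\hat\beta^{(k)}$ with respect to its own training set.
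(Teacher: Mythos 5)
Your proposal is correct and takes essentially the same route as the paper, whose proof of \cref{eqs-comp} consists of exactly this observation: the first identity follows by differencing the expansion in \cref{eqs-single-algo} (the $Y_0^2$ terms cancel), and the rest follow by the same arguments used there, namely independence of $Z_0$ from the training set, the moment identity $\E[Y_0 X_0^\top] = \betatrueT \E[X_0 X_0^\top]$, the representation $\sigma^2(h_n) = \E[(\E[h_n \mid Z_0] - \E[h_n])^2]$, and the definition of $\gamma(h_n)$. Your write-up merely makes explicit the bookkeeping that the paper leaves implicit.
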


\begin{proof}
The first equation follows from the first equation of \cref{eqs-single-algo}. The remaining equations are then derived from there using the same arguments as those mentioned in \cref{eqs-single-algo}.
\end{proof}

We will show that
\[
\frac{n}{\delta_n} \E[\hat\beta_{\lambda_n + \delta_n} - \hat\beta_{\lambda_n}] \rightarrow - \text{sign}(\betatrue)
\]
and
\[
\frac{n}{\delta_n} \E[\hat\beta_{\lambda_n + \delta_n} \hat\beta_{\lambda_n + \delta_n}^\top - \hat\beta_{\lambda_n} \hat\beta_{\lambda_n}^\top] \rightarrow - (\text{sign}(\betatrue) \betatrueT + \betatrue\text{sign}(\betatrue)^\top)
\]
where $\text{sign}(\betatrue) = (\text{sign}(\betatrue_i))_{i \in [p]}$, in order to conclude that $\frac{n^2}{\delta_n^2} \, \sigma^2(\hdiff) \rightarrow 4\tau^2 \|\betatrue\|_0$.

Indeed, if the convergences of these two expectations hold, starting from the expression of $\sigma^2(h_n)$ in \cref{eqs-comp}, since $\frac{n}{\delta_n} \E[\hat\beta_{\lambda_n + \delta_n} - \hat\beta_{\lambda_n}]$ and $\frac{n}{\delta_n} \E[\hat\beta_{\lambda_n + \delta_n} \hat\beta_{\lambda_n + \delta_n}^\top - \hat\beta_{\lambda_n} \hat\beta_{\lambda_n}^\top]$ are non-random, we can expand the square, use linearity of expectation, take the limits and factorize back to obtain the following convergence
\[
\frac{n^2}{\delta_n^2} \sigma^2(\hdiff) \rightarrow \E[(2 (Y_0 X_0^\top - \betatrueT \E[X_0 X_0^\top]) (-\text{sign}(\betatrue)) + \tr((X_0 X_0^\top - \E[X_0 X_0^\top]) (\text{sign}(\betatrue) \betatrueT + \betatrue\text{sign}(\betatrue)^\top)))^2]
\]
where, for $Y_0 = X_0^\top \betatrue + \varepsilon_0$ with $\E[X_0] = 0$ and $\Var(X_0) = \mathbf{I}$,
\begin{talign}
&\E[(2 (Y_0 X_0^\top - \betatrueT \E[X_0 X_0^\top]) (-\text{sign}(\betatrue)) + \tr((X_0 X_0^\top - \E[X_0 X_0^\top]) (\text{sign}(\betatrue) \betatrueT + \betatrue\text{sign}(\betatrue)^\top)))^2]\\
&= \E[(- 2 Y_0 X_0^\top \text{sign}(\betatrue) + 2 \betatrueT \text{sign}(\betatrue) + 2 X_0^\top \betatrue X_0^\top \text{sign}(\betatrue) - 2\betatrueT \text{sign}(\betatrue)))^2]\\
&= \E[(- 2 \varepsilon_0 X_0^\top \text{sign}(\betatrue))^2]\\
&= 4\E[\varepsilon_0^2] \E[(X_0^\top \text{sign}(\betatrue))^2] \quad \text{by independence of } \varepsilon_0, X_0\\
&= 4 \tau^2 \|\betatrue\|_0
\end{talign}
since
\begin{talign}
\E[(X_0^\top \text{sign}(\betatrue))^2] = \Var(\text{sign}(\betatrue)^\top X_0) = \text{sign}(\betatrue)^\top \Var(X_0) \text{sign}(\betatrue) = \text{sign}(\betatrue)^\top \text{sign}(\betatrue) = \|\betatrue\|_0.
\end{talign}

We have for $i=1, \hdots, p,$
\begin{talign}
\hat\beta_{\lambda_n + \delta_n, i} - \hat\beta_{\lambda_n, i}
&= \text{sign}(\olsi) (|\olsi| - \frac{\lambda_n+\delta_n}{n})_{+} - \text{sign}(\olsi) (|\olsi| - \frac{\lambda_n}{n})_{+}\\
&= -\text{sign}(\olsi)
\begin{cases}
\frac{\delta_n}{n} & \text{if } |\olsi| > \frac{\lambda_n+\delta_n}{n}\\
|\olsi| - \frac{\lambda_n}{n} &  \text{if } |\olsi| \in [\frac{\lambda_n}{n}, \frac{\lambda_n+\delta_n}{n}]\\
0 & \text{if } |\olsi| < \frac{\lambda_n}{n}\\
\end{cases}.
\end{talign}

Since $\ols  \mid \mathbf{X} \sim \N(\betatrue, \tau^2 (\mathbf{X}^\top \mathbf{X})^{-1})$, we can write $\olsi = \betatrue_i + \tilde\tau_n Z$ where $\tilde\tau_n = \frac{\tau}{\sqrt{n}} \sqrt{(\frac{\mathbf{X}^\top \mathbf{X}}{n})_{i,i}^{-1}}$ and $Z \mid \mathbf{X} \sim \N(0, 1)$. Note that we could have $i$ as a subscript of $\tilde\tau_n$ and $Z$, but we will only consider one $i$ at a time in our computations and we can thus omit this subscript for both of them for the sake of notational simplicity, and we will also omit it for some additional notation we define in the rest of the proof.

We now show that $\frac{n}{\delta_n} \E[\hat\beta_{\lambda_n + \delta_n, i} - \hat\beta_{\lambda_n, i}] \rightarrow - \text{sign}(\betatrue_i)$.

Using the law of total expectation,
\begin{talign}
&\hspace{1mm}\E[\hat\beta_{\lambda_n + \delta_n, i} - \hat\beta_{\lambda_n, i} \mid \mathbf{X}]\\
&= - \frac{\delta_n}{n} \P(\olsi > \frac{\lambda_n+\delta_n}{n} \mid \mathbf{X}) + \frac{\delta_n}{n} \P(\olsi < -\frac{\lambda_n+\delta_n}{n} \mid \mathbf{X})\\
&\hspace{4mm} - \E[\olsi - \frac{\lambda_n}{n} \mid \olsi \in [\frac{\lambda_n}{n}, \frac{\lambda_n+\delta_n}{n}], \mathbf{X}] \, \P(\olsi \in [\frac{\lambda_n}{n}, \frac{\lambda_n+\delta_n}{n}] \mid \mathbf{X})\\
&\hspace{4mm} - \E[\olsi + \frac{\lambda_n}{n} \mid \olsi \in [-\frac{\lambda_n+\delta_n}{n}, -\frac{\lambda_n}{n}], \mathbf{X}] \, \P(\olsi \in [-\frac{\lambda_n+\delta_n}{n}, -\frac{\lambda_n}{n}] \mid \mathbf{X})
\end{talign}

Define $\alpha_n^{(1)} = \frac{1}{\tilde\tau_n} (\frac{\lambda_n}{n} - \betatrue_i)$, $\alpha_n^{(2)} = \frac{1}{\tilde\tau_n} (\frac{\lambda_n}{n} + \betatrue_i)$, $\theta_n^{(1)} = \frac{1}{\tilde\tau_n}(\frac{\lambda_n + \delta_n}{n} - \betatrue_i)$ and $\theta_n^{(2)} = \frac{1}{\tilde\tau_n}(\frac{\lambda_n + \delta_n}{n} + \betatrue_i)$.

In the order they appear, the four probabilities above are equal to
\[
\P(Z > \theta_n^{(1)} \mid \mathbf{X}) = 1 - \Phi(\theta_n^{(1)}),
\]
\[
\P(Z < -\theta_n^{(2)} \mid \mathbf{X}) = \Phi(-\theta_n^{(2)}),
\]
\[
\P(Z \in [\alpha_n^{(1)}, \theta_n^{(1)}] \mid \mathbf{X}) = \Phi(\theta_n^{(1)}) - \Phi(\alpha_n^{(1)}),
\]
\[
\P(Z \in [-\theta_n^{(2)}, -\alpha_n^{(2)}] \mid \mathbf{X}) = \Phi(-\alpha_n^{(2)}) - \Phi(-\theta_n^{(2)}).
\]

Using the first moment of the truncated normal
\citep{JohnsonKotzBalakrishnan1994}, we have
\begin{talign}
\E[\olsi - \frac{\lambda_n}{n} \mid \olsi \in [\frac{\lambda_n}{n}, \frac{\lambda_n+\delta_n}{n}], \mathbf{X}]
&= \betatrue_i - \frac{\lambda_n}{n} + \tilde\tau_n \, \E[Z \mid Z \in [\alpha_n^{(1)}, \theta_n^{(1)}], \mathbf{X}]\\
&= \betatrue_i - \frac{\lambda_n}{n} - \tilde\tau_n \frac{\varphi(\theta_n^{(1)}) - \varphi(\alpha_n^{(1)})}{\Phi(\theta_n^{(1)}) - \Phi(\alpha_n^{(1)})}
\end{talign}
and
\begin{talign}
\E[\olsi + \frac{\lambda_n}{n} \mid \olsi \in [-\frac{\lambda_n+\delta_n}{n}, -\frac{\lambda_n}{n}], \mathbf{X}]
&= \betatrue_i + \frac{\lambda_n}{n} + \tilde\tau_n \, \E[Z \mid Z \in [-\theta_n^{(2)}, -\alpha_n^{(2)}], \mathbf{X}]\\
&= \betatrue_i + \frac{\lambda_n}{n} - \tilde\tau_n \frac{\varphi(-\alpha_n^{(2)}) - \varphi(-\theta_n^{(2)})}{\Phi(-\alpha_n^{(2)}) - \Phi(-\theta_n^{(2)})}.
\end{talign}

Therefore
\begin{talign}
&\hspace{1mm}\E[\hat\beta_{\lambda_n + \delta_n, i} - \hat\beta_{\lambda_n, i} \mid \mathbf{X}]\\
&= - \frac{\delta_n}{n} \P(\olsi > \frac{\lambda_n+\delta_n}{n} \mid \mathbf{X}) + \frac{\delta_n}{n} \P(\olsi < -\frac{\lambda_n+\delta_n}{n} \mid \mathbf{X})\\
&\hspace{4mm} - \E[\olsi - \frac{\lambda_n}{n} \mid \olsi \in [\frac{\lambda_n}{n}, \frac{\lambda_n+\delta_n}{n}], \mathbf{X}] \, \P(\olsi \in [\frac{\lambda_n}{n}, \frac{\lambda_n+\delta_n}{n}] \mid \mathbf{X})\\
&\hspace{4mm} - \E[\olsi + \frac{\lambda_n}{n} \mid \olsi \in [-\frac{\lambda_n+\delta_n}{n}, -\frac{\lambda_n}{n}], \mathbf{X}] \, \P(\olsi \in [-\frac{\lambda_n+\delta_n}{n}, -\frac{\lambda_n}{n}] \mid \mathbf{X})\\
&= - \frac{\delta_n}{n} (1 - \Phi(\theta_n^{(1)})) + \frac{\delta_n}{n} \Phi(-\theta_n^{(2)})\\
&\hspace{4mm} - (\betatrue_i - \frac{\lambda_n}{n}) (\Phi(\theta_n^{(1)}) - \Phi(\alpha_n^{(1)})) + \tilde\tau_n (\varphi(\theta_n^{(1)}) - \varphi(\alpha_n^{(1)}))\\
&\hspace{4mm} - (\betatrue_i + \frac{\lambda_n}{n}) (\Phi(-\alpha_n^{(2)}) - \Phi(-\theta_n^{(2)})) + \tilde\tau_n (\varphi(-\alpha_n^{(2)}) - \varphi(-\theta_n^{(2)}))\\
&= - \frac{\delta_n}{n} (1 - \Phi(\theta_n^{(1)})) + \frac{\delta_n}{n} \Phi(-\theta_n^{(2)})\\
&\hspace{4mm} - (\betatrue_i - \frac{\lambda_n}{n}) (\theta_n^{(1)} - \alpha_n^{(1)}) \Phi'(c_n^{(1)}) + \tilde\tau_n (\theta_n^{(1)} - \alpha_n^{(1)}) \varphi'(d_n^{(1)})\\
&\hspace{4mm} - (\betatrue_i + \frac{\lambda_n}{n}) (\theta_n^{(2)} - \alpha_n^{(2)}) \Phi'(-c_n^{(2)}) + \tilde\tau_n (\theta_n^{(2)} - \alpha_n^{(2)}) \varphi'(-d_n^{(2)})
\end{talign}
where $c_n^{(1)}, d_n^{(1)} \in [\alpha_n^{(1)}, \theta_n^{(1)}]$ and $c_n^{(2)}, d_n^{(2)} \in [\alpha_n^{(2)}, \theta_n^{(2)}]$ using first-order Taylor expansions.

We have $\theta_n^{(1)} - \alpha_n^{(1)} = \theta_n^{(2)} - \alpha_n^{(2)} = \frac{1}{\tilde\tau_n} \frac{\delta_n}{n}$, $\Phi' = \varphi$ and $\varphi'(x) = - x \varphi(x)$, thus
\begin{talign}
\E[\hat\beta_{\lambda_n + \delta_n, i} - \hat\beta_{\lambda_n, i} \mid \mathbf{X}]
&= - \frac{\delta_n}{n} (1 - \Phi(\theta_n^{(1)})) + \frac{\delta_n}{n} \Phi(-\theta_n^{(2)})\\
&\hspace{4mm} - (\betatrue_i - \frac{\lambda_n}{n}) \frac{1}{\tilde\tau_n} \frac{\delta_n}{n} \varphi(c_n^{(1)}) - \tilde\tau_n \frac{1}{\tilde\tau_n} \frac{\delta_n}{n} d_n^{(1)} \varphi(d_n^{(1)})\\
&\hspace{4mm} - (\betatrue_i + \frac{\lambda_n}{n}) \frac{1}{\tilde\tau_n} \frac{\delta_n}{n} \varphi(-c_n^{(2)}) - \tilde\tau_n \frac{1}{\tilde\tau_n} \frac{\delta_n}{n} (-d_n^{(2)} \varphi(-d_n^{(2)}))\\
&= - \frac{\delta_n}{n} (1 - \Phi(\theta_n^{(1)})) + \frac{\delta_n}{n} \Phi(-\theta_n^{(2)})\\
&\hspace{4mm} - (\betatrue_i - \frac{\lambda_n}{n}) \frac{1}{\tilde\tau_n} \frac{\delta_n}{n} \varphi(c_n^{(1)}) - \frac{\delta_n}{n} d_n^{(1)} \varphi(d_n^{(1)})\\
&\hspace{4mm} - (\betatrue_i + \frac{\lambda_n}{n}) \frac{1}{\tilde\tau_n} \frac{\delta_n}{n} \varphi(-c_n^{(2)}) - \frac{\delta_n}{n} (-d_n^{(2)} \varphi(-d_n^{(2)}))\\
&= - \frac{\delta_n}{n} (1 - \Phi(\theta_n^{(1)})) + \frac{\delta_n}{n} \Phi(-\theta_n^{(2)})\\
&\hspace{4mm} + \frac{\delta_n}{n} \alpha_n^{(1)} \varphi(c_n^{(1)}) - \frac{\delta_n}{n} d_n^{(1)} \varphi(d_n^{(1)})\\
&\hspace{4mm} - \frac{\delta_n}{n} \alpha_n^{(2)} \varphi(-c_n^{(2)}) - \frac{\delta_n}{n} (-d_n^{(2)} \varphi(-d_n^{(2)})).
\end{talign}

We first consider $\betatrue_i > 0$.

Since $\lambda_n = o(n)$ and $\delta_n = o(n)$, for $n$ large enough, $\frac{\lambda_n + \delta_n}{n} < \betatrue_i$, so $\alpha_n^{(1)} \leq \theta_n^{(1)} < 0$, thus for $c_n^{(1)} \in [\alpha_n^{(1)}, \theta_n^{(1)}]$, we have $|\alpha_n^{(1)} \varphi(c_n^{(1)})| \leq |\alpha_n^{(1)}| \varphi(\theta_n^{(1)}) = |\frac{\alpha_n^{(1)}}{\theta_n^{(1)}}| |\theta_n^{(1)}| \varphi(\theta_n^{(1)})$, where the ratio $\frac{\alpha_n^{(1)}}{\theta_n^{(1)}} = \frac{\frac{\lambda_n}{n} - \betatrue_i}{\frac{\lambda_n + \delta_n}{n} - \betatrue_i}$ is deterministic and goes to $1$.

As $-\theta_n^{(2)} \leq -\alpha_n^{(2)} < 0$, for $c_n^{(1)} \in [-\theta_n^{(2)}, -\alpha_n^{(2)}]$, we have $|-\alpha_n^{(2)} \varphi(-c_n^{(2)})| \leq |-\alpha_n^{(2)} \varphi(-\alpha_n^{(2)})|$.

Since $\frac{\mathbf{X}^\top \mathbf{X}}{n} \toas \E[X_0 X_0^\top]$ (strong law of large numbers), $\lambda_n = o(n)$ and $\delta_n = o(n)$, we have $\tilde\tau_n \toas 0^+$, and using the continuous mapping theorem, $\alpha_n^{(1)} \toas -\infty$, $\theta_n^{(1)} \toas -\infty$, $\alpha_n^{(2)} \toas +\infty$ and $\theta_n^{(2)} \toas +\infty$. We then also have $d_n^{(1)} \toas -\infty$ and $d_n^{(2)} \toas +\infty$.

$\Phi$ and $x \mapsto x \varphi(x)$ are continuous bounded functions so we get $L^1$ convergence of $\Phi(\theta_n^{(1)})$, $\Phi(-\theta_n^{(2)})$, $\theta_n^{(1)} \varphi(\theta_n^{(1)})$, $-\alpha_n^{(2)} \varphi(-\alpha_n^{(2)})$, $d_n^{(1)} \varphi(d_n^{(1)})$ and $-d_n^{(2)} \varphi(-d_n^{(2)})$ to 0. By putting everything together, we obtain
\[
\frac{n}{\delta_n} \E[\hat\beta_{\lambda_n + \delta_n, i} - \hat\beta_{\lambda_n, i}] = \frac{n}{\delta_n} \E[\E[\hat\beta_{\lambda_n + \delta_n, i} - \hat\beta_{\lambda_n, i} \mid \mathbf{X}]] \rightarrow -1 = -\text{sign}(\betatrue_i).
\]

When $\betatrue_i < 0$, we show in a similar manner that
\[
\frac{n}{\delta_n} \E[\hat\beta_{\lambda_n + \delta_n, i} - \hat\beta_{\lambda_n, i}] \rightarrow 1 = -\text{sign}(\betatrue_i).
\]

If $\betatrue_i = 0$, $\alpha_n^{(1)} = \alpha_n^{(2)}$ and $\theta_n^{(1)} = \theta_n^{(2)}$ so $1 - \Phi(\alpha_n^{(1)}) = \Phi(-\alpha_n^{(2)})$, $\varphi(\alpha_n^{(1)}) = \varphi(-\alpha_n^{(2)})$, $1 - \Phi(\theta_n^{(1)}) = \Phi(-\theta_n^{(2)})$ and $\varphi(\theta_n^{(1)}) = \varphi(-\theta_n^{(2)})$ which leads to
\begin{talign}
&\hspace{1mm}\E[\hat\beta_{\lambda_n + \delta_n, i} - \hat\beta_{\lambda_n, i} \mid \mathbf{X}]\\
&= - \frac{\delta_n}{n} (1 - \Phi(\theta_n^{(1)})) + \frac{\delta_n}{n} \Phi(-\theta_n^{(2)})\\
&\hspace{4mm} - (\betatrue_i - \frac{\lambda_n}{n}) (\Phi(\theta_n^{(1)}) - \Phi(\alpha_n^{(1)})) + \tilde\tau_n (\varphi(\theta_n^{(1)}) - \varphi(\alpha_n^{(1)}))\\
&\hspace{4mm} - (\betatrue_i + \frac{\lambda_n}{n}) (\Phi(-\alpha_n^{(2)}) - \Phi(-\theta_n^{(2)})) + \tilde\tau_n (\varphi(-\alpha_n^{(2)}) - \varphi(-\theta_n^{(2)}))\\
&= 0
\end{talign}
and thus $\E[\hat\beta_{\lambda_n + \delta_n, i} - \hat\beta_{\lambda_n, i}] = 0 = \text{sign}(\betatrue_i)$.

Thus, we have convergence component-wise and can conclude $\frac{n}{\delta_n} \E[\hat\beta_{\lambda_n + \delta_n} - \hat\beta_{\lambda_n}] \rightarrow -\text{sign}(\betatrue)$.

We now show that $\frac{n}{\delta_n} \E[\hat\beta_{\lambda_n + \delta_n, i} \hat\beta_{\lambda_n + \delta_n, j} - \hat\beta_{\lambda_n, i} \hat\beta_{\lambda_n, j}] \rightarrow - (\text{sign}(\betatrue_i) \betatrue_j + \betatrue_i \text{sign}(\betatrue_j))$.

Note that
\begin{talign}
&\hspace{1mm} \E[\frac{n}{\delta_n} (\hat\beta_{\lambda_n + \delta_n, i} \hat\beta_{\lambda_n + \delta_n, j} - \hat\beta_{\lambda_n, i} \hat\beta_{\lambda_n, j}) + \text{sign}(\betatrue_i) \betatrue_j + \betatrue_i \text{sign}(\betatrue_j)]\\
&= \E[\frac{n}{\delta_n} (\hat\beta_{\lambda_n + \delta_n, i} - \hat\beta_{\lambda_n, i}) \hat\beta_{\lambda_n + \delta_n, j} + \text{sign}(\betatrue_i) \betatrue_j] + \E[\hat\beta_{\lambda_n, i} \, \frac{n}{\delta_n} (\hat\beta_{\lambda_n + \delta_n, j} - \hat\beta_{\lambda_n, j}) + \betatrue_i \text{sign}(\betatrue_j)]
\end{talign}
with
\begin{talign}
&\hspace{1mm}\E[\frac{n}{\delta_n} (\hat\beta_{\lambda_n + \delta_n, i} - \hat\beta_{\lambda_n, i}) \hat\beta_{\lambda_n + \delta_n, j} + \text{sign}(\betatrue_i) \betatrue_j]\\
&= \E[(\frac{n}{\delta_n} (\hat\beta_{\lambda_n + \delta_n, i} - \hat\beta_{\lambda_n, i}) + \text{sign}(\betatrue_i)) (\hat\beta_{\lambda_n + \delta_n, j} - \betatrue_j)]\\
&\hspace{4mm}+ \betatrue_j \, \E[\frac{n}{\delta_n} (\hat\beta_{\lambda_n + \delta_n, i} - \hat\beta_{\lambda_n, i}) + \text{sign}(\betatrue_i)]  - \text{sign}(\betatrue_i) \E[\hat\beta_{\lambda_n + \delta_n, j} - \betatrue_j]
\end{talign}
where, using Cauchy--Schwarz,
\begin{talign}
&\hspace{1mm}\E[(\frac{n}{\delta_n} (\hat\beta_{\lambda_n + \delta_n, i} - \hat\beta_{\lambda_n, i}) + \text{sign}(\betatrue_i)) (\hat\beta_{\lambda_n + \delta_n, j} - \betatrue_j)]\\
&\leq \sqrt{\E[(\frac{n}{\delta_n} (\hat\beta_{\lambda_n + \delta_n, i} - \hat\beta_{\lambda_n, i}) + \text{sign}(\betatrue_i))^2] \E[(\hat\beta_{\lambda_n + \delta_n, j} - \betatrue_j)^2]}.
\end{talign}

We can do the same with $\E[\hat\beta_{\lambda_n, i} \, \frac{n}{\delta_n} (\hat\beta_{\lambda_n + \delta_n, j} - \hat\beta_{\lambda_n, j}) + \betatrue_i \text{sign}(\betatrue_j)]$.

Therefore, proving $\E[\frac{n}{\delta_n} (\hat\beta_{\lambda_n + \delta_n, i} \hat\beta_{\lambda_n + \delta_n, j} - \hat\beta_{\lambda_n, i} \hat\beta_{\lambda_n, j}) + \text{sign}(\betatrue_i) \betatrue_j + \betatrue_i \text{sign}(\betatrue_j)] \rightarrow 0$ for all $i, j$ comes down to proving $\E[(\frac{n}{\delta_n} (\hat\beta_{\lambda_n + \delta_n, i} - \hat\beta_{\lambda_n, i}) + \text{sign}(\betatrue_i))^2] = O(1)$ for all $i$ given that we have already shown for all $i$, accounting for the fact that both $\lambda_n$ and $\delta_n$ are $o(n)$,
\begin{itemize}
\item $\E[\hat\beta_{\lambda_n, i}] \rightarrow \betatrue_i$ and $\E[\hat\beta_{\lambda_n + \delta_n, i}] \rightarrow \betatrue_i$, 
\item $\E[(\hat\beta_{\lambda_n, i} - \betatrue_i)^2] \rightarrow 0$ and $\E[(\hat\beta_{\lambda_n + \delta_n, i} - \betatrue_i)^2] \rightarrow 0$,
\item $\frac{n}{\delta_n} \E[\hat\beta_{\lambda_n + \delta_n, i} - \hat\beta_{\lambda_n, i}] \rightarrow - \text{sign}(\betatrue_i)$.
\end{itemize}
The first two bullet points were proved in \cref{sec:rate-sigma-single-algo-stlasso} and the third one earlier in this proof.

As a reminder, we have
\begin{talign}
\hat\beta_{\lambda_n + \delta_n, i} - \hat\beta_{\lambda_n, i}
&= -\text{sign}(\olsi)
\begin{cases}
\frac{\delta_n}{n} & \text{if } |\olsi| > \frac{\lambda_n+\delta_n}{n}\\
|\olsi| - \frac{\lambda_n}{n} &  \text{if } |\olsi| \in [\frac{\lambda_n}{n}, \frac{\lambda_n+\delta_n}{n}]\\
0 & \text{if } |\olsi| < \frac{\lambda_n}{n}\\
\end{cases}
\end{talign}
thus
\begin{talign}
(\hat\beta_{\lambda_n + \delta_n, i} - \hat\beta_{\lambda_n, i})^2 \leq \frac{\delta_n^2}{n^2}
\end{talign}
and
\begin{talign}
(\frac{n}{\delta_n} (\hat\beta_{\lambda_n + \delta_n, i} - \hat\beta_{\lambda_n, i}) + \text{sign}(\betatrue_i))^2
\leq 2 (\frac{n^2}{\delta_n^2} (\hat\beta_{\lambda_n + \delta_n, i} - \hat\beta_{\lambda_n, i})^2 + \text{sign}(\betatrue_i)^2)
\leq 4.
\end{talign}
Hence, $\E[(\frac{n}{\delta_n} (\hat\beta_{\lambda_n + \delta_n, i} - \hat\beta_{\lambda_n, i}) + \text{sign}(\betatrue_i))^2] = O(1)$.

Therefore, we get
\[
\frac{n}{\delta_n} \E[\hat\beta_{\lambda_n + \delta_n} \hat\beta_{\lambda_n + \delta_n}^\top - \hat\beta_{\lambda_n} \hat\beta_{\lambda_n}^\top] \rightarrow -(\text{sign}(\betatrue) \betatrueT + \betatrue\text{sign}(\betatrue)^\top).
\]

We can then conclude that $\frac{n^2}{\delta_n^2} \, \sigma^2(\hdiff) \rightarrow 4\tau^2 \|\betatrue\|_0$ as mentioned earlier in the proof.
\section{\texorpdfstring{\pcref{prop-rate-lstab-comp-stlasso}}{Proof of Proposition~\protect\ref{prop-rate-lstab-comp-stlasso}}}
\label{sec:rate-lstab-comp-stlasso}

Starting from the expression for $\gamma(h_n)$ stated in \cref{eqs-comp}, we have
\begin{talign}
\gamma(\hdiff)
&= \E[(2 (Y_0 X_0^\top - \betatrueT \E[X_0 X_0^\top]) \nu_n + \tr((X_0 X_0^\top - \E[X_0 X_0^\top]) \Psi_n))^2].
\end{talign}
where
\begin{itemize}
\item $\nu_n \defeq \hat\beta_{\lambda_n + \delta_n} - \hat\beta_{\lambda_n} - (\hat\beta'_{\lambda_n + \delta_n} - \hat\beta'_{\lambda_n})$,
\item $\Psi_n \defeq \hat\beta_{\lambda_n} \hat\beta_{\lambda_n}^\top - \hat\beta_{\lambda_n + \delta_n} \hat\beta_{\lambda_n + \delta_n}^\top - (\hat\beta'_{\lambda_n} \hat\beta_{\lambda_n}^{'\top} - \hat\beta'_{\lambda_n + \delta_n} \hat\beta_{\lambda_n + \delta_n}^{'\top})$.
\end{itemize}

$\E[X_0 X_0^\top] = \mathbf{I}$ since the features are drawn from $\N(0, \mathbf{I})$, and using independence of $Z_0$ from the training points, we have
\begin{talign}
\gamma(\hdiff)
&= \E[(2 \sum_i (Y_0 X_{0, i} - \betatrue_i) \nu_{n, i} + \sum_{i, j} (X_{0, i} X_{0, j} - \indic{i = j}) \Psi_{n, i, j}))^2]\\
&= 4 \sum_{i} \E[(Y_0 X_{0, i} - \betatrue_i)^2] \E[\nu_{n, i}^2]\\
&\hspace{4mm} + 4 \sum_{i \neq j} \E[(Y_0 X_{0, i} - \betatrue_i) (Y_0 X_{0, j} - \betatrue_j)] \E[\nu_{n, i} \nu_{n, j}]\\
&\hspace{4mm} + 4 \sum_{i, j, k} \E[(Y_0 X_{0, i} - \betatrue_i) (X_{0, j} X_{0, k} - \indic{j = k})] \E[\nu_{n, i} \Psi_{n, j, k}]\\
&\hspace{4mm} + \sum_{i, j, k, l} \E[(X_{0, i} X_{0, j} - \indic{i = j}) (X_{0, k} X_{0, l} - \indic{k = l})] \E[\Psi_{n, i, j} \Psi_{n, k, l}].
\end{talign}

Since $Y_0 = X_0^\top \betatrue + \varepsilon_0 = \sum_k X_{0, k} \betatrue_k + \varepsilon_0$ with $X_0 \sim \N(0, \mathbf{I})$ and $\varepsilon_0 \indp X_0$, we have
\begin{talign}
\E[Y_0 X_{0, i}] = \betatrue_i \E[X_{0, i}^2] + \sum_{k \neq i} \betatrue_k \E[X_{0, i} X_{0, k}] + \E[\varepsilon_0 X_{0, i}] = \betatrue_i
\end{talign}
and $Y_0^2 = \sum_{k, l} X_{0, k} X_{0, l} \betatrue_k \betatrue_l + 2 \varepsilon_0 \sum_k X_{0, k} \betatrue_k + \varepsilon_0^2$, so for $i \neq j$,
\begin{talign}
\E[Y_0^2 X_{0, i} X_{0, j}] = \sum_{k, l} \E[X_{0, i} X_{0, j} X_{0, k} X_{0, l}] \betatrue_k \betatrue_l + 2 \sum_k \E[\varepsilon_0 X_{0, i} X_{0, j} X_{0, k}] \betatrue_k + \E[\varepsilon_0^2 X_{0, i} X_{0, j}] = 2 \betatrue_i \betatrue_j
\end{talign}
since the expectation in the first sum is equal to $1$ when $k = i, l = j$ or $k = j, l = i$, and equal to $0$ otherwise, and thus, for $i \neq j$,
\begin{talign}
\E[(Y_0 X_{0, i} - \betatrue_i) (Y_0 X_{0, j} - \betatrue_j)] = \E[Y_0^2 X_{0, i} X_{0, j}] - \betatrue_i \E[Y_0 X_{0, j}] - \betatrue_j \E[Y_0 X_{0, i}] + \betatrue_i \betatrue_j = \betatrue_i \betatrue_j.
\end{talign}
For the case $i = j$,
\begin{talign}
\E[Y_0^2 X_{0, i}^2]
&= \sum_{k, l} \E[X_{0, i}^2 X_{0, k} X_{0, l}] \betatrue_k \betatrue_l + 2 \sum_k \E[\varepsilon_0 X_{0, i}^2 X_{0, k}] \betatrue_k + \E[\varepsilon_0^2 X_{0, i}^2]\\
&= \E[X_{0, i}^4] \betatruesq_i + \sum_{k \neq i} \E[X_{0, i}^2 X_{0, k}^2] \betatruesq_k + \tau^2\\
&= \E[X_{0, i}^4] \betatruesq_i + \sum_{k \neq i} \betatruesq_k + \tau^2
\end{talign}
and then, for $\betatrue_i = 0$,
\begin{talign}
\E[(Y_0 X_{0, i} - \betatrue_i)^2] = \E[Y_0^2 X_{0, i}^2] = \sum_{k \neq i} \betatruesq_k + \tau^2 \geq \tau^2 > 0.
\end{talign}

Therefore
\begin{talign}
\gamma(\hdiff)
&= 4 \sum_{i, \betatrue_i = 0} \E[Y_0^2 X_{0, i}^2] \E[\nu_{n, i}^2]\\
&\hspace{4mm} + 4 \sum_{i, \betatrue_i \neq 0} \E[(Y_0 X_{0, i} - \betatrue_i)^2] \E[\nu_{n, i}^2]\\
&\hspace{4mm} + 4 \sum_{i \neq j, \betatrue_i \neq 0, \betatrue_j \neq 0} \betatrue_i \betatrue_j \E[\nu_{n, i} \nu_{n, j}]\\
&\hspace{4mm} + 4 \sum_{i, j, k} \E[(Y_0 X_{0, i} - \betatrue_i) (X_{0, j} X_{0, k} - \indic{j = k})] \E[\nu_{n, i} \Psi_{n, j, k}]\\
&\hspace{4mm} + \sum_{i, j, k, l} \E[(X_{0, i} X_{0, j} - \indic{i = j}) (X_{0, k} X_{0, l} - \indic{k = l})] \E[\Psi_{n, i, j} \Psi_{n, k, l}].
\end{talign}
where importantly we were able to remove the $i, j$ terms in the third sum when $\betatrue_i = 0$ or $\betatrue_j = 0$.

We will now prove the following results:
\begin{itemize}
\item $\E[\nu_{n, i}^2] = O(\frac{\delta_n^2}{n^2})$ for all $i$,
\item $\E[\nu_{n, i}^2] = \Omega(\frac{\delta_n^2}{n^2 \sqrt{n}})$ for all $i$ such that $\betatrue_i = 0$,
\item $\E[\nu_{n, i}^2] = o(\frac{\delta_n^2}{n^2 \sqrt{n}})$ for all $i$ such that $\betatrue_i \neq 0$,
\item $\E[\Psi_{n, i, j}^2] = O(\frac{\delta_n^2}{n^4})$ for all $i, j$.
\end{itemize}

Once we prove these, Cauchy--Schwarz will yield the following upper-bounding rates for terms appearing in the expression of $\gamma(\hdiff)$:
\begin{itemize}
\item for $i, j$ such that $\betatrue_i \neq 0$ and $\betatrue_j \neq 0$, $|\E[\nu_{n, i} \nu_{n, j}]| \leq \sqrt{\E[\nu_{n, i}^2] \E[\nu_{n, j}^2]} = o(\frac{\delta_n^2}{n^2 \sqrt{n}})$,
\item $|\E[\nu_{n, i} \Psi_{n, j, k}]| \leq \sqrt{\E[\nu_{n, i}^2] \E[\Psi_{n, j, k}^2]} = O(\sqrt{\frac{\delta_n^2}{n^2} \frac{\delta_n^2}{n^4}}) = O(\frac{\delta_n^2}{n^3}) = o(\frac{\delta_n^2}{n^2 \sqrt{n}})$,
\item $|\E[\Psi_{n, i, j} \Psi_{n, k, l}]| \leq \sqrt{\E[\Psi_{n, i, j}^2] \E[\Psi_{n, k, l}^2]} = O(\sqrt{\frac{\delta_n^2}{n^4} \frac{\delta_n^2}{n^4}}) = O(\frac{\delta_n^2}{n^4}) = o(\frac{\delta_n^2}{n^2 \sqrt{n}})$,
\end{itemize}
and it will therefore be clear that $\gamma(\hdiff) = \Omega(\frac{\delta_n^2}{n^2 \sqrt{n}})$ as the terms of leading order in $\gamma(\hdiff)$ will be the $\E[\nu_{n, i}^2]$ terms for $i$ such that $\betatrue_i = 0$.

We will now prove the first result $\E[\nu_{n, i}^2] = O(\frac{\delta_n^2}{n^2})$ for all $i$.

We have
\begin{talign}
\nu_{n, i}
&= \hat\beta_{\lambda_n + \delta_n, i} - \hat\beta_{\lambda_n, i} - (\hat\beta'_{\lambda_n + \delta_n, i} - \hat\beta'_{\lambda_n, i})\\
&= \text{sign}(\olsi) (|\olsi| - \frac{\lambda_n+\delta_n}{n})_{+} - \text{sign}(\olsi) (|\olsi| - \frac{\lambda_n}{n})_{+}\\
&- (\text{sign}(\olsi') (|\olsi'| - \frac{\lambda_n+\delta_n}{n})_{+} - \text{sign}(\olsi') (|\olsi'| - \frac{\lambda_n}{n})_{+})\\
&= \text{sign}(\olsi)
\begin{cases}
- \frac{\delta_n}{n} & \text{if } |\olsi| > \frac{\lambda_n+\delta_n}{n}\\
\frac{\lambda_n}{n} - |\olsi| &  \text{if } |\olsi| \in [\frac{\lambda_n}{n}, \frac{\lambda_n+\delta_n}{n}]\\
0 & \text{if } |\olsi| < \frac{\lambda_n}{n}\\
\end{cases}\\
&\hspace{4mm}- \text{sign}(\olsi')
\begin{cases}
- \frac{\delta_n}{n} & \text{if } |\olsi'| > \frac{\lambda_n+\delta_n}{n}\\
\frac{\lambda_n}{n} - |\olsi'| &  \text{if } |\olsi'| \in [\frac{\lambda_n}{n}, \frac{\lambda_n+\delta_n}{n}]\\
0 & \text{if } |\olsi'| < \frac{\lambda_n}{n}\\
\end{cases}.
\end{talign}
We can observe that both $|\hat\beta_{\lambda_n + \delta_n, i} - \hat\beta_{\lambda_n, i}|$ and $|\hat\beta'_{\lambda_n + \delta_n, i} - \hat\beta'_{\lambda_n, i}|$ are upper-bounded by $\frac{\delta_n}{n}$ and thus $\nu_{n, i}^2 \leq 4 \frac{\delta_n^2}{n^2}$, which implies $\E[\nu_{n, i}^2] = O(\frac{\delta_n^2}{n^2})$ for all $i$.

We will now prove the second result $\E[\nu_{n, i}^2] = \Omega(\frac{\delta_n^2}{n^2 \sqrt{n}})$ for all $i$ such that $\betatrue_i = 0$.

Based on the previous expression, we can further detail $\nu_{n, i}$ as follows
\begin{talign}
\nu_{n, i}
&=
\begin{cases}
- \frac{\delta_n}{n} & \text{if } \olsi > \frac{\lambda_n+\delta_n}{n}\\
\frac{\delta_n}{n} & \text{if } \olsi < -\frac{\lambda_n+\delta_n}{n}\\
\frac{\lambda_n}{n} - \olsi &  \text{if } \olsi \in [\frac{\lambda_n}{n}, \frac{\lambda_n+\delta_n}{n}]\\
-\frac{\lambda_n}{n} - \olsi &  \text{if } \olsi \in [-\frac{\lambda_n+\delta_n}{n}, -\frac{\lambda_n}{n}]\\
0 & \text{if } |\olsi| < \frac{\lambda_n}{n}\\
\end{cases}
-
\begin{cases}
- \frac{\delta_n}{n} & \text{if } \olsi' > \frac{\lambda_n+\delta_n}{n}\\
\frac{\delta_n}{n} & \text{if } \olsi' < -\frac{\lambda_n+\delta_n}{n}\\
\frac{\lambda_n}{n} - \olsi' &  \text{if } \olsi' \in [\frac{\lambda_n}{n}, \frac{\lambda_n+\delta_n}{n}]\\
-\frac{\lambda_n}{n} - \olsi' &  \text{if } \olsi' \in [-\frac{\lambda_n+\delta_n}{n}, -\frac{\lambda_n}{n}]\\
0 & \text{if } |\olsi'| < \frac{\lambda_n}{n}\\
\end{cases}
\end{talign}
which means there are $25$ possible cases that form a partition and we can write $\nu_{n, i}$ as the sum of $25$ terms that are of the form: an indicator of one of the $25$ events multiplied by the value of $\nu_{n, i}$ for this event. We can then similarly write $\nu_{n, i}^2$ as the sum of $25$ terms that are of the form: an indicator of one of the $25$ events multiplied by the value of $\nu_{n, i}^2$ for this event.

We can then lower-bound $\E[\nu_{n, i}^2]$ by the expectation of any one of the $25$ terms since they are all non-negative. In particular, we can do it using the term coming from the combination of the first case on the left side and the last case on the right side
\begin{talign}
\E[\nu_{n, i}^2]
&\geq \E[\frac{\delta_n^2}{n^2} \indic{\olsi > \frac{\lambda_n+\delta_n}{n}, |\olsi'| < \frac{\lambda_n}{n}}]\\
&= \frac{\delta_n^2}{n^2} \P(\olsi > \frac{\lambda_n+\delta_n}{n}, |\olsi'| < \frac{\lambda_n}{n}).
\end{talign}

Since $\lambda_n = \omega(1)$ 
and $\delta_n = \Theta(1)$,  $\frac{\lambda_n}{n} - \frac{\delta_n}{n} > 0$ for $n$ large enough, and we then have $\{\olsi > \olsi' + 2 \frac{\delta_n}{n}, \olsi' \in [\frac{\lambda_n}{n} - \frac{\delta_n}{n}, \frac{\lambda_n}{n}]\} \subseteq \{\olsi > \frac{\lambda_n+\delta_n}{n}, |\olsi'| < \frac{\lambda_n}{n}\}$, therefore
\begin{talign}
&\hspace{1mm}\P(\olsi > \frac{\lambda_n+\delta_n}{n}, |\olsi'| < \frac{\lambda_n}{n})\\
&\geq \P(\olsi > \olsi' + 2 \frac{\delta_n}{n}, \olsi' \in [\frac{\lambda_n}{n} - \frac{\delta_n}{n}, \frac{\lambda_n}{n}])\\
&= \P(n (\olsi' - \olsi) < - 2 \delta_n, \olsi' \in [\frac{\lambda_n}{n} - \frac{\delta_n}{n}, \frac{\lambda_n}{n}]).
\end{talign}
We have
\begin{talign}
\Cov(\olsi, \olsi' \mid \mathbf{X}, \mathbf{X'})
&= \Cov(\betatrue + (\mathbf{X}^\top \mathbf{X})^{-1} \mathbf{X}^\top \varepsilon, \betatrue + (\mathbf{X'}^\top \mathbf{X'})^{-1} \mathbf{X'}^\top \varepsilon' \mid \mathbf{X}, \mathbf{X'})\\
&= (\mathbf{X}^\top \mathbf{X})^{-1} \mathbf{X}^\top \Cov(\varepsilon, \varepsilon') \mathbf{X'} (\mathbf{X'}^\top \mathbf{X'})^{-1}\\
&= \tau^2 (\mathbf{X}^\top \mathbf{X})^{-1} \mathbf{\tilde X}^\top \mathbf{\tilde X} (\mathbf{X'}^\top \mathbf{X'})^{-1}
\end{talign}
where $\mathbf{\tilde X} \defeq (X_2, \hdots, X_n)^\top$ is the matrix of regressors for the training points except for the first one that is being changed, since $\Cov(\varepsilon_i, \varepsilon'_j)$ is equal to $\tau^2$ if $i = j \geq 2$ and $0$ otherwise.
Then
\begin{talign}
\Cov(\olsi' - \olsi, \olsi' \mid \mathbf{X}, \mathbf{X'})
&= \tau^2 (\mathbf{X'}^\top \mathbf{X'})^{-1} - \tau^2 (\mathbf{X}^\top \mathbf{X})^{-1} \mathbf{\tilde X}^\top \mathbf{\tilde X} (\mathbf{X'}^\top \mathbf{X'})^{-1}\\
&= \tau^2 (\mathbf{I} - (\mathbf{X}^\top \mathbf{X})^{-1} \mathbf{\tilde X}^\top \mathbf{\tilde X}) (\mathbf{X'}^\top \mathbf{X'})^{-1}.
\end{talign}
Hence, the bivariate normal vector $(\olsi' - \olsi, \olsi')$ has uncorrelated components in the limit, with zero correlation being equivalent to independence for multivariate normal vectors.
Since $n (\hat\beta'_{\text{OLS}} - \ols ) \toas V \defeq (Y'_1 - X_1^{\prime\top} \betatrue) X'_1 - (Y_1 - X_1^\top \betatrue) X_1$, proved in \cref{sec:rate-lstab-single-algo-stlasso}, and $\delta_n = \Theta(1)$, we have
\begin{talign}
\P(n (\olsi' - \olsi) < - 2 \delta_n, \olsi' \in [\frac{\lambda_n}{n} - \frac{\delta_n}{n}, \frac{\lambda_n}{n}])
= \Theta(\P(\olsi' \in [\frac{\lambda_n}{n} - \frac{\delta_n}{n}, \frac{\lambda_n}{n}])).
\end{talign}
We can then focus on the rate of $\P(\olsi' \in [\frac{\lambda_n}{n} - \frac{\delta_n}{n}, \frac{\lambda_n}{n}])$.
\begin{talign}
\P(\olsi' \in [\frac{\lambda_n}{n} - \frac{\delta_n}{n}, \frac{\lambda_n}{n}])
= \E[\P(\olsi' \in [\frac{\lambda_n}{n} - \frac{\delta_n}{n}, \frac{\lambda_n}{n}] \mid \mathbf{X'})]
\end{talign}
where, using $\betatrue_i = 0$ and $\hat\beta'_{\text{OLS}} \mid \mathbf{X} \sim \N(\betatrue, \tau^2 (\mathbf{X'}^\top \mathbf{X'})^{-1})$ and defining $\tilde\tau'_n = \frac{\tau}{\sqrt{n}} \sqrt{(\frac{\mathbf{X'}^\top \mathbf{X'}}{n})_{i,i}^{-1}}$,
\begin{talign}
\P(\olsi' \in [\frac{\lambda_n}{n} - \frac{\delta_n}{n}, \frac{\lambda_n}{n}] \mid \mathbf{X'})
&= \Phi(\frac{1}{\tilde\tau'_n} \frac{\lambda_n}{n}) - \Phi(\frac{1}{\tilde\tau'_n} (\frac{\lambda_n}{n} - \frac{\delta_n}{n}))\\
&= \frac{1}{\tilde\tau'_n} \frac{\delta_n}{n} \varphi(\frac{1}{\tilde\tau'_n} \frac{\lambda_n}{n}) - \frac{1}{2} \frac{1}{\tilde\tau_n^{\prime \, 2}} \frac{\delta_n^2}{n^2} \varphi'(c_n)
\end{talign}
for $c_n \in [\frac{1}{\tilde\tau'_n} (\frac{\lambda_n}{n} - \frac{\delta_n}{n}), \frac{1}{\tilde\tau'_n} \frac{\lambda_n}{n}]$ by a second-order Taylor expansion.
We have
\begin{talign}
\frac{1}{\tilde\tau'_n} \frac{\delta_n}{n} \varphi(\frac{1}{\tilde\tau'_n} \frac{\lambda_n}{n})
=  \frac{\delta_n}{\sqrt{n}} \frac{1}{\tilde\tau'_n} \frac{\sqrt{n}}{n} \varphi(\frac{1}{\tilde\tau'_n} \frac{\lambda_n}{n})
\end{talign}
whose expectation is $\Theta(\frac{1}{\sqrt{n}})$ since 
$\lambda_n = O(\sqrt{n})$
and $\delta_n = \Theta(1)$ yield 
$\frac{\delta_n}{\sqrt{n}} = \Theta(\frac{1}{\sqrt{n}})$ 
and 
$\E[\frac{1}{\tilde\tau'_n} \frac{\sqrt{n}}{n} \varphi(\frac{1}{\tilde\tau'_n} \frac{\lambda_n}{n})] = \Theta(1)$.

As for the second part of the Taylor expansion, its expectation is a $o(\frac{1}{\sqrt{n}})$ since $\varphi'$ is bounded, $\delta_n = \Theta(1)$ and we have
\begin{talign}
\E[\frac{1}{\tilde\tau_n^{\prime \, 2}}] = \frac{1}{\tau^2} \E[((\mathbf{X'}^\top \mathbf{X'})_{i,i}^{-1})^{-1}] = n - p + 1
\end{talign}
using the fact that for $X_i \distiid \N(0, \mathbf{I})$, we know $(\mathbf{X}^\top \mathbf{X})^{-1} \sim W_p^{-1}(\mathbf{I}, n)$ and then the diagonal element $(\mathbf{X}^\top \mathbf{X})_{i, i}^{-1}$ follows an inverse gamma distribution with shape parameter $\frac{n-p+1}{2}$ and scale parameter $\frac{1}{2}$, and the expectation of the reciprocal of an inverse gamma distributed variable is the ratio of the shape and the scale.

We can then conclude that
\begin{talign}
\P(\olsi' \in [\frac{\lambda_n}{n} - \frac{\delta_n}{n}, \frac{\lambda_n}{n}])
= \E[\P(\olsi' \in [\frac{\lambda_n}{n} - \frac{\delta_n}{n}, \frac{\lambda_n}{n}] \mid \mathbf{X'})]
= \Theta(\frac{1}{\sqrt{n}})
\end{talign}
and thus $\E[\nu_{n, i}^2] = \Omega(\frac{\delta_n^2}{n^2 \sqrt{n}})$.

We will now prove the third result $\E[\nu_{n, i}^2] = o(\frac{\delta_n^2}{n^2 \sqrt{n}})$ for all $i$ such that $\betatrue_i \neq 0$.

Consider $i$ such that $\betatrue_i > 0$, since the combination of the first case on the left side and the first case on the right side in the expression of $\nu_{n, i}$ corresponds to a value of $0$ for $\nu_{n, i}$, we can write $\nu_{n, i}^2$ as the sum of $24$ terms that are of the form: an indicator of one of the $24$ other events multiplied by the value of $\nu_{n, i}^2$ for this event. Since $\nu_{n, i}^2 \leq 4 \frac{\delta_n^2}{n^2}$, we can upper-bound $\nu_{n, i}^2$ by $4 \frac{\delta_n^2}{n^2}$ multiplied by the sum of the $24$ indicators and we then need to show that all $24$ indicators have an expectation which is $o(\frac{1}{\sqrt{n}})$. Using $\E[\indic{A}] = \P(A)$, $\P(A \cap B) \leq \min(\P(A), \P(B))$ and the fact that $\ols $ and $\hat\beta'_{\text{OLS}}$ have the same unconditional distribution, we can upper-bound all $24$ indicator expectations by one of the following four probabilities
\begin{itemize}
\item $\P(\olsi < -\frac{\lambda_n+\delta_n}{n}) = \E[\P(\olsi < -\frac{\lambda_n+\delta_n}{n} \mid \mathbf{X})]$,
\item $\P(\olsi \in [\frac{\lambda_n}{n}, \frac{\lambda_n+\delta_n}{n}]) = \E[\P(\olsi \in [\frac{\lambda_n}{n}, \frac{\lambda_n+\delta_n}{n}] \mid \mathbf{X})]$,
\item $\P(\olsi \in [-\frac{\lambda_n+\delta_n}{n}, -\frac{\lambda_n}{n}]) = \E[\P(\olsi \in [-\frac{\lambda_n+\delta_n}{n}, -\frac{\lambda_n}{n}] \mid \mathbf{X})]$,
\item $\P(|\olsi| < \frac{\lambda_n}{n}) = \E[\P(|\olsi| < \frac{\lambda_n}{n} \mid \mathbf{X})]$.
\end{itemize}

Since $\ols  \mid \mathbf{X} \sim \N(\betatrue, \tau^2 (\mathbf{X}^\top \mathbf{X})^{-1})$, we can write $\olsi = \betatrue_i + \tilde\tau_n Z$ where $\tilde\tau_n = \frac{\tau}{\sqrt{n}} \sqrt{(\frac{\mathbf{X}^\top \mathbf{X}}{n})_{i,i}^{-1}}$ and $Z \mid \mathbf{X} \sim \N(0, 1)$. Note that we could have $i$ as a subscript of $\tilde\tau_n$ and $Z$, but we will only consider one $i$ at a time in our computations and we can thus omit this subscript for both of them for the sake of notational simplicity, and we will also omit it for some additional notation we define in the rest of the proof.

Define $\alpha_n^{(1)} = \frac{1}{\tilde\tau_n} (\frac{\lambda_n}{n} - \betatrue_i)$, $\alpha_n^{(2)} = \frac{1}{\tilde\tau_n} (\frac{\lambda_n}{n} + \betatrue_i)$, $\theta_n^{(1)} = \frac{1}{\tilde\tau_n}(\frac{\lambda_n + \delta_n}{n} - \betatrue_i)$ and $\theta_n^{(2)} = \frac{1}{\tilde\tau_n}(\frac{\lambda_n + \delta_n}{n} + \betatrue_i)$.

In the order they appear, the four conditional probabilities above are equal to
\[
\P(Z < -\theta_n^{(2)} \mid \mathbf{X}) = \Phi(-\theta_n^{(2)}),
\]
\[
\P(Z \in [\alpha_n^{(1)}, \theta_n^{(1)}] \mid \mathbf{X}) = \Phi(\theta_n^{(1)}) - \Phi(\alpha_n^{(1)}),
\]
\[
\P(Z \in [-\theta_n^{(2)}, -\alpha_n^{(2)}] \mid \mathbf{X}) = \Phi(-\alpha_n^{(2)}) - \Phi(-\theta_n^{(2)}),
\]
\[
\P(Z \in [-\alpha_n^{(2)}, \alpha_n^{(1)}] \mid \mathbf{X}) = \Phi(\alpha_n^{(1)}) - \Phi(-\alpha_n^{(2)}).
\]

Since $\frac{\mathbf{X}^\top \mathbf{X}}{n} \toas \E[X_0 X_0^\top]$ (strong law of large numbers), $\lambda_n = o(n)$ and $\delta_n = o(n)$, we have $\tilde\tau_n \toas 0^+$, and using the continuous mapping theorem, $\alpha_n^{(1)} \toas -\infty$, $\theta_n^{(1)} \toas -\infty$, $\alpha_n^{(2)} \toas +\infty$ and $\theta_n^{(2)} \toas +\infty$ as $\betatrue_i > 0$.

If we show that $\sqrt{n} \, \Phi(\alpha_n^{(1)})$ goes to $0$ in $L^1$, then all other similar convergences will follow and we will get that all four unconditional probabilities listed above are $o(\frac{1}{\sqrt{n}})$ and thus $\E[\nu_{n, i}^2] = o(\frac{\delta_n^2}{n^2 \sqrt{n}})$ for all $i$ such that $\betatrue_i > 0$.

We have
\begin{talign}
\sqrt{n} \, \Phi(\alpha_n^{(1)})
&= \frac{\sqrt{n}}{\alpha_n^{(1)}} \cdot \alpha_n^{(1)} \Phi(\alpha_n^{(1)})
\end{talign}
thus, by Cauchy--Schwarz,
\[
\E[|\sqrt{n} \, \Phi(\alpha_n^{(1)})|]
\leq \sqrt{\E\left[\frac{n}{(\alpha_n^{(1)})^2}\right] \E\left[(\alpha_n^{(1)} \Phi(\alpha_n^{(1)}))^2\right]}.
\]

$\alpha_n^{(1)} \toas -\infty$ so $\alpha_n^{(1)} \Phi(\alpha_n^{(1)}) \toas 0$. This comes from the fact that $x \, \Phi(x) \rightarrow 0$ for $x \rightarrow -\infty$, as we notice that for $x < 0$, we have $0 < -x \, \Phi(x) = -x \, (1 - \Phi(-x)) = -x \int_{-x}^{+\infty} \varphi(t) dt \leq \int_{-x}^{+\infty} t \varphi(t) dt$ where this last expression goes to $0$ when $x \rightarrow -\infty$.

Since $\lambda_n = o(n)$ and $\delta_n = o(n)$, for $n$ large enough, $\frac{\lambda_n + \delta_n}{n} < \betatrue_i$, so $\alpha_n^{(1)} \leq \theta_n^{(1)} < 0$. Since the function $x \mapsto x \, \Phi(x)$ is continuous bounded for $x < 0$, we get $L^1$ convergence of $(\alpha_n^{(1)} \Phi(\alpha_n^{(1)}))^2$ to 0.

Moreover, $\frac{n}{(\alpha_n^{(1)})^2} = \frac{n \tilde\tau_n^2}{(\frac{\lambda_n}{n} - \betatrue_i)^2} = \frac{\tau^2}{(\frac{\lambda_n}{n} - \betatrue_i)^2} (\frac{\mathbf{X}^\top \mathbf{X}}{n})_{i,i}^{-1}$ and it is thus sufficient to have $\E[(\frac{\mathbf{X}^\top \mathbf{X}}{n})_{i,i}^{-1}] = O(1)$, which is the case for features drawn $\iid$ from $\N(0, \mathbf{I})$ as $\E[(\frac{\mathbf{X}^\top \mathbf{X}}{n})_{i,i}^{-1}] = \frac{n}{n-p-1}$.

Hence, $\sqrt{n} \, \Phi(\alpha_n^{(1)})$ goes to $0$ in $L^1$.

The proof is similar for $i$ such that $\betatrue_i < 0$.

Finally, we show the fourth result $\E[\Psi_{n, i, j}^2] = O(\frac{\delta_n^2}{n^4})$ for all $i, j$ or equivalently $\E[\Psi_{n, i, j}^2] = O(\frac{\delta_n^4}{n^4})$ since $\delta_n = \Theta(1)$.

Similarly to previous computations and upper-bounding with Cauchy--Schwarz, we can upper-bound $\E[\Psi_{n, i, j}^2]$ using products of $\E[\nu_{n, i}^4]$ and the fourth moment of $\hat\beta_{\lambda_n, i}$, $\hat\beta_{\lambda_n + \delta_n, i}$, $\hat\beta'_{\lambda_n, i}$ or $\hat\beta'_{\lambda_n + \delta_n, i}$ and their counterparts for $j$. Since $\nu_{n, i}^2 \leq 4 \frac{\delta_n^2}{n^2}$, we have $\E[\nu_{n, i}^4] = O(\frac{\delta_n^4}{n^4})$. Additionally, the fourth moments are bounded as we showed the $L^4$ consistency of soft-thresholding for $\betatrue$. This gives us $\E[\Psi_{n, i, j}^2] = O(\frac{\delta_n^4}{n^4})$.

With the four results proved, we can conclude that $\gamma(\hdiff) = \Omega(\frac{\delta_n^2}{n^2 \sqrt{n}})$.

\section{\texorpdfstring{\pcref{rel-stab-single-algo-stlasso}}{Proof of Theorem~\protect\ref{rel-stab-single-algo-stlasso}}}
\label{sec:proof-rel-stab-single-algo-stlasso}

\cref{rel-stab-single-algo-stlasso} follows immediately from the following two propositions, proved in \cref{sec:rate-sigma-single-algo-stlasso,sec:rate-lstab-single-algo-stlasso}, respectively.

\begin{proposition}[Convergence of $\sigma^2(\hsing)$ for \ST]
\label{prop-rate-sigma-single-algo-stlasso}
Assume the linear model \cref{eq:linear-model}.
If $\lambda_n = o(n)$, then $\sigma^2(\hsing) \rightarrow 2 \tau^4$.
\end{proposition}

\vspace{3mm}

\begin{proposition}[Convergence rate of $\gamma(\hsing)$ for \ST]
\label{prop-rate-lstab-single-algo-stlasso}
Assume the linear model \cref{eq:linear-model}.
If $\lambda_n = o(n)$, then $\gamma(\hsing) \sim \frac{C}{n^2}$ for a constant $C > 0$ whose explicit expression is given in \cref{eq:C-value}.
\end{proposition}

\section{\texorpdfstring{\pcref{sigma-consistent-algo}}{Proof of Proposition~\protect\ref{sigma-consistent-algo}}}
\label{proof:sigma-consistent-algo}
Let $Y_0 = X_0^\top \betatrue + \varepsilon_0$ be the response variable with $\Var(\varepsilon_0) = \tau^2$. Using the information on the distribution of $X_0$ and the independence of $X_0$ and $\varepsilon_0$, note that
\begin{talign}
\E[Y_0^2] = \Var(Y_0) + \E[Y_0]^2 = \Var(X_0^\top \betatrue + \varepsilon_0) + 0 = \betatrueT \Var(X_0) \betatrue + \Var(\varepsilon_0) = \betatrueT \betatrue + \tau^2.
\end{talign}
For the single linear predictor, starting from the expression of $\sigma^2(h_n)$ in \cref{eqs-single-algo}, since $\E[\hat\beta_n]$ and $\E[\hat\beta_n \hat\beta_n^\top]$ are non-random, we can expand the square, use linearity of expectation, take the limits and factorize back to obtain the convergence
\begin{talign}
\sigma^2(\hsing)
&= \E[(Y_0^2 - \E[Y_0^2] - 2 (Y_0 X_0^\top - \betatrueT \E[X_0 X_0^\top]) \E[\hat\beta_n] + \tr((X_0 X_0^\top - \E[X_0 X_0^\top]) \E[\hat\beta_n \hat\beta_n^\top]))^2]\\
&\rightarrow \E[(Y_0^2 - \E[Y_0^2] - 2 (Y_0 X_0^\top - \betatrueT \E[X_0 X_0^\top]) \betatrue + \tr((X_0 X_0^\top - \E[X_0 X_0^\top]) \betatrue \betatrueT))^2]\\
&\hspace{4mm} = \E[(Y_0^2 - \betatrueT \betatrue - \tau^2 - 2 Y_0 X_0^\top \betatrue + 2\betatrueT \betatrue + \tr((X_0 X_0^\top \betatrue \betatrueT - \betatrue \betatrueT))^2]\\
&\hspace{4mm} = \E[((X_0^\top \betatrue + \varepsilon_0)^2 - \betatrueT \betatrue - \tau^2 - 2 (X_0^\top \betatrue + \varepsilon_0) X_0^\top \betatrue + 2 \betatrueT \betatrue + (X_0^\top \betatrue)^2 - \betatrueT \betatrue)^2]\\
&\hspace{4mm} = \E[(\varepsilon_0^2 - \tau^2)^2] = \Var(\varepsilon_0^2) = \E[\varepsilon_0^4] - \E[\varepsilon_0^2]^2 = 3\tau^4 - \tau^4 = 2\tau^4.
\end{talign}
Similarly, we derive the second result with two linear predictors by starting from the expression of $\sigma^2(h_n)$ in \cref{eqs-comp}.
\section{\texorpdfstring{\pcref{prop-rate-sigma-single-algo-stlasso}}{Proof of Proposition~\protect\ref{prop-rate-sigma-single-algo-stlasso}}}
\label{sec:rate-sigma-single-algo-stlasso}

We start by introducing a lemma which provides equations that will prove useful in the single algorithm setting.

\begin{lemma}[Useful equations for single linear predictor]
\label{eqs-single-algo}
When defining $h_n(Z_0, \mathbf{Z}) = (Y_0 - X_0^\top \hat\beta)^2$, we have:
\begin{talign}
h_n(Z_0, \mathbf{Z})
&= Y_0^2 - 2 Y_0 X_0^\top \hat\beta + \tr(X_0 X_0^\top \hat\beta \hat\beta^\top)\\
\E[h_n(Z_0, \mathbf{Z}) \mid Z_0]
&= Y_0^2 - 2 Y_0 X_0^\top \E[\hat\beta] + \tr(X_0 X_0^\top \E[\hat\beta \hat\beta^\top])\\
\E[h_n(Z_0, \mathbf{Z}) \mid \mathbf{Z}]
&= \E[Y_0^2] - 2 \betatrueT \E[X_0 X_0^\top] \hat\beta + \tr(\E[X_0 X_0^\top] \hat\beta \hat\beta^\top)\\
\E[h_n(Z_0, \mathbf{Z})]
&= \E[Y_0^2] - 2 \betatrueT \E[X_0 X_0^\top] \E[\hat\beta] + \tr(\E[X_0 X_0^\top] \E[\hat\beta \hat\beta^\top])\\
\sigma^2(h_n)
&= \E[(Y_0^2 - \E[Y_0^2] - 2 (Y_0 X_0^\top - \betatrueT \E[X_0 X_0^\top]) \E[\hat\beta]\\
&\hspace{6mm}+ \tr((X_0 X_0^\top - \E[X_0 X_0^\top]) \E[\hat\beta \hat\beta^\top]))^2]\\
\gamma(h_n)
&= \E[(2 (Y_0 X_0^\top - \betatrueT \E[X_0 X_0^\top]) (\hat\beta' - \hat\beta)\\
&\hspace{6mm}+ \tr((X_0 X_0^\top - \E[X_0 X_0^\top]) (\hat\beta \hat\beta^\top  - \hat\beta' \hat\beta^{\prime\top})))^2]
\end{talign}
where $\hat\beta'$ is the linear predictor learned on a training set $\mathbf{Z'}$ that is the same as $\mathbf{Z}$ except for the first point $Z_1$ being replaced by an i.i.d copy $Z'_1$.
\end{lemma}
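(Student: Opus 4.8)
The plan is to treat this as a direct computation driven by two structural facts and one algebraic trick. The structural facts are that the test point $Z_0$ is independent of the training data (hence of $\hat\beta$, a function of $\mathbf{Z}$ alone), and that the model gives $Y_0 = X_0^\top\betatrue + \varepsilon_0$ with $\varepsilon_0 \indp X_0$ and $\E[\varepsilon_0]=0$. The trick is to rewrite the quadratic prediction term with the cyclic property of the trace, $(X_0^\top\hat\beta)^2 = \hat\beta^\top X_0 X_0^\top\hat\beta = \tr(X_0 X_0^\top\hat\beta\hat\beta^\top)$, which linearizes the square in the two matrices $X_0 X_0^\top$ and $\hat\beta\hat\beta^\top$ so that every subsequent expectation factors cleanly into a product of moments that can be integrated out independently.

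First I would expand $h_n(Z_0,\mathbf{Z}) = (Y_0 - X_0^\top\hat\beta)^2 = Y_0^2 - 2 Y_0 X_0^\top\hat\beta + \tr(X_0 X_0^\top\hat\beta\hat\beta^\top)$, which is the first line. For the conditional expectation given $Z_0$, I would use that $\hat\beta$ depends only on $\mathbf{Z}\indp Z_0$, so $\E[\hat\beta\mid Z_0]=\E[\hat\beta]$ and $\E[\hat\beta\hat\beta^\top\mid Z_0]=\E[\hat\beta\hat\beta^\top]$; substituting gives the second line. For the conditional expectation given $\mathbf{Z}$, $\hat\beta$ is now fixed and the expectation runs over the test point, so the only nontrivial moment is $\E[Y_0 X_0^\top]$, which by the model equals $\betatrueT\E[X_0 X_0^\top]$ (the noise term drops because $\E[\varepsilon_0]=0$ and $\varepsilon_0\indp X_0$); this yields the third line, and the fourth line follows by the tower property applied to either of the two conditional expressions.

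The last two lines are then substitutions into the definitions. For $\sigma^2(h_n)=\Var(\E[h_n\mid Z_0])$ I would subtract the unconditional mean (fourth line) from the conditional-on-$Z_0$ expression (second line); the three resulting terms are $(Y_0^2-\E[Y_0^2])$, $-2(Y_0 X_0^\top-\betatrueT\E[X_0 X_0^\top])\E[\hat\beta]$, and $\tr((X_0 X_0^\top-\E[X_0 X_0^\top])\E[\hat\beta\hat\beta^\top])$, and squaring and taking expectation delivers the stated form. For $\gamma(h_n)$ I would form $h_n(Z_0,\mathbf{Z})-\E[h_n(Z_0,\mathbf{Z})\mid\mathbf{Z}]$ by subtracting the third line from the first, repeat with $\mathbf{Z'}$ and $\hat\beta'$, and take the difference; the $Y_0^2-\E[Y_0^2]$ contributions are identical in the two copies and cancel, leaving exactly the term linear in $(\hat\beta'-\hat\beta)$ together with the trace of $(\hat\beta\hat\beta^\top-\hat\beta'\hat\beta^{\prime\top})$ inside the square.

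There is no genuine obstacle here; the lemma is bookkeeping whose entire content is the trace linearization plus careful tracking of which source of randomness is integrated out at each stage. The one place demanding attention is sign and transpose conventions in the cross term: one must write $(X_0^\top\hat\beta)^2$, $Y_0 X_0^\top\hat\beta$, and $\E[Y_0 X_0^\top]=\betatrueT\E[X_0 X_0^\top]$ consistently as row/column objects so that the final $\sigma^2$ and $\gamma$ expressions assemble with the correct signs, in particular the $2(\hat\beta'-\hat\beta)$ orientation (rather than $-2(\hat\beta-\hat\beta')$) appearing in $\gamma$.
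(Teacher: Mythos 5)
Your proposal is correct and follows essentially the same route as the paper's proof: the trace identity $(X_0^\top\hat\beta)^2 = \tr(X_0 X_0^\top \hat\beta\hat\beta^\top)$, the independence of $Z_0$ and $\mathbf{Z}$, the moment identity $\E[Y_0 X_0^\top] = \betatrueT\E[X_0 X_0^\top]$, and then direct substitution into $\sigma^2(h_n) = \E[(\E[h_n \mid Z_0] - \E[h_n])^2]$ and the loss-stability definition, with the shared $Y_0^2 - \E[Y_0^2]$ terms cancelling in the $\gamma$ computation exactly as you describe. The only minor remark is that your caution about the $2(\hat\beta'-\hat\beta)$ orientation is moot, since the expression is squared and $2(\hat\beta'-\hat\beta) = -2(\hat\beta-\hat\beta')$ give identical values.
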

\begin{proof}
\begin{talign}
h_n(Z_0, \mathbf{Z})
&= (Y_0 - X_0^\top \hat\beta)^2\\
&= Y_0^2 - 2 Y_0 X_0^\top \hat\beta + (X_0^\top \hat\beta)^2\\
&= Y_0^2 - 2 Y_0 X_0^\top \hat\beta + X_0^\top \hat\beta \hat\beta^\top X_0\\
&= Y_0^2 - 2 Y_0 X_0^\top \hat\beta + \tr(X_0 X_0^\top \hat\beta \hat\beta^\top)
\end{talign}

Note that $\E[Y_0 X_0^\top] = \E[\E[Y_0 \mid X_0] X_0^\top] = \E[X_0^\top \betatrue X_0^\top] = \betatrueT \E[X_0 X_0^\top]$.

Since $\hat\beta$ is only a function of $\mathbf{Z}$, the independence of $Z_0$ and $\mathbf{Z}$ yields the next three equations.

The fifth equation comes from noticing
\begin{talign}
\sigma^2(h_n) = \Var(\E[h_n(Z_0, \mathbf{Z}) \mid Z_0]) = \E[(\E[h_n(Z_0, \mathbf{Z}) \mid Z_0] - \E[h_n(Z_0, \mathbf{Z})])^2].
\end{talign}
And the last one comes from the definition of $\gamma(h_n)$ as
\begin{talign}
\gamma(h_n) = \E[(h(Z_0, \mathbf{Z}) - h(Z_0, \mathbf{Z'}) - (\E[h(Z_0, \mathbf{Z}) \mid \mathbf{Z}] - \E[h(Z_0, \mathbf{Z'}) \mid \mathbf{Z'}]))^2].
\end{talign}
\end{proof}

We will show that $\E[\hat\beta_{\lambda_n}] \rightarrow \betatrue$ and $\E[\hat\beta_{\lambda_n} \hat\beta_{\lambda_n}^\top] \rightarrow \betatrue \betatrueT$ in order to obtain the convergence of $\sigma^2(\hsing)$ as an application of \cref{sigma-consistent-algo}.

We have for $i=1, \hdots, p,$
\begin{talign}
\hat\beta_{\lambda_n, i}
&= \text{sign}(\olsi) (|\olsi| - \frac{\lambda_n}{n})_{+}\\
&= \text{sign}(\olsi)
\begin{cases}
|\olsi| - \frac{\lambda_n}{n}  &  \text{if } |\olsi| \geq \frac{\lambda_n}{n}\\
0 & \text{if } |\olsi| < \frac{\lambda_n}{n}\\
\end{cases}.
\end{talign}

A classic result for the OLS estimator is $\ols  \mid \mathbf{X} \sim \N(\betatrue, \tau^2 (\mathbf{X}^\top \mathbf{X})^{-1})$. We can write $\olsi = \betatrue_i + \tilde\tau_n Z$ where $\tilde\tau_n = \frac{\tau}{\sqrt{n}} \sqrt{(\frac{\mathbf{X}^\top \mathbf{X}}{n})_{i,i}^{-1}}$ and $Z \mid \mathbf{X} \sim \N(0, 1)$. Note that we could have $i$ as a subscript of $\tilde\tau_n$ and $Z$, but we will only consider one $i$ at a time in our computations and we can thus omit this subscript for both of them for the sake of notational simplicity, and we will also omit it for some additional notation we define in the rest of the proof.

We now show that $\E[\hat\beta_{\lambda_n, i}] \rightarrow \betatrue_i$.

Using the law of total expectation,
\begin{talign}
\E[\hat\beta_{\lambda_n, i} \mid \mathbf{X}]
&= \E[\olsi - \frac{\lambda_n}{n} \mid \olsi \geq \frac{\lambda_n}{n}, \mathbf{X}] \, \P(\olsi \geq \frac{\lambda_n}{n} \mid \mathbf{X})\\
&\hspace{4mm} + \E[\olsi + \frac{\lambda_n}{n} \mid \olsi \leq -\frac{\lambda_n}{n}, \mathbf{X}] \, \P(\olsi \leq -\frac{\lambda_n}{n} \mid \mathbf{X}).
\end{talign}

Define $\alpha_n^{(1)} = \frac{1}{\tilde\tau_n} (\frac{\lambda_n}{n} - \betatrue_i)$ and $\alpha_n^{(2)} = \frac{1}{\tilde\tau_n} (\frac{\lambda_n}{n} + \betatrue_i)$.

The first probability is equal to
\[
\P(Z \geq \alpha_n^{(1)} \mid \mathbf{X}) = 1 - \Phi(\alpha_n^{(1)})
\]

and the second probability to
\[
\P(Z \leq - \alpha_n^{(2)} \mid \mathbf{X}) = \Phi(- \alpha_n^{(2)}) = 1 - \Phi(\alpha_n^{(2)}).
\]

Using the first moment of the truncated normal
\citep{JohnsonKotzBalakrishnan1994}, we have
\begin{talign}
\E[\olsi - \frac{\lambda_n}{n} \mid \olsi \geq \frac{\lambda_n}{n}, \mathbf{X}]
&= \betatrue_i - \frac{\lambda_n}{n} + \tilde\tau_n \, \E[Z \mid Z \geq \alpha_n^{(1)}, \mathbf{X}]\\
&= \betatrue_i - \frac{\lambda_n}{n} + \tilde\tau_n \frac{\varphi(\alpha_n^{(1)})}{1 - \Phi(\alpha_n^{(1)})}
\end{talign}
and
\begin{talign}
\E[\olsi + \frac{\lambda_n}{n} \mid \olsi \leq -\frac{\lambda_n}{n}, \mathbf{X}]
&= \betatrue_i + \frac{\lambda_n}{n} + \tilde\tau_n \E[Z \mid Z \leq -\alpha_n^{(2)}]\\
&= \betatrue_i + \frac{\lambda_n}{n} - \tilde\tau_n \frac{\varphi(-\alpha_n^{(2)})}{\Phi(-\alpha_n^{(2)})}.
\end{talign}

Therefore
\begin{talign}
\E[\hat\beta_{\lambda_n, i} \mid \mathbf{X}]
&= \E[\olsi - \frac{\lambda_n}{n} \mid \olsi \geq \frac{\lambda_n}{n}, \mathbf{X}] \, \P(\olsi \geq \frac{\lambda_n}{n} \mid \mathbf{X})\\
&\hspace{4mm} + \E[\olsi + \frac{\lambda_n}{n} \mid \olsi \leq -\frac{\lambda_n}{n}, \mathbf{X}] \, \P(\olsi \leq -\frac{\lambda_n}{n} \mid \mathbf{X})\\
&= (\betatrue_i - \frac{\lambda_n}{n}) (1 - \Phi(\alpha_n^{(1)}))  + \tilde\tau_n \varphi(\alpha_n^{(1)}) + (\betatrue_i + \frac{\lambda_n}{n}) \Phi(-\alpha_n^{(2)}) - \tilde\tau_n \varphi(-\alpha_n^{(2)})\\
&= (\betatrue_i - \frac{\lambda_n}{n}) (1 - \Phi(\alpha_n^{(1)})) + (\betatrue_i + \frac{\lambda_n}{n}) \Phi(-\alpha_n^{(2)}) + \tilde\tau_n (\varphi(\alpha_n^{(1)}) - \varphi(-\alpha_n^{(2)})).
\end{talign}

Note that $\varphi'(x) = -x \varphi(x)$. A straightforward study of the behavior of the function $x \mapsto x \varphi(x)$ shows it is bounded. We denote the maximum of its absolute value by $M$.

Using the mean value inequality for $\varphi$, we have
\begin{talign}
|\tilde\tau_n (\varphi(\alpha_n^{(1)}) - \varphi(-\alpha_n^{(2)}))|
&\leq \tilde\tau_n |\alpha_n^{(1)} - (-\alpha_n^{(2)})| \cdot \underset{[-\alpha_n^{(2)},\alpha_n^{(1)}]}{\max} |\varphi'|\\
&\leq M \tilde\tau_n |\alpha_n^{(1)} - (-\alpha_n^{(2)})|\\
&= M \tilde\tau_n \frac{1}{\tilde\tau_n} (\frac{\lambda_n}{n} - \betatrue_i + \frac{\lambda_n}{n} + \betatrue_i)\\
&= 2 M \frac{\lambda_n}{n}.
\end{talign}

Therefore, since $\lambda_n = o(n)$, $\tilde\tau_n (\varphi(\alpha_n^{(1)}) - \varphi(-\alpha_n^{(2)}))$ goes to $0$ in $L^1$.

We first consider $\betatrue_i > 0$.

Since $\frac{\mathbf{X}^\top \mathbf{X}}{n} \toas \E[X_0 X_0^\top]$ (strong law of large numbers), and $\lambda_n = o(n)$, we have $\tilde\tau_n \toas 0^+$, and using the continuous mapping theorem, $\alpha_n^{(1)} \toas -\infty$ and $\alpha_n^{(2)} \toas +\infty$. $\Phi$ is continuous bounded so we get $L^1$ convergence of $\Phi(\alpha_n^{(1)})$ and $\Phi(-\alpha_n^{(2)})$ to 0. By putting everything together, we obtain
\[
\E[\hat\beta_{\lambda_n, i}] = \E[\E[\hat\beta_{\lambda_n, i} \mid \mathbf{X}]] \rightarrow \betatrue_i.
\]

When $\betatrue_i < 0$, we show in a similar manner that $\E[\hat\beta_{\lambda_n, i}] \rightarrow \betatrue_i$.

If $\betatrue_i = 0$, $\alpha_n^{(1)} = \alpha_n^{(2)}$ so $1 - \Phi(\alpha_n^{(1)}) = \Phi(-\alpha_n^{(2)})$ and $\varphi(\alpha_n^{(1)}) = \varphi(-\alpha_n^{(2)})$ which leads to $\E[\hat\beta_{\lambda_n, i} \mid \mathbf{X}] = 0$ and thus $\E[\hat\beta_{\lambda_n, i}] = 0$.

Thus, we have convergence component-wise and can conclude $\E[\hat\beta_{\lambda_n}] \rightarrow \betatrue$.

We now show that $\E[\hat\beta_{\lambda_n, i} \hat\beta_{\lambda_n, j}] \rightarrow \betatrue_i \betatrue_j$.

Note that
\begin{talign}
\E[\hat\beta_{\lambda_n, i} \hat\beta_{\lambda_n, j} - \betatrue_i \betatrue_j] = \E[(\hat\beta_{\lambda_n, i} - \betatrue_i) \hat\beta_{\lambda_n, j}] + \betatrue_i \E[\hat\beta_{\lambda_n, j} - \betatrue_j]
\end{talign}
where, using Cauchy--Schwarz and the fact that $(a + b)^2 \leq 2 (a^2 + b^2)$,
\begin{talign}
|\E[(\hat\beta_{\lambda_n, i} - \betatrue_i) \hat\beta_{\lambda_n, j}]| \leq \sqrt{\E[(\hat\beta_{\lambda_n, i} - \betatrue_i)^2] \E[\hat\beta_{\lambda_n, j}^2]} \leq \sqrt{\E[(\hat\beta_{\lambda_n, i} - \betatrue_i)^2] 2(\E[(\hat\beta_{\lambda_n, j} - \betatrue_j)^2] + \betatruesq_j)}.
\end{talign}

Therefore, proving $\E[\hat\beta_{\lambda_n, i} \hat\beta_{\lambda_n, j}] \rightarrow \betatrue_i \betatrue_j$ for all $i,j$ comes down to proving $\E[(\hat\beta_{\lambda_n, i} - \betatrue_i)^2] \rightarrow 0$ for all $i$ given that we have already shown $\E[\hat\beta_{\lambda_n, i}] \rightarrow \betatrue_i$ for all $i$.

As a reminder, we have
\[
\hat\beta_{\lambda_n, i} = \text{sign}(\olsi)
\begin{cases}
|\olsi| - \frac{\lambda_n}{n}  &  \text{if } |\olsi| \geq \frac{\lambda_n}{n}\\
0 & \text{if } |\olsi| < \frac{\lambda_n}{n}\\
\end{cases}
\]
thus
\begin{talign}
&\hspace{1mm}\E[(\hat\beta_{\lambda_n, i} - \betatrue_i)^2 \mid \mathbf{X}]\\
&= \E[(\olsi - \betatrue_i - \frac{\lambda_n}{n})^2 \mid \olsi \geq \frac{\lambda_n}{n}, \mathbf{X}] \, \P(\olsi \geq \frac{\lambda_n}{n} \mid \mathbf{X})\\
&\hspace{4mm} + \E[(\olsi - \betatrue_i + \frac{\lambda_n}{n})^2 \mid \olsi \leq -\frac{\lambda_n}{n}, \mathbf{X}] \, \P(\olsi \leq -\frac{\lambda_n}{n} \mid \mathbf{X}).
\end{talign}

We introduce an intermediate lemma where we derive the second and fourth moments of the truncated normal. The second moment we will use right now and the fourth moment will be used later.

\begin{lemma}[Moments of the truncated normal]\label{lemma:trunc-norm-moments}
Let $X \sim \N(0, 1)$, and $m_k = \E[X^k \mid a \leq X \leq b]$ for $k \in \naturals$, where $-\infty \leq a < b \leq \infty$. Then $m_2 = 1 + \frac{a \varphi(a) - b \varphi(b)}{\Phi(b) - \Phi(a)}$ and $m_4 = 3 + \frac{(a^3 + 3 a) \varphi(a) - (b^3 + 3 b) \varphi(b)}{\Phi(b) - \Phi(a)}$.
\end{lemma}
\begin{proof}
First, we can derive a recursive formula on the $m_k$'s using integration by parts with the fact that $\varphi'(x) = - x \varphi(x)$. For $k \in \naturals$, we have
\begin{talign}
m_{k+2} &= \int_a^b \frac{x^{k+2} \varphi(x)}{\Phi(b) - \Phi(a)} dx = [\frac{-x^{k+1} \varphi(x)}{\Phi(b) - \Phi(a)}]_a^b + (k+1) \int_a^b \frac{x^k \varphi(x)}{\Phi(b) - \Phi(a)} dx\\
&= \frac{a^{k+1} \varphi(a) - b^{k+1} \varphi(b)}{\Phi(b) - \Phi(a)} + (k+1) m_k.
\end{talign}
Since $m_0 = \int_a^b \frac{\varphi(x)}{\Phi(b) - \Phi(a)} dx = 1$, we immediately obtain $m_2 = 1 + \frac{a \varphi(a) - b \varphi(b)}{\Phi(b) - \Phi(a)}$.
And consequently, we have
\begin{talign}
m_4 &= \frac{a^3 \varphi(a) - b^3 \varphi(b)}{\Phi(b) - \Phi(a)} + 3 m_2 = \frac{a^3 \varphi(a) - b^3 \varphi(b)}{\Phi(b) - \Phi(a)} + 3 (1 + \frac{a \varphi(a) - b \varphi(b)}{\Phi(b) - \Phi(a)})\\
&= 3 + \frac{(a^3 + 3 a) \varphi(a) - (b^3 + 3 b) \varphi(b)}{\Phi(b) - \Phi(a)}.
\end{talign}
\end{proof}

Using the second moment derived in \cref{lemma:trunc-norm-moments}, we have
\begin{talign}
&\hspace{1mm}\E[(\olsi - \betatrue_i - \frac{\lambda_n}{n})^2 \mid \olsi \geq \frac{\lambda_n}{n}, \mathbf{X}]\\
&= \E[(\tilde\tau_n Z - \frac{\lambda_n}{n})^2 \mid Z \geq \alpha_n^{(1)}, \mathbf{X}]\\
&= \tilde\tau_n^2 \, \E[Z^2 \mid Z \geq \alpha_n^{(1)}, \mathbf{X}] - 2 \tilde\tau_n \frac{\lambda_n}{n} \E[Z \mid Z \geq \alpha_n^{(1)}, \mathbf{X}] + \frac{\lambda_n^2}{n^2}\\
&= \tilde\tau_n^2 (1 + \frac{\alpha_n^{(1)} \varphi(\alpha_n^{(1)})}{1 - \Phi(\alpha_n^{(1)})}) - 2 \tilde\tau_n \frac{\lambda_n}{n} \frac{\varphi(\alpha_n^{(1)})}{1 - \Phi(\alpha_n^{(1)})} + \frac{\lambda_n^2}{n^2}
\end{talign}
and
\begin{talign}
&\hspace{1mm}\E[(\olsi - \betatrue_i + \frac{\lambda_n}{n})^2 \mid \olsi \leq -\frac{\lambda_n}{n}, \mathbf{X}]\\
&= \tilde\tau_n^2 \, \E[Z^2 \mid Z \leq -\alpha_n^{(2)}, \mathbf{X}] + 2 \tilde\tau_n \frac{\lambda_n}{n} \E[Z \mid Z \leq -\alpha_n^{(2)}, \mathbf{X}] + \frac{\lambda_n^2}{n^2}\\
&= \tilde\tau_n^2 (1 + \frac{\alpha_n^{(2)} \varphi(-\alpha_n^{(2)})}{\Phi(-\alpha_n^{(2)})}) - 2 \tilde\tau_n \frac{\lambda_n}{n} \frac{\varphi(-\alpha_n^{(2)})}{\Phi(-\alpha_n^{(2)})} + \frac{\lambda_n^2}{n^2}.
\end{talign}

Thus
\begin{talign}
&\hspace{1mm}\E[(\hat\beta_{\lambda_n, i} - \betatrue_i)^2 \mid \mathbf{X}]\\
&= \E[(\olsi - \betatrue_i - \frac{\lambda_n}{n})^2 \mid \olsi \geq \frac{\lambda_n}{n}, \mathbf{X}] \, \P(\olsi \geq \frac{\lambda_n}{n} \mid \mathbf{X})\\
&\hspace{4mm} + \E[(\olsi - \betatrue_i + \frac{\lambda_n}{n})^2 \mid \olsi \leq -\frac{\lambda_n}{n}, \mathbf{X}] \, \P(\olsi \leq -\frac{\lambda_n}{n} \mid \mathbf{X})\\
&= \tilde\tau_n^2 (1 - \Phi(\alpha_n^{(1)}) + \alpha_n^{(1)} \varphi(\alpha_n^{(1)})) - 2 \tilde\tau_n \frac{\lambda_n}{n} \varphi(\alpha_n^{(1)}) + \frac{\lambda_n^2}{n^2} (1 - \Phi(\alpha_n^{(1)}))\\
&\hspace{4mm} + \tilde\tau_n^2 (\Phi(-\alpha_n^{(2)}) + \alpha_n^{(2)} \varphi(-\alpha_n^{(2)})) - 2 \tilde\tau_n \frac{\lambda_n}{n} \varphi(-\alpha_n^{(2)}) + \frac{\lambda_n^2}{n^2} \Phi(-\alpha_n^{(2)}).
\end{talign}

For $X_i \distiid \N(0, \mathbf{I})$, we know $(\mathbf{X}^\top \mathbf{X})^{-1} \sim W_p^{-1}(\mathbf{I}, n)$, therefore $\E[(\mathbf{X}^\top \mathbf{X})^{-1}] = \frac{\mathbf{I}}{n-p-1}$ and $\E[(\frac{\mathbf{X}^\top \mathbf{X}}{n})_{i, i}^{-1}] = \frac{n}{n-p-1} = o(n)$.

Thus, using Jensen's inequality, $\E[\sqrt{(\frac{\mathbf{X}^\top \mathbf{X}}{n})_{i, i}^{-1}}] \leq \sqrt{\E[(\frac{\mathbf{X}^\top \mathbf{X}}{n})_{i, i}^{-1}]} = \sqrt{\frac{n}{n-p-1}} = o(\sqrt{n})$.

As a reminder, $\tilde\tau_n = \frac{\tau}{\sqrt{n}} \sqrt{(\frac{\mathbf{X}^\top \mathbf{X}}{n})_{i,i}^{-1}}$. We then have $L^1$ convergence of both $\tilde\tau_n$ and $\tilde\tau_n^2$ to 0. As previously mentioned, the function $x \mapsto x \varphi(x)$ is bounded. Since $\Phi$ and $\varphi$ are also bounded, and $\lambda_n = o(n)$, then
\[
\E[(\hat\beta_{\lambda_n, i} - \betatrue_i)^2] = \E[\E[(\hat\beta_{\lambda_n, i} - \betatrue_i)^2 \mid \mathbf{X}]] \rightarrow 0.
\]

Therefore, we get
\[
\E[\hat\beta_{\lambda_n} \hat\beta_{\lambda_n}^\top] \rightarrow \betatrue \betatrueT.
\]

We can then conclude that $\sigma^2(\hsing) \rightarrow 2 \tau^4$ by \cref{sigma-consistent-algo}.

\section{\texorpdfstring{\pcref{prop-rate-lstab-single-algo-stlasso}}{Proof of Proposition~\protect\ref{prop-rate-lstab-single-algo-stlasso}}}
\label{sec:rate-lstab-single-algo-stlasso}

As a reminder, to study the loss stability, we consider $Z'_1 = (X'_1, Y'_1)$ an $\iid$ copy of $Z_1 = (X_1, Y_1)$ used as replacement for the first point of the training set.

Define the vector $V \defeq (Y'_1 - X_1^{\prime\top} \betatrue) X'_1 - (Y_1 - X_1^\top \betatrue) X_1$ and the symmetric matrix $M \defeq - (V \betatrueT + \betatrue V^\top)$.

Starting from the expression of $\gamma(h_n)$ in \cref{eqs-single-algo} and using the fact that $X_0 \sim \N(0, \mathbf{I})$, we have
\begin{talign}
\gamma(\hsing) = \E[(2 (Y_0 X_0^\top - \betatrueT) (\hat\beta'_{\lambda_n} - \hat\beta_{\lambda_n}) + \tr((X_0 X_0^\top - \mathbf{I}) (\hat\beta_{\lambda_n} \hat\beta_{\lambda_n}^\top  - \hat\beta'_{\lambda_n} \hat\beta_{\lambda_n}^{'\top})))^2].
\end{talign}

We will show that
\begin{talign}
\gamma(\hsing) \sim \frac{1}{n^2} \E[(2 (Y_0 X_0^\top - \betatrueT) V + \tr((X_0 X_0^\top - \mathbf{I}) M))^2].
\end{talign}
by proving that the difference
\begin{talign}
W_n
&\defeq (2 (Y_0 X_0^\top - \betatrueT) (\hat\beta'_{\lambda_n} - \hat\beta_{\lambda_n}) + \tr((X_0 X_0^\top - \mathbf{I}) (\hat\beta_{\lambda_n} \hat\beta_{\lambda_n}^\top  - \hat\beta'_{\lambda_n} \hat\beta_{\lambda_n}^{'\top})))^2\\
&\hspace{4mm} - (2 (Y_0 X_0^\top - \betatrueT) \frac{V}{n} + \tr((X_0 X_0^\top - \mathbf{I}) \frac{M}{n}))^2
\end{talign}
goes to $0$ in $L^1$.

Since $a^2 - b^2 = (a - b)(a + b)$, we have
\begin{talign}
W_n
&= (D_{n,1} + D_{n,2}) (S_{n,1} + S_{n,2}).
\end{talign}
where
\begin{talign}
D_{n,1} &\defeq 2 (Y_0 X_0^\top - \betatrueT) (\hat\beta'_{\lambda_n} - \hat\beta_{\lambda_n} - \frac{V}{n}),\\
D_{n,2} &\defeq \tr((X_0 X_0^\top - \mathbf{I}) (\hat\beta_{\lambda_n} \hat\beta_{\lambda_n}^\top  - \hat\beta'_{\lambda_n} \hat\beta_{\lambda_n}^{'\top} - \frac{M}{n})),\\
S_{n,1} &\defeq 2 (Y_0 X_0^\top - \betatrueT) (\hat\beta'_{\lambda_n} - \hat\beta_{\lambda_n} + \frac{V}{n}),\\
S_{n,2} &\defeq \tr((X_0 X_0^\top - \mathbf{I}) (\hat\beta_{\lambda_n} \hat\beta_{\lambda_n}^\top  - \hat\beta'_{\lambda_n} \hat\beta_{\lambda_n}^{'\top} + \frac{M}{n})).
\end{talign}

Using Cauchy--Schwarz and the fact that $(a+b)^2 \leq 2 (a^2 + b^2)$,
\begin{talign}
\E[|W_n|]
&\leq \sqrt{\E[(D_{n,1} + D_{n,2})^2] \, \E[(S_{n,1} + S_{n,2})^2]}\\
&\leq 2 \sqrt{\E[D_{n,1}^2 + D_{n,2}^2] \, \E[S_{n,1}^2 + S_{n,2}^2]}.
\end{talign}

To obtain convergence of $W_n$ to 0 in $L^1$, we will thus prove that $\E[D_{n,1}^2] \rightarrow 0$, $\E[S_{n,1}^2] = O(1)$, $\E[D_{n,2}^2] \rightarrow 0$ and $\E[S_{n,2}^2] = O(1)$.

We have
\begin{talign}
\E[D_{n,1}^2]
&= \E[4 (Y_0 X_0^\top - \betatrueT) (\hat\beta'_{\lambda_n} - \hat\beta_{\lambda_n} - \frac{V}{n}) (\hat\beta'_{\lambda_n} - \hat\beta_{\lambda_n} - \frac{V}{n})^\top (Y_0 X_0 - \betatrue)]\\
&= \E[4 \tr((Y_0 X_0^\top - \betatrueT) (\hat\beta'_{\lambda_n} - \hat\beta_{\lambda_n} - \frac{V}{n}) (\hat\beta'_{\lambda_n} - \hat\beta_{\lambda_n} - \frac{V}{n})^\top (Y_0 X_0 - \betatrue))]\\
&= \E[4 \tr((Y_0 X_0 - \betatrue) (Y_0 X_0^\top - \betatrueT) (\hat\beta'_{\lambda_n} - \hat\beta_{\lambda_n} - \frac{V}{n}) (\hat\beta'_{\lambda_n} - \hat\beta_{\lambda_n} - \frac{V}{n})^\top)]\\
&= 4 \tr(\E[(Y_0 X_0 - \betatrue) (Y_0 X_0^\top - \betatrueT) (\hat\beta'_{\lambda_n} - \hat\beta_{\lambda_n} - \frac{V}{n}) (\hat\beta'_{\lambda_n} - \hat\beta_{\lambda_n} - \frac{V}{n})^\top])\\
&= 4 \tr(\E[(Y_0 X_0 - \betatrue) (Y_0 X_0^\top - \betatrueT)] \E[(\hat\beta'_{\lambda_n} - \hat\beta_{\lambda_n} - \frac{V}{n}) (\hat\beta'_{\lambda_n} - \hat\beta_{\lambda_n} - \frac{V}{n})^\top])
\end{talign}
as $\hat\beta'_{\lambda_n} - \hat\beta_{\lambda_n} - \frac{V}{n}$ is a function of the training points and using independence of $Z_0$ from the training points.

By Cauchy--Schwarz, for all $i, j$,
\begin{talign}
\E[|(\hat\beta'_{\lambda_n, i} - \hat\beta_{\lambda_n, i} - \frac{V_i}{n}) (\hat\beta'_{\lambda_n, j} - \hat\beta_{\lambda_n, j} - \frac{V_j}{n})|]
&\leq \sqrt{\E[(\hat\beta'_{\lambda_n, i} - \hat\beta_{\lambda_n, i} - \frac{V_i}{n})^2] \E[(\hat\beta'_{\lambda_n, j} - \hat\beta_{\lambda_n, j} - \frac{V_j}{n})^2]}
\end{talign}
thus, if we show $\E[(\hat\beta'_{\lambda_n, i} - \hat\beta_{\lambda_n, i} - \frac{V_i}{n})^2] \rightarrow 0$ for all $i$, then we obtain
\begin{talign}
\E[(\hat\beta'_{\lambda_n} - \hat\beta_{\lambda_n} - \frac{V}{n}) (\hat\beta'_{\lambda_n} - \hat\beta_{\lambda_n} - \frac{V}{n})^\top] \rightarrow 0
\end{talign}
and therefore $\E[D_{n,1}^2] \rightarrow 0$. We are going to hold off on proving $\E[(\hat\beta'_{\lambda_n, i} - \hat\beta_{\lambda_n, i} - \frac{V_i}{n})^2] \rightarrow 0$ as we will actually show the stronger convergence $\E[(\hat\beta'_{\lambda_n, i} - \hat\beta_{\lambda_n, i} - \frac{V_i}{n})^4] \rightarrow 0$ in the context of proving $\E[D_{n,2}^2] \rightarrow 0$.

With similar computations and upper-bounding, we can show that $\E[S_{n,1}^2] = O(1)$ if we prove that for all $i$, $\E[(\hat\beta'_{\lambda_n, i} - \hat\beta_{\lambda_n, i} + \frac{V_i}{n})^2] = O(1)$.

As we have shown in \cref{sec:rate-sigma-single-algo-stlasso} that the soft-thresholding Lasso estimator is consistent for $\betatrue$ in $L^2$ when $\lambda_n = o(n)$, both $\E[\hat\beta_{\lambda_n, i}^2]$ and $\E[\hat\beta_{\lambda_n, i}^{'2}]$ are bounded and thus $\E[(\hat\beta'_{\lambda_n, i} - \hat\beta_{\lambda_n, i} + \frac{V_i}{n})^2] = O(1)$ since $(\hat\beta'_{\lambda_n, i} - \hat\beta_{\lambda_n, i} + \frac{V_i}{n})^2 \leq 3 (\hat\beta_{\lambda_n, i}^{'2} + \hat\beta_{\lambda_n, i}^2 + \frac{V_i^2}{n^2})$ by Cauchy--Schwarz.

We now focus on proving $\E[D_{n,2}^2] \rightarrow 0$.

We have
\begin{talign}
D_{n, 2}
&= \tr((X_0 X_0^\top - \mathbf{I}) (\hat\beta_{\lambda_n} \hat\beta_{\lambda_n}^\top  - \hat\beta'_{\lambda_n} \hat\beta_{\lambda_n}^{'\top} - \frac{M}{n}))\\
&= X_0^\top (\hat\beta_{\lambda_n} \hat\beta_{\lambda_n}^\top  - \hat\beta'_{\lambda_n} \hat\beta_{\lambda_n}^{'\top} - \frac{M}{n}) X_0 - \tr(\hat\beta_{\lambda_n} \hat\beta_{\lambda_n}^\top  - \hat\beta'_{\lambda_n} \hat\beta_{\lambda_n}^{'\top} - \frac{M}{n})\\
&= \sum_{i, j} (X_{0,i} X_{0,j} - \indic{i = j}) (\hat\beta_{\lambda_n, i} \hat\beta_{\lambda_n, j} - \hat\beta'_{\lambda_n, i} \hat\beta'_{\lambda_n, j} - \frac{M_{i,j}}{n})\\
&= \sum_{i, j} U_{i, j} (\hat\beta_{\lambda_n, i} \hat\beta_{\lambda_n, j} - \hat\beta'_{\lambda_n, i} \hat\beta'_{\lambda_n, j} - \frac{M_{i,j}}{n})
\end{talign}
where $U_{i, j} \defeq X_{0,i} X_{0,j} - \indic{i = j}$, and thus
\begin{talign}
D_{n, 2}^2
&= \sum_{i, j, k, l} U_{i, j} U_{k, l} (\hat\beta_{\lambda_n, i} \hat\beta_{\lambda_n, j} - \hat\beta'_{\lambda_n, i} \hat\beta'_{\lambda_n, j} - \frac{M_{i,j}}{n}) (\hat\beta_{\lambda_n, k} \hat\beta_{\lambda_n, l} - \hat\beta'_{\lambda_n, k} \hat\beta'_{\lambda_n, l} - \frac{M_{k,l}}{n}).
\end{talign}

Using independence of $Z_0$ and the training points, we have
\begin{talign}
\E[D_{n, 2}^2]
&= \sum_{i, j, k, l} \E[U_{i, j} U_{k, l}] \E[(\hat\beta_{\lambda_n, i} \hat\beta_{\lambda_n, j} - \hat\beta'_{\lambda_n, i} \hat\beta'_{\lambda_n, j} - \frac{M_{i,j}}{n}) (\hat\beta_{\lambda_n, k} \hat\beta_{\lambda_n, l} - \hat\beta'_{\lambda_n, k} \hat\beta'_{\lambda_n, l} - \frac{M_{k,l}}{n})]
\end{talign}
where, using Cauchy--Schwarz,
\begin{talign}
&\hspace{1mm}\E[|(\hat\beta_{\lambda_n, i} \hat\beta_{\lambda_n, j} - \hat\beta'_{\lambda_n, i} \hat\beta'_{\lambda_n, j} - \frac{M_{i,j}}{n}) (\hat\beta_{\lambda_n, k} \hat\beta_{\lambda_n, l} - \hat\beta'_{\lambda_n, k} \hat\beta'_{\lambda_n, l} - \frac{M_{k,l}}{n})|]\\
&\leq \sqrt{\E[(\hat\beta_{\lambda_n, i} \hat\beta_{\lambda_n, j} - \hat\beta'_{\lambda_n, i} \hat\beta'_{\lambda_n, j} - \frac{M_{i,j}}{n})^2] \E[(\hat\beta_{\lambda_n, k} \hat\beta_{\lambda_n, l} - \hat\beta'_{\lambda_n, k} \hat\beta'_{\lambda_n, l} - \frac{M_{k,l}}{n})^2]}.
\end{talign}

We thus want to show $\E[(\hat\beta_{\lambda_n, i} \hat\beta_{\lambda_n, j} - \hat\beta'_{\lambda_n, i} \hat\beta'_{\lambda_n, j} - \frac{M_{i,j}}{n})^2] \rightarrow 0$ for all $i, j$.

Since $M = - (V \betatrueT + \betatrue V^\top)$, we have $M_{i,j} = - V_i \betatrue_j - \betatrue_i V_j$ and then
\begin{talign}
&\hat\beta_{\lambda_n, i} \hat\beta_{\lambda_n, j} - \hat\beta'_{\lambda_n, i} \hat\beta'_{\lambda_n, j} - \frac{M_{i,j}}{n}\\
&= \hat\beta_{\lambda_n, i} \hat\beta_{\lambda_n, j} - \hat\beta'_{\lambda_n, i} \hat\beta'_{\lambda_n, j} + \frac{V_i}{n} \betatrue_j + \betatrue_i \frac{V_j}{n}\\
&= -(\hat\beta'_{\lambda_n, i} - \hat\beta_{\lambda_n, i} - \frac{V_i}{n}) \hat\beta_{\lambda_n, j} - \hat\beta'_{\lambda_n, i} (\hat\beta'_{\lambda_n, j} - \hat\beta_{\lambda_n, j} - \frac{V_j}{n}) - \frac{V_i}{n} (\hat\beta_{\lambda_n, j} - \betatrue_j) - (\hat\beta'_{\lambda_n, i} - \betatrue_i) \frac{V_j}{n}.
\end{talign}

By Cauchy--Schwarz,
\begin{talign}
&(\hat\beta_{\lambda_n, i} \hat\beta_{\lambda_n, j} - \hat\beta'_{\lambda_n, i} \hat\beta'_{\lambda_n, j} - \frac{M_{i,j}}{n})^2\\
&= ((\hat\beta'_{\lambda_n, i} - \hat\beta_{\lambda_n, i} - \frac{V_i}{n}) \hat\beta_{\lambda_n, j} + \hat\beta'_{\lambda_n, i} (\hat\beta'_{\lambda_n, j} - \hat\beta_{\lambda_n, j} - \frac{V_j}{n}) + \frac{V_i}{n} (\hat\beta_{\lambda_n, j} - \betatrue_j) + (\hat\beta'_{\lambda_n, i} - \betatrue_i) \frac{V_j}{n})^2\\
&\leq 4 ((\hat\beta'_{\lambda_n, i} - \hat\beta_{\lambda_n, i} - \frac{V_i}{n})^2 \hat\beta_{\lambda_n, j}^2 + \hat\beta_{\lambda_n, i}^{'2} (\hat\beta'_{\lambda_n, j} - \hat\beta_{\lambda_n, j} - \frac{V_j}{n})^2 + \frac{V_i^2}{n^2} (\hat\beta_{\lambda_n, j} - \betatrue_j)^2 + (\hat\beta'_{\lambda_n, i} - \betatrue_i)^2 \frac{V_j^2}{n^2})
\end{talign}
and the probability version of Cauchy--Schwarz yields
\begin{talign}
&\E[(\hat\beta_{\lambda_n, i} \hat\beta_{\lambda_n, j} - \hat\beta'_{\lambda_n, i} \hat\beta'_{\lambda_n, j} - \frac{M_{i,j}}{n})^2]\\
&\leq 4 (\sqrt{\E[(\hat\beta'_{\lambda_n, i} - \hat\beta_{\lambda_n, i} - \frac{V_i}{n})^4] \E[\hat\beta_{\lambda_n, j}^4]} + \sqrt{\E[\hat\beta_{\lambda_n, i}^{'4}] \E[(\hat\beta'_{\lambda_n, j} - \hat\beta_{\lambda_n, j} - \frac{V_j}{n})^4]}\\
&\hspace{8mm} + \sqrt{\frac{\E[V_i^4]}{n^4} \E[(\hat\beta_{\lambda_n, j} - \betatrue_j)^4]} + \sqrt{\E[(\hat\beta'_{\lambda_n, i} - \betatrue_i)^4] \frac{\E[V_j^4]}{n^4}}).
\end{talign}
Hence, we will get $\E[D_{n,2}^2] \rightarrow 0$ if we prove that for all $i$
\begin{itemize}
\item $\E[(\hat\beta_{\lambda_n, i} - \betatrue_i)^4] \rightarrow 0$, the proof will be the same for $\E[(\hat\beta'_{\lambda_n, i} - \betatrue_i)^4] \rightarrow 0$,
\item $\E[(\hat\beta'_{\lambda_n, i} - \hat\beta_{\lambda_n, i} - \frac{V_i}{n})^4] \rightarrow 0$.
\end{itemize}

Note that we will automatically get $L^2$ convergence of $\hat\beta'_{\lambda_n, i} - \hat\beta_{\lambda_n, i} - \frac{V_i}{n}$ to 0 for all $i$, which implies $\E[D_{n,1}^2] \rightarrow 0$ as mentioned earlier.

We now introduce a lemma that will allow us to upper-bound quantities of interest.

\begin{lemma}[H\"{o}lder corollary]
\label{holder-lemma}
For integers $k, \ell \geq 2$, for $(a_1, \hdots, a_k) \in \R^k$, we have the following inequality
\begin{talign}
(\sum_{i=1}^k |a_i|)^\ell \leq k^{\ell-1} \sum_{i=1}^k |a_i|^\ell.
\end{talign}
\end{lemma}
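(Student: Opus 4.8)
The plan is to derive this directly from H\"older's inequality, which matches the lemma's name. The key idea is to write each summand $|a_i|$ as a product $|a_i|\cdot 1$ and apply H\"older with the conjugate exponents $\ell$ and $\ell/(\ell-1)$ to the vectors $(|a_1|,\dots,|a_k|)$ and $(1,\dots,1)$. Since $\ell \geq 2$, these exponents are both well-defined and conjugate, so no edge cases arise.

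Concretely, the first step is to invoke H\"older to obtain
\[
\sum_{i=1}^k |a_i|\cdot 1 \leq \left(\sum_{i=1}^k |a_i|^\ell\right)^{1/\ell}\left(\sum_{i=1}^k 1\right)^{(\ell-1)/\ell} = k^{(\ell-1)/\ell}\left(\sum_{i=1}^k |a_i|^\ell\right)^{1/\ell}.
\]
The second step is to raise both sides to the $\ell$-th power; both sides are nonnegative so the inequality direction is preserved, and the exponent $k^{(\ell-1)/\ell}$ becomes $k^{\ell-1}$, yielding exactly
\[
\left(\sum_{i=1}^k |a_i|\right)^\ell \leq k^{\ell-1}\sum_{i=1}^k |a_i|^\ell,
\]
which is the claim. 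As an equivalent route, I could instead appeal to convexity of $t\mapsto t^\ell$ on $[0,\infty)$ (valid since $\ell\geq 2>1$): Jensen's inequality applied to the uniform average gives $\bigl(\frac{1}{k}\sum_i |a_i|\bigr)^\ell \leq \frac{1}{k}\sum_i |a_i|^\ell$, and multiplying through by $k^\ell$ produces the same bound.

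There is no genuine obstacle here, as this is a standard finite-dimensional power-mean inequality; the only point requiring care is the bookkeeping of the conjugate exponent $\ell/(\ell-1)$ and verifying that the accumulated factor $k^{(\ell-1)/\ell}$ collapses to $k^{\ell-1}$ upon the final exponentiation. The hypotheses are comfortably sufficient: the argument in fact works for any real $\ell\geq 1$ and any $k\geq 1$, so the stated restriction to integers $k,\ell\geq 2$ leaves ample slack.
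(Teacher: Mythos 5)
Your proof is correct and follows exactly the paper's route: both apply H\"older's inequality with $x_i = a_i$, $y_i = 1$, and conjugate exponents $p = \ell$, $q = \ell/(\ell-1)$, then raise to the $\ell$-th power. Your additional Jensen/convexity remark is a valid alternative, but the main argument is the same as the paper's.
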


\begin{proof}
For $(x_1, \hdots, x_k), (y_1, \hdots, y_k) \in \R^k$ and $p, q \in (1, +\infty)$ such that $\frac{1}{p} + \frac{1}{q} = 1$, H\"{o}lder's inequality gives us
\begin{talign}
\sum_{i=1}^k |x_i y_i| \leq (\sum_{i=1}^k |x_i|^p)^\frac{1}{p} (\sum_{i=1}^k |y_i|^q)^\frac{1}{q}
\end{talign}
and therefore the lemma is an application of it with $x_i = a_i, y_i = 1, p = \ell$.
\end{proof}

Combining \cref{holder-lemma} for $\ell = 4$ with similar computations and upper-bounding as above, we can show that $\E[S_{n,2}^2]$ is bounded if for all $i$, $\E[\hat\beta_{\lambda_n, i}^4]$ and $\E[\hat\beta_{\lambda_n, i}^{'4}]$ are bounded, which automatically comes from the $L^4$ convergence of the soft-thresholding Lasso estimator to $\betatrue$ needed for $\E[D_{n,2}^2] \rightarrow 0$.

We start by showing $\E[(\hat\beta_{\lambda_n, i} - \betatrue_j)^4] \rightarrow 0$.

As a reminder, we have
\[
\hat\beta_{\lambda_n, i} = \text{sign}(\olsi)
\begin{cases}
|\olsi| - \frac{\lambda_n}{n}  &  \text{if } |\olsi| \geq \frac{\lambda_n}{n}\\
0 & \text{if } |\olsi| < \frac{\lambda_n}{n}\\
\end{cases}
\]
thus, using $(a + b)^4 \leq 8 (a^4 + b^4)$, which is an application of \cref{holder-lemma} for $\ell = 4$,
\begin{talign}
&\hspace{1mm}\E[(\hat\beta_{\lambda_n, i} - \betatrue_i)^4 \mid \mathbf{X}]\\
&= \E[(\olsi - \betatrue_i - \frac{\lambda_n}{n})^4 \mid \olsi \geq \frac{\lambda_n}{n}, \mathbf{X}] \P(\olsi \geq \frac{\lambda_n}{n} \mid \mathbf{X})\\
&\hspace{4mm} + \E[(\olsi - \betatrue_i + \frac{\lambda_n}{n})^4 \mid \olsi \leq -\frac{\lambda_n}{n}, \mathbf{X}] \P(\olsi \leq -\frac{\lambda_n}{n} \mid \mathbf{X})\\
&\leq 8 (\E[(\olsi - \betatrue_i)^4 \mid \olsi \geq \frac{\lambda_n}{n}, \mathbf{X}] + \frac{\lambda_n^4}{n^4}) \P(\olsi \geq \frac{\lambda_n}{n} \mid \mathbf{X})\\
&\hspace{4mm} + 8 (\E[(\olsi - \betatrue_i)^4 \mid \olsi \leq -\frac{\lambda_n}{n}, \mathbf{X}] + \frac{\lambda_n^4}{n^4}) \P(\olsi \leq -\frac{\lambda_n}{n} \mid \mathbf{X}).
\end{talign}

Since $\ols  \mid \mathbf{X} \sim \N(\betatrue, \tau^2 (\mathbf{X}^\top \mathbf{X})^{-1})$, we can write $\olsi = \betatrue_i + \tilde\tau_n Z$ where $\tilde\tau_n = \frac{\tau}{\sqrt{n}} \sqrt{(\frac{\mathbf{X}^\top \mathbf{X}}{n})_{i,i}^{-1}}$ and $Z \mid \mathbf{X} \sim \N(0, 1)$. Note that we could have $i$ as a subscript of $\tilde\tau_n$ and $Z$, but we will only consider one $i$ at a time in our computations and we can thus omit this subscript for both of them for the sake of notational simplicity, and we will also omit it for some additional notation we define in the rest of the proof.

Define $\alpha_n^{(1)} = \frac{1}{\tilde\tau_n} (\frac{\lambda_n}{n} - \betatrue_i)$ and $\alpha_n^{(2)} = \frac{1}{\tilde\tau_n} (\frac{\lambda_n}{n} + \betatrue_i)$.

Using the fourth moment derived in \cref{lemma:trunc-norm-moments}, we have
\begin{talign}
&\hspace{1mm} \E[(\olsi - \betatrue_i)^4 \mid \olsi \geq \frac{\lambda_n}{n}, \mathbf{X}]\\
&= \E[(\tilde\tau_n Z)^4 \mid Z \geq \alpha_n^{(1)}, \mathbf{X}]\\
&= \tilde\tau_n^4 \, \E[Z^4 \mid Z \geq \alpha_n^{(1)}, \mathbf{X}]\\
&= \tilde\tau_n^4 (3 + \frac{((\alpha_n^{(1)})^3 + 3 \alpha_n^{(1)}) \varphi(\alpha_n^{(1)})}{1 - \Phi(\alpha_n^{(1)})})
\end{talign}
and
\begin{talign}
&\hspace{1mm}\E[(\olsi - \betatrue_i)^4 \mid \olsi \leq -\frac{\lambda_n}{n}, \mathbf{X}]\\
&= \tilde\tau_n^4 \, \E[Z^4 \mid Z \leq -\alpha_n^{(2)}, \mathbf{X}]\\
&= \tilde\tau_n^4 (3 + \frac{((\alpha_n^{(2)})^3 + \alpha_n^{(2)}) \varphi(-\alpha_n^{(2)})}{\Phi(-\alpha_n^{(2)})}).
\end{talign}

Since $ \P(\olsi \geq \frac{\lambda_n}{n} \mid \mathbf{X}) = 1 - \Phi(\alpha_n^{(1)})$ and $\P(\olsi \leq -\frac{\lambda_n}{n} \mid \mathbf{X}) = \Phi(-\alpha_n^{(2)})$,
\begin{talign}
&\hspace{1mm}\E[(\hat\beta_{\lambda_n, i} - \betatrue_i)^4 \mid \mathbf{X}]\\
&\leq 8 (\E[(\olsi - \betatrue_i)^4 \mid \olsi \geq \frac{\lambda_n}{n}, \mathbf{X}] + \frac{\lambda_n^4}{n^4}) \P(\olsi \geq \frac{\lambda_n}{n} \mid \mathbf{X})\\
&\hspace{4mm} + 8 (\E[(\olsi - \betatrue_i)^4 \mid \olsi \leq -\frac{\lambda_n}{n}, \mathbf{X}] + \frac{\lambda_n^4}{n^4}) \P(\olsi \leq -\frac{\lambda_n}{n} \mid \mathbf{X})\\
&= 8 (3 \tilde\tau_n^4 (1 - \Phi(\alpha_n^{(1)})) + \tilde\tau_n^4 ((\alpha_n^{(1)})^3 + 3 \alpha_n^{(1)}) \varphi(\alpha_n^{(1)}) + \frac{\lambda_n^4}{n^4} (1 - \Phi(\alpha_n^{(1)})))\\
&\hspace{4mm} + 8 (3 \tilde\tau_n^4 \Phi(-\alpha_n^{(2)}) + \tilde\tau_n^4 ((\alpha_n^{(2)})^3 + \alpha_n^{(2)}) \varphi(-\alpha_n^{(2)}) + \frac{\lambda_n^4}{n^4} \Phi(-\alpha_n^{(2)})).
\end{talign}

For $X_i \distiid \N(0, \mathbf{I})$, we know $(\mathbf{X}^\top \mathbf{X})^{-1} \sim W_p^{-1}(\mathbf{I}, n)$ and then the diagonal element $(\mathbf{X}^\top \mathbf{X})_{i, i}^{-1}$ follows an inverse gamma distribution with shape parameter $\frac{n-p+1}{2}$ and scale parameter $\frac{1}{2}$. Therefore, $\E[((\mathbf{X}^\top \mathbf{X})_{i, i}^{-1})^2] = \frac{1}{(n-p-1)(n-p-3)}$ and $\E[((\frac{\mathbf{X}^\top \mathbf{X}}{n})_{i, i}^{-1})^2] = \frac{n^2}{(n-p-1)(n-p-3)} = o(n^2)$.

As a reminder, $\tilde\tau_n = \frac{\tau}{\sqrt{n}} \sqrt{(\frac{\mathbf{X}^\top \mathbf{X}}{n})_{i,i}^{-1}}$. We then have $L^1$ convergence of $\tilde\tau_n^4$ to 0. As previously mentioned, the function $x \mapsto x \varphi(x)$ is bounded. Similarly, a straightforward study of the behavior of the function $x \mapsto x^3 \varphi(x)$ shows it is bounded. Since $\Phi$ is also bounded, and $\lambda_n = o(n)$, then
\[
\E[(\hat\beta_{\lambda_n, i} - \betatrue_i)^4] = \E[\E[(\hat\beta_{\lambda_n, i} - \betatrue_i)^4 \mid \mathbf{X}]] \rightarrow 0.
\]

We now show that $\E[(\hat\beta'_{\lambda_n, i} - \hat\beta_{\lambda_n, i} - \frac{V_i}{n})^4] \rightarrow 0$.

We have
\begin{talign}
&\hspace{1mm}\hat\beta'_{\lambda_n, i} - \hat\beta_{\lambda_n, i}\\
&= \text{sign}(\olsi') (|\olsi'| - \frac{\lambda_n}{n})_{+} - \text{sign}(\olsi) (|\olsi| - \frac{\lambda_n}{n})_{+}\\
&=
\text{sign}(\olsi')
\begin{cases}
|\olsi'| - \frac{\lambda_n}{n}  &  \text{if } |\olsi'| \geq \frac{\lambda_n}{n}\\
0 & \text{if } |\olsi'| < \frac{\lambda_n}{n}\\
\end{cases}
-\text{sign}(\olsi)
\begin{cases}
|\olsi| - \frac{\lambda_n}{n} &  \text{if } |\olsi| \geq \frac{\lambda_n}{n}\\
0 & \text{if } |\olsi| < \frac{\lambda_n}{n}\\
\end{cases}\\
&=
\begin{cases}
\olsi' - \frac{\lambda_n}{n}  &  \text{if } \olsi' \geq \frac{\lambda_n}{n}\\
\olsi' + \frac{\lambda_n}{n}  &  \text{if } \olsi' \leq -\frac{\lambda_n}{n}\\
0 & \text{if } |\olsi'| < \frac{\lambda_n}{n}\\
\end{cases}
-
\begin{cases}
\olsi - \frac{\lambda_n}{n}  &  \text{if } \olsi \geq \frac{\lambda_n}{n}\\
\olsi + \frac{\lambda_n}{n}  &  \text{if } \olsi \leq -\frac{\lambda_n}{n}\\
0 & \text{if } |\olsi| < \frac{\lambda_n}{n}\\
\end{cases}.
\end{talign}

As an intermediate step, we need to show $\hat\beta'_{\text{OLS}} - \ols  - \frac{V}{n} \toas 0$.

Let $\mathbf{\tilde X} \defeq (X_2, \hdots, X_n)^\top$ be the matrix of regressors for the training points except for the first one that is being changed.

We have
\begin{talign}
&\hat\beta'_{\text{OLS}} - \ols \\
&= (\mathbf{X^{\prime\top}} \mathbf{X'})^{-1} \mathbf{X'}^\top \mathbf{Y'} - (\mathbf{X}^\top \mathbf{X})^{-1} \mathbf{X}^\top \mathbf{Y}\\
&= (\mathbf{\tilde X}^\top \mathbf{\tilde X} + X'_1 X_1^{\prime\top})^{-1} (\mathbf{\tilde X}^\top \mathbf{\tilde Y} + Y'_1 X'_1) - (\mathbf{\tilde X}^\top \mathbf{\tilde X} + X_1 X_1^\top)^{-1} (\mathbf{\tilde X}^\top \mathbf{\tilde Y} + Y_1 X_1)\\
&= [(\mathbf{\tilde X}^\top \mathbf{\tilde X} + X'_1 X_1^{\prime\top})^{-1} - (\mathbf{\tilde X}^\top \mathbf{\tilde X} + X_1 X_1^\top)^{-1}] \mathbf{\tilde X}^\top \mathbf{\tilde Y}\\
&\hspace{4mm} + (\mathbf{\tilde X}^\top \mathbf{\tilde X} + X'_1 X_1^{\prime\top})^{-1} Y'_1 X'_1 - (\mathbf{\tilde X}^\top \mathbf{\tilde X} + X_1 X_1^\top)^{-1} Y_1 X_1.
\end{talign}

Using the Sherman--Morrison--Woodbury formula,
\begin{talign}
(\mathbf{\tilde X}^\top \mathbf{\tilde X} + X_1 X_1^\top)^{-1}
&= (\mathbf{\tilde X}^\top \mathbf{\tilde X})^{-1} - (\mathbf{\tilde X}^\top \mathbf{\tilde X})^{-1} X_1 (\mathbf{I} + X_1^\top (\mathbf{\tilde X}^\top \mathbf{\tilde X})^{-1} X_1)^{-1} X_1^\top (\mathbf{\tilde X}^\top \mathbf{\tilde X})^{-1}\\
&= \frac{1}{n} (\frac{\mathbf{\tilde X}^\top \mathbf{\tilde X}}{n})^{-1} - \frac{1}{n^2} (\frac{\mathbf{\tilde X}^\top \mathbf{\tilde X}}{n})^{-1} X_1 (\mathbf{I} + \frac{1}{n} X_1^\top (\frac{\mathbf{\tilde X}^\top \mathbf{\tilde X}}{n})^{-1} X_1)^{-1} X_1^\top (\frac{\mathbf{\tilde X}^\top \mathbf{\tilde X}}{n})^{-1}\\
&= \frac{1}{n} A_n - \frac{1}{n^2} B_n
\end{talign}
where, by the strong law of large numbers,
\begin{itemize}
\item $A_n \defeq (\frac{\mathbf{\tilde X}^\top \mathbf{\tilde X}}{n})^{-1} \toas \E[X_0 X_0^\top]^{-1} = \mathbf{I}$,
\item $B_n \defeq (\frac{\mathbf{\tilde X}^\top \mathbf{\tilde X}}{n})^{-1} X_1 (\mathbf{I} + \frac{1}{n} X_1^\top (\frac{\mathbf{\tilde X}^\top \mathbf{\tilde X}}{n})^{-1} X_1)^{-1} X_1^\top (\frac{\mathbf{\tilde X}^\top \mathbf{\tilde X}}{n})^{-1} \toas X_1 X_1^\top$.
\end{itemize}

Similarly,
\begin{talign}
(\mathbf{\tilde X}^\top \mathbf{\tilde X} + X'_1 X_1^{\prime\top})^{-1}
&= \frac{1}{n} A_n - \frac{1}{n^2} B'_n
\end{talign}
with
\begin{talign}
B'_n \defeq (\frac{\mathbf{\tilde X}^\top \mathbf{\tilde X}}{n})^{-1} X'_1 (\mathbf{I} + \frac{1}{n} X_1^{\prime\top} (\frac{\mathbf{\tilde X}^\top \mathbf{\tilde X}}{n})^{-1} X'_1)^{-1} X_1^{\prime\top} (\frac{\mathbf{\tilde X}^\top \mathbf{\tilde X}}{n})^{-1} \toas X'_1 X_1^{\prime\top}.
\end{talign}

Then
\begin{talign}
\hat\beta'_{\text{OLS}} - \ols 
&= \frac{1}{n^2} (B_n - B'_n) \mathbf{\tilde X}^\top \mathbf{\tilde Y} + (\frac{1}{n} A_n - \frac{1}{n^2} B'_n) Y'_1 X'_1 - (\frac{1}{n} A_n - \frac{1}{n^2} B_n) Y_1 X_1\\
&= \frac{1}{n} (B_n - B'_n) \frac{\mathbf{\tilde X}^\top \mathbf{\tilde Y}}{n} + (\frac{1}{n} A_n - \frac{1}{n^2} B'_n) Y'_1 X'_1 - (\frac{1}{n} A_n - \frac{1}{n^2} B_n) Y_1 X_1
\end{talign}
where $\frac{\mathbf{\tilde X}^\top \mathbf{\tilde Y}}{n} \toas \E[Y_0 X_0] = \betatrue$, by the strong law of large numbers.

Therefore,
\begin{talign}
n (\hat\beta'_{\text{OLS}} - \ols )
&\toas (X_1 X_1^\top - X'_1 X_1^{\prime\top}) \betatrue + Y'_1 X'_1 - Y_1 X_1\\
&\hspace{3mm} = (Y'_1 - X_1^{\prime\top} \betatrue) X'_1 - (Y_1 - X_1^\top \betatrue) X_1\\
&\hspace{3mm} = V.
\end{talign}

We can write
\begin{talign}
(\hat\beta'_{\lambda_n, i} - \hat\beta_{\lambda_n, i} - \frac{V_i}{n})^4
&= (\olsi' - \olsi - \frac{V_i}{n})^4 \,\indic{\olsi \geq \frac{\lambda_n}{n}, \olsi' \geq \frac{\lambda_n}{n}}\\
&\hspace{4mm} + (\olsi' - \olsi - \frac{V_i}{n})^4 \,\indic{\olsi \leq -\frac{\lambda_n}{n}, \olsi' \leq -\frac{\lambda_n}{n}}\\
&\hspace{4mm} + (\olsi' - \olsi - 2 \frac{\lambda_n}{n} - \frac{V_i}{n})^4 \,\indic{\olsi \leq -\frac{\lambda_n}{n}, \olsi' \geq \frac{\lambda_n}{n}}\\
&\hspace{4mm} + (\olsi' - \olsi + 2 \frac{\lambda_n}{n} - \frac{V_i}{n})^4 \,\indic{\olsi \geq \frac{\lambda_n}{n}, \olsi' \leq -\frac{\lambda_n}{n}}\\
&\hspace{4mm} + (\olsi' - \frac{\lambda_n}{n} - \frac{V_i}{n})^4 \,\indic{|\olsi| < \frac{\lambda_n}{n}, \olsi' \geq \frac{\lambda_n}{n}}\\
&\hspace{4mm} + (\olsi' + \frac{\lambda_n}{n} - \frac{V_i}{n})^4 \,\indic{|\olsi| < \frac{\lambda_n}{n}, \olsi' \leq -\frac{\lambda_n}{n}}\\
&\hspace{4mm} + (\olsi - \frac{\lambda_n}{n} + \frac{V_i}{n})^4 \,\indic{\olsi \geq \frac{\lambda_n}{n}, |\olsi'| < \frac{\lambda_n}{n}}\\
&\hspace{4mm} + (\olsi + \frac{\lambda_n}{n} + \frac{V_i}{n})^4 \,\indic{\olsi \leq -\frac{\lambda_n}{n}, |\olsi'| < \frac{\lambda_n}{n}}\\
&\hspace{4mm} + (\frac{V_i}{n})^4 \,\indic{|\olsi| < \frac{\lambda_n}{n}, |\olsi'| < \frac{\lambda_n}{n}}
\end{talign}
and we have a similar expression for $(\hat\beta'_{\lambda_n, i} - \hat\beta_{\lambda_n, i} - \frac{V_i}{n})^6$ with terms taken to the sixth power.

Since $\ols  \mid \mathbf{X} \sim \N(\betatrue, \tau^2 (\mathbf{X}^\top \mathbf{X})^{-1})$ and we can bound the central moments of a Normal with the powers of its variance, there exists $C > 0$ such that $\E[(\olsi - \betatrue_i)^6 \mid \mathbf{X}] \leq C (\tau^2 (\mathbf{X}^\top \mathbf{X})_{i, i}^{-1})^3 = C \tau^6 ((\mathbf{X}^\top \mathbf{X})_{i, i}^{-1})^3$.

For $X_i \distiid \N(0, \mathbf{I})$, we know $(\mathbf{X}^\top \mathbf{X})^{-1} \sim W_p^{-1}(\mathbf{I}, n)$ and then the diagonal element $(\mathbf{X}^\top \mathbf{X})_{i, i}^{-1}$ follows an inverse gamma distribution with shape parameter $\frac{n-p+1}{2}$ and scale parameter $\frac{1}{2}$. Therefore, $\E[((\mathbf{X}^\top \mathbf{X})_{i, i}^{-1})^3] = \frac{1}{(n-p-1)(n-p-3)(n-p-5)}$, which means $\E[(\olsi - \betatrue_i)^6]$ and thus $\E[\olsi^6]$, by an application of \cref{holder-lemma} for $\ell = 6$, are bounded. Similarly, $\E[\olsi^{'6}]$ is bounded.

Consequently, since $\lambda_n = o(n)$ and $\E[\olsi^6]$ and $\E[\olsi^{'6}]$ are bounded, the almost sure convergence of the fourth moment turns into $L^1$ convergence to $0$.

Therefore,
\begin{talign}
\gamma(\hsing)
&\sim \frac{1}{n^2} \E[(2 (Y_0 X_0^\top - \betatrueT) V + \tr((X_0 X_0^\top - \mathbf{I}) M))^2]\\
&= \frac{1}{n^2} \E[(2 Y_0 X_0^\top V - 2 \betatrueT V - \tr((X_0 X_0^\top - \mathbf{I}) (V \betatrueT + \betatrue V^\top)))^2]\\
&= \frac{1}{n^2} \E[(2 Y_0 X_0^\top V - 2 X_0^\top \betatrue X_0^\top V)^2]\\
&= \frac{1}{n^2} \E[(2 (Y_0 - X_0^\top \betatrue) X_0^\top V)^2] \label{eq:C-value}
\end{talign}
where $V = (Y'_1 - X_1^{\prime\top} \betatrue) X'_1 - (Y_1 - X_1^\top \betatrue) X_1$.

\section{\texorpdfstring{\pcref{rel-instab-comp-lasso}}{Proof of Theorem~\protect\ref{rel-instab-comp-lasso}}}\label{proof:rel-instab-comp-lasso}
Instantiate the ST notation of \cref{rel-instab-comp-stlasso}, and 
define the shorthand
\begin{talign}
V 
    &\defeq
2(Y_0 X_0^\top - \betatrueT \E[X_0 X_0^\top])^\top,
    \quad
M
    \defeq
(X_0 X_0^\top - \E[X_0 X_0^\top]), \\ 
A
    &\defeq
V^\top \E[\lasso[\lam_n+\delta_n] - \lasso] 
    + 
\tr(M, \E[\lasso\lassoT - \lasso[\lam_n+\delta_n]\lassoT[\lam_n+\delta_n]]),
    \qtext{and} \\
B 
    &\defeq
V^\top \E[\st[\lam_n+\delta_n] - \st] 
    + 
\tr(M, \E[\st\stT - \st[\lam_n+\delta_n]\stT[\lam_n+\delta_n]]).
\end{talign}
We will establish the $\lsdiff$ upper bound in \cref{lsdiff-bound} and the $\lgdiff$ lower bound in \cref{lgdiff-bound}.

\subsection{$\lsdiff$ upper bound}\label{lsdiff-bound}
By \cref{eqs-comp} and Cauchy-Schwarz, we have 
\begin{talign}\label{lsdiff-sdiff}
&\lsdiff - \sdiff
    =
\E[A^2 - B^2]
    =
\E[(A-B)^2]+\E[(A-B)2B] \\
    &\leq 
\E[(A-B)^2]+2\sqrt{\E[(A-B)^2]\E[B^2]} 
    =  
\E[(A-B)^2] 
    +
2\sqrt{\E[(A-B)^2]\sdiff}.
\end{talign}
Meanwhile, Cauchy-Schwarz, the triangle inequality, and the definition of the operator norm imply
\begin{talign}
|A-B|
    &=
|V^\top \E[\lasso[\lam_n+\delta_n]-\st[\lam_n+\delta_n] + \st - \lasso] \\
    &+
\tr(M, \E[\lasso\lassoT - \st\stT +\lasso[\lam_n+\delta_n]\lassoT[\lam_n+\delta_n]-\st[\lam_n+\delta_n]\stT[\lam_n+\delta_n]]])| \\
    &\leq
\twonorm{V}\E[\twonorms{\lasso[\lam_n+\delta_n]-\st[\lam_n+\delta_n]} + \twonorms{\st - \lasso}] \\
    &+
\opnorm{M}\E[
\twonorms{\lasso-\st}(\twonorms{\lasso-\st}+ 2\twonorms{\st}) \\
    &\quad+   
\twonorms{\lasso[\lam_n+\delta_n]-\st[\lam_n+\delta_n]}(\twonorms{\lasso[\lam_n+\delta_n]-\st[\lam_n+\delta_n]}+2\twonorms{\st[\lam_n+\delta_n]})] 
\end{talign}
Now, since $\E[
\twonorms{\lasso-\st}^2] = O(\frac{\lam_n^2}{n^3})$ by \cref{lasso-st-close},  $\E[\twonorms{\st}^2]\leq \E[\twonorms{\ols}^2] = O(1)$ by \citep[Thm.~1]{afendras2016uniform}, and $\delta_n = O(\lam_n)$, we have $\E[(A-B)^2] = O(\frac{\lam_n^2+(\lam_n+\delta_n)^2}{n^{3}}) = O(\frac{\lam_n^2}{n^{3}})$. 
Therefore, since $\sdiff = O(\frac{1}{n^2})$ by \cref{rel-instab-comp-stlasso} and $\lam_n = O(\sqrt{n})$, we can conclude from inequality \cref{lsdiff-sdiff} that $\lsdiff = O(\frac{1}{n^2})$ as well.

\subsection{$\lgdiff$ lower bound}\label{lgdiff-bound}
Let $(A',B')$ be an exchangeable copy of $(A,B)$ in which the first datapoint $Z_1$ has been replaced by an \iid copy $Z'_1$.
Then, by the triangle inequality and exchangeability,
\begin{talign}\label{gdiff-lgdiff}
\sqrt{\lgdiff}
    -
\sqrt{\gdiff}
    &=
\sqrt{\E[(A-A')^2]}
    -
\sqrt{\E[(B-B')^2]} \\
    &\geq
-\sqrt{\E[(A-B)^2]}
-\sqrt{\E[(A'-B')^2]}
    =
-2\sqrt{\E[(A-B)^2]}.
\end{talign}
Since $\sqrt{\gdiff} =\Omega(\frac{1}{n^{5/4}})$ by \cref{rel-instab-comp-stlasso}, $\lam_n = O(\sqrt{n})$ by assumption, and $\sqrt{\E[(A-B)^2]} = O(\frac{\lam_n}{n^{3/2}}) = O(\frac{1}{n})$ by \cref{lsdiff-bound}, we also have  $\sqrt{\lgdiff} = \Omega(\frac{1}{n^{5/4}})$ by \cref{gdiff-lgdiff}.

\section{\texorpdfstring{\pcref{rel-stab-single-algo-lasso}}{Proof of Theorem~\protect\ref{rel-stab-single-algo-lasso}}}\label{proof:rel-stab-single-algo-lasso}
Instantiate the ST notation of \cref{rel-stab-single-algo-stlasso}, and 
define the shorthand
\begin{talign}
V 
    &\defeq
2(Y_0 X_0^\top - \betatrueT \E[X_0 X_0^\top])^\top,
    \quad
M
    \defeq
(X_0 X_0^\top - \E[X_0 X_0^\top]), \\ 
B
    &\defeq
-V^\top \E[\lasso] 
    + 
\tr(M, \E[\lasso\lassoT]),
    \qtext{and} \\
A 
    &\defeq
-V^\top \E[\st] 
    + 
\tr(M, \E[\st\stT]).
\end{talign}
We will establish the $\lssing$ lower bound in \cref{lssing-bound} and the $\lgsing$ upper bound in \cref{lgsing-bound}.

\subsection{$\lssing$ lower bound}\label{lssing-bound}
By \cref{eqs-single-algo} and Cauchy-Schwarz, we have 
\begin{talign}\label{ssing-lssing}
&\ssing - \lssing
    =
\E[A^2 - B^2]
    =
\E[(A-B)2A] - \E[(A-B)^2]\\
    &\leq 
2\sqrt{\E[(A-B)^2]\E[A^2]} 
   =
2\sqrt{\E[(A-B)^2]\ssing}.
\end{talign}
Meanwhile, Cauchy-Schwarz, the triangle inequality, and the definition of the operator norm imply
\begin{talign}
|A-B|
    &=
|V^\top \E[\st - \lasso] 
    +
\tr(M, \E[\lasso\lassoT - \st\stT])| \\
    &\leq
\twonorm{V} \E[\twonorms{\st - \lasso}] 
    +
\opnorm{M}\E[
\twonorms{\lasso-\st}(\twonorms{\lasso-\st}+ 2\twonorms{\st}).
\end{talign}
Now, since $\E[
\twonorms{\lasso-\st}^2] = O(\frac{\lam_n^2}{n^3})$ by \cref{lasso-st-close} and $\E[\twonorms{\st}^2]\leq \E[\twonorms{\ols}^2] = O(1)$ by \citep[Thm.~1]{afendras2016uniform}, we have $\E[(A-B)^2] = O(\frac{\lam_n^2}{n^{3}})$. 
Therefore, since $\ssing = \Theta(1)$ by \cref{rel-stab-single-algo-stlasso} and $\lam_n = o(n^{3/2})$, we can conclude from inequality \cref{ssing-lssing} that $\lssing = \Omega(1)$ as well.

\subsection{$\lgsing$ upper bound}\label{lgsing-bound}
Let $(A',B')$ be an exchangeable copy of $(A,B)$ in which the first datapoint $Z_1$ has been replaced by an \iid copy $Z'_1$.
Then, by \cref{eqs-single-algo}, Cauchy-Schwarz, and exchangeability, 
\begin{talign}\label{lgsing-gsing}
&\lgsing - \gsing 
    =
\E[(B-B')^2 - (A-A')^2] \\
    &=
\E[(B-A+A'-B')^2]
    +
\E[(B-A+A'-B')2(A-A')] \\
    &\leq 
4\E[(A-B)^2]
    +
4\sqrt{\E[(A-B)^2]\E[(A-A')^2]} 
    =  
4\E[(A-B)^2]
    +
4\sqrt{\E[(A-B)^2]\gsing}.
\end{talign}
Since $\gsing = O(\frac{1}{n^{2}})$ by \cref{rel-instab-comp-stlasso},  $\sqrt{\E[(A-B)^2]} = O(\frac{\lam_n}{n^{3/2}})$ by \cref{lssing-bound}, and $\lam_n = o(n)$, we conclude from inequality \cref{lgsing-gsing} that $\lgsing = O(\frac{\lam_n^2}{n^3}+\frac{\lam_n}{n^{7/2}}) = o(\frac{1}{n})$.

\section{Experimental Setup Details}
\label{sec:details-num-exp}

We provide additional details about the numerical experiments presented in \cref{sec:num-exp}.

In our simulations, we work with the following sample sizes for the full set size $\frac{n k}{k-1}$: 100, 1,000, 10,000, 100,000, which means $n$ takes the following values: 90, 900, 9,000, 90,000.

For simulations with the Lasso estimator, we used the implementation from $\texttt{scikit-learn}$. For the KDE plots, we called $\texttt{kdeplot}$ from the $\texttt{seaborn}$ library.

We perform 50,000 replications to sample from $\frac{\sqrt{\frac{n k}{k-1}}}{\sigma(h_n)} (\Rhat - \Rcondcv)$ and $\frac{\sqrt{\frac{n k}{k-1}}}{\hat\sigma_n(h_n)} (\Rhat - \Rcondcv)$. We ensured reproducibility by setting random seeds at the start of all replications.

Regarding the inner cross-validation used to determine $\lambda_n$ in each iteration of the outer cross-validation, we performed an adaptive grid search via $(k-1)$-fold cross-validation on the training set of size $n$, based on the initial split of the cross-validation on the full set of size $\frac{n k}{k-1}$. For the adaptive grid search scheme, we started with powers of 10, identified the best choice of penalization, subdivided around this choice with 10 values with an exponential scaling, and did so 3 additional times to identify the optimal penalization with precision.

We now introduce two lemmas that allow us to properly estimate $\sigma^2(h_n)$, $\gamma(h_n)$ and $\Rcondcv$.

\begin{lemma}[$\sigma^2(h_n)$ rewriting for Monte Carlo estimation]
\label{sigma-mc-rewriting}
\begin{talign}
\sigma^2(h_n)
&= \E[h_n(Z_0, \mathbf{Z}) (h_n(Z_0, \mathbf{\tilde Z}) - h_n(\tilde Z_0, \mathbf{\tilde Z}))]
\end{talign}
where $\tilde Z_0$ and $\mathbf{\tilde Z}$ are independent draws from the same distribution as $Z_0$ and $\mathbf{Z}$, respectively.
\end{lemma}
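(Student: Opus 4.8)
The plan is to start from the definition $\sigma^2(h_n) = \Var(\E[h_n(Z_0, \mathbf{Z}) \mid Z_0])$ and show that the claimed expectation reproduces this variance. Writing $g(Z_0) \defeq \E[h_n(Z_0, \mathbf{Z}) \mid Z_0]$, so that $\sigma^2(h_n) = \E[g(Z_0)^2] - \E[g(Z_0)]^2$, I would expand the right-hand side product into two terms,
\[
\E[h_n(Z_0, \mathbf{Z})\, h_n(Z_0, \mathbf{\tilde Z})] - \E[h_n(Z_0, \mathbf{Z})\, h_n(\tilde Z_0, \mathbf{\tilde Z})],
\]
and match each term to one piece of $\Var(g(Z_0))$.

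First I would treat $\E[h_n(Z_0, \mathbf{Z})\, h_n(Z_0, \mathbf{\tilde Z})]$. Here the same test point $Z_0$ enters both factors, but the two training sets $\mathbf{Z}$ and $\mathbf{\tilde Z}$ are independent copies, each independent of $Z_0$. Conditioning on $Z_0$ makes the two factors conditionally independent, so
\[
\E[h_n(Z_0, \mathbf{Z})\, h_n(Z_0, \mathbf{\tilde Z}) \mid Z_0] = \E[h_n(Z_0, \mathbf{Z}) \mid Z_0]\, \E[h_n(Z_0, \mathbf{\tilde Z}) \mid Z_0] = g(Z_0)^2,
\]
where I use that $\mathbf{\tilde Z} \eqdist \mathbf{Z}$ gives $\E[h_n(Z_0, \mathbf{\tilde Z}) \mid Z_0] = g(Z_0)$. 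Taking an outer expectation yields $\E[g(Z_0)^2]$.

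Next I would treat $\E[h_n(Z_0, \mathbf{Z})\, h_n(\tilde Z_0, \mathbf{\tilde Z})]$. Now the two factors depend on the fully independent pairs $(Z_0, \mathbf{Z})$ and $(\tilde Z_0, \mathbf{\tilde Z})$, so the expectation factorizes into $\E[h_n(Z_0, \mathbf{Z})]\, \E[h_n(\tilde Z_0, \mathbf{\tilde Z})] = \E[g(Z_0)]^2$, where the last equality uses the tower property together with the fact that $(\tilde Z_0, \mathbf{\tilde Z})$ has the same law as $(Z_0, \mathbf{Z})$. Subtracting the two contributions gives $\E[g(Z_0)^2] - \E[g(Z_0)]^2 = \Var(g(Z_0)) = \sigma^2(h_n)$, as claimed.

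The computation is short; the only thing requiring care is the bookkeeping of the independence structure — in particular, distinguishing the \emph{conditional} independence of $\mathbf{Z}$ and $\mathbf{\tilde Z}$ given $Z_0$ (used for the first term) from the \emph{unconditional} independence of the two pairs (used for the second term). This distinction is precisely why the subtracted term uses a fresh test point $\tilde Z_0$ rather than reusing $Z_0$: reusing $Z_0$ would again return $\E[g(Z_0)^2]$, whereas the fresh draw converts the second term into the subtractable constant $\E[g(Z_0)]^2$, producing the variance rather than a raw second moment.
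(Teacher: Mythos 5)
Your proof is correct and follows essentially the same route as the paper's: both expand $\Var(\E[h_n(Z_0,\mathbf{Z})\mid Z_0])$ into a second moment minus a squared mean, realize the second moment via conditional independence of $\mathbf{Z}$ and $\mathbf{\tilde Z}$ given $Z_0$, and realize the squared mean via the fully independent pair $(\tilde Z_0, \mathbf{\tilde Z})$. Your write-up simply makes explicit the independence bookkeeping that the paper leaves implicit in its chain of equalities.
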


\begin{proof}
\begin{talign}
\sigma^2(h_n)
&= \Var(\E[h_n(Z_0, \mathbf{Z}) \mid Z_0])\\
&= \E[\E[h_n(Z_0, \mathbf{Z}) \mid Z_0]^2] - \E[h_n(Z_0, \mathbf{Z})]^2\\
&= \E[\E[h_n(Z_0, \mathbf{Z}) h_n(Z_0, \mathbf{\tilde Z}) \mid Z_0]] - \E[h_n(Z_0, \mathbf{Z}) h_n(\tilde Z_0, \mathbf{\tilde Z})]\\
&= \E[h_n(Z_0, \mathbf{Z}) h_n(Z_0, \mathbf{\tilde Z})] - \E[h_n(Z_0, \mathbf{Z}) h_n(\tilde Z_0, \mathbf{\tilde Z})]\\
&= \E[h_n(Z_0, \mathbf{Z}) (h_n(Z_0, \mathbf{\tilde Z}) - h_n(\tilde Z_0, \mathbf{\tilde Z}))]
\end{talign}
\end{proof}

\begin{lemma}[Conditional expectation and $\gamma(h_n)$ rewriting for Monte Carlo estimation]
\label{cond-expec-closed-form}
If the features are drawn from a distribution with mean 0 and identity covariance matrix, we have
\begin{talign}
\E[\hsing(Z_0, \mathbf{Z}) \mid \mathbf{Z}]
&= \tau^2 + \|\betatrue - \hat\beta\|_2^2,
\end{talign}
and thus
\begin{talign}
\E[\hdiff(Z_0, \mathbf{Z}) \mid \mathbf{Z}]
&= \|\betatrue - \hat\beta_1\|_2^2 - \|\betatrue - \hat\beta^{(2)}\|_2^2,
\end{talign}
\begin{talign}
\gamma(\hsing)
&= \E[(\hsing(Z_0, \mathbf{Z}) - \|\betatrue - \hat\beta\|_2^2 - (\hsing(Z_0, \mathbf{Z'}) - \|\betatrue - \hat\beta'\|_2^2))^2],
\end{talign}
and
\begin{talign}
\gamma(\hdiff)
&= \E[(\hdiff(Z_0, \mathbf{Z}) - \|\betatrue - \hat\beta_1\|_2^2 + \|\betatrue - \hat\beta^{(2)}\|_2^2 - (\hdiff(Z_0, \mathbf{Z'}) - \|\betatrue - \hat\beta'_1\|_2^2 + \|\betatrue - \hat\beta'^{(2)}\|_2^2))^2].
\end{talign}
\end{lemma}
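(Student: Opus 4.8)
The plan is to reduce the entire lemma to a single closed-form computation of $\E[h_n^{\mrm{sing}}(Z_0, \mathbf{Z}) \mid \mathbf{Z}]$; the remaining three identities then follow by linearity of conditional expectation and by direct substitution into the definition of $\gamma$ in \cref{eq:sym-stab}. First I would condition on $\mathbf{Z}$, which renders $\hat\beta$ a (random but $\mathbf{Z}$-measurable) constant, and use that the test point $Z_0$ is independent of the training set, so the conditional expectation is simply an expectation over $(X_0, \varepsilon_0)$ with $\hat\beta$ held fixed.

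For the core computation, I would write $Y_0 = X_0^\top \betatrue + \varepsilon_0$ and expand
\[
(Y_0 - X_0^\top \hat\beta)^2 = (X_0^\top(\betatrue - \hat\beta))^2 + 2\varepsilon_0 X_0^\top(\betatrue - \hat\beta) + \varepsilon_0^2.
\]
Taking $\E[\,\cdot \mid \mathbf{Z}]$ term by term: the quadratic term gives $(\betatrue - \hat\beta)^\top \E[X_0 X_0^\top](\betatrue - \hat\beta) = \|\betatrue - \hat\beta\|_2^2$ because the features have mean zero and identity covariance, so $\E[X_0 X_0^\top] = \mathbf{I}$; the cross term vanishes since $\E[X_0] = 0$ and $\varepsilon_0 \indp X_0$ with $\E[\varepsilon_0] = 0$; and the last term contributes $\E[\varepsilon_0^2] = \tau^2$ by independence of $\varepsilon_0$ from $\mathbf{Z}$. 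This yields $\E[h_n^{\mrm{sing}}(Z_0, \mathbf{Z}) \mid \mathbf{Z}] = \tau^2 + \|\betatrue - \hat\beta\|_2^2$.

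From here the difference identity follows immediately: applying the single-fit formula to each of $\hat\beta^{(1)}$ and $\hat\beta^{(2)}$ and subtracting, the two $\tau^2$ terms cancel, leaving $\E[h_n^{\mrm{diff}}(Z_0, \mathbf{Z}) \mid \mathbf{Z}] = \|\betatrue - \hat\beta^{(1)}\|_2^2 - \|\betatrue - \hat\beta^{(2)}\|_2^2$. For the two $\gamma$ rewritings I would substitute these closed forms for both the $\mathbf{Z}$ and $\mathbf{Z'}$ conditional expectations into \cref{eq:sym-stab}; in the single case the common $\tau^2$ cancels between the $\mathbf{Z}$ and $\mathbf{Z'}$ contributions, and in both cases the stated expressions drop out verbatim.

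There is no genuine obstacle here — the lemma is essentially a bookkeeping exercise on top of the conditional-mean computation. The only points requiring care are ensuring the cross term truly vanishes, which relies on the mean-zero feature assumption together with the independence and zero mean of the noise, and correctly tracking the $\tau^2$ terms so that they are retained in the single-fit conditional expectation yet cancel in the difference and in both $\gamma$ expressions.
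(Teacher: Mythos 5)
Your proposal is correct and matches the paper's proof in substance: both compute the closed form of $\E[h_n^{\mrm{sing}}(Z_0,\mathbf{Z})\mid\mathbf{Z}]$ using independence of $Z_0$ from $\mathbf{Z}$ together with $\E[X_0]=0$, $\E[X_0X_0^\top]=\mathbf{I}$, and the noise moments, then obtain the other three identities by subtraction and substitution into the definition of $\gamma$, with the $\tau^2$ terms cancelling exactly as you describe. The only cosmetic difference is organizational — the paper starts from the expansion in \cref{eqs-single-algo}, computes $\E[Y_0^2]$, and completes the square, while you substitute $Y_0 = X_0^\top\betatrue + \varepsilon_0$ first and expand in $(\betatrue-\hat\beta)$ and $\varepsilon_0$ — which is the same calculation with slightly more direct bookkeeping.
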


\begin{proof}
Starting from the expression of $\E[h_n(Z_0, \mathbf{Z}) \mid \mathbf{Z}]$ in \cref{eqs-single-algo}, we have
\begin{talign}
\E[\hsing(Z_0, \mathbf{Z}) \mid \mathbf{Z}]
&= \E[Y_0^2] - 2 \betatrueT \E[X_0 X_0^\top] \hat\beta + \tr(\E[X_0 X_0^\top] \hat\beta \hat\beta^\top)\\
&= \Var(Y_0) + \E[Y_0]^2 - 2 \betatrueT \E[X_0 X_0^\top] \hat\beta + \tr(\E[X_0  X_0^\top] \hat\beta \hat\beta^\top)\\
&= \betatrueT \Var(X_0) \betatrue + \tau^2 + (\E[X_0]^\top \betatrue)^2 - 2 \betatrueT \E[X_0 X_0^\top] \hat\beta + \tr(\E[X_0  X_0^\top] \hat\beta \hat\beta^\top)\\
&= \betatrueT \Var(X_0) \betatrue + \tau^2 + \betatrueT \E[X_0]\E[X_0]^\top \betatrue - 2 \betatrueT \E[X_0 X_0^\top] \hat\beta + \hat\beta^\top \E[X_0  X_0^\top] \hat\beta\\
&= \tau^2 + \betatrueT \E[X_0 X_0^\top] \betatrue + \hat\beta^\top \E[X_0  X_0^\top] \hat\beta - 2 \betatrueT \E[X_0 X_0^\top] \hat\beta\\
&= \tau^2 + (\betatrue - \hat\beta)^\top \E[X_0 X_0^\top] (\betatrue - \hat\beta)\\
&= \tau^2 + \|\betatrue - \hat\beta\|_2^2
\end{talign}
since the features are drawn from a distribution with mean 0 and identity covariance matrix. The other three expressions follow from the definition of the quantities.
\end{proof}

The Monte Carlo estimation of $\sigma^2(h_n)$ and $\gamma(h_n)$ is based on 5,000,000 replications when using deterministic $\lambda_n$, but on 1,000,000 when $\lambda_n$ is selected via inner cross-validation due to computational complexity. Based on the Monte Carlo standard errors obtained for $\sigma^2(h_n)$ and $\gamma(h_n)$, we applied the Delta method as follows to obtain a standard error for $\rstab(h_n) = \frac{n \, \cdot \, \gamma(h_n)}{\sigma^2(h_n)}$. We define $f(x, y) = \frac{n x}{y}$ and we denote by $M$ the number of Monte Carlo replications used to estimate $\sigma^2(h_n)$ and $\gamma(h_n)$. Starting from the Monte Carlo standard errors $\frac{\sigma_x}{\sqrt{M}}$ of $\sigma^2(h_n)$ and $\frac{\sigma_y}{\sqrt{M}}$ of $\gamma(h_n)$, and using $\nabla f = (\frac{n}{y}, -\frac{n x}{y^2})$, we get to a standard error for $\rstab(h_n)$ by computing
\begin{talign}
\nabla f(x, y)^\top \text{diag}(\sigma_x^2, \sigma_y^2) \nabla f(x, y)
= \frac{n^2 \sigma_x^2}{y^2} + \frac{n^2 x^2 \sigma_y^2}{y^4}.
\end{talign}

Denoting the Monte Carlo estimates of $\sigma^2(h_n)$ and $\gamma(h_n)$ by $\hat x$ and $\hat y$, respectively, the standard error we use for $\rstab(h_n)$ is then
\[
\frac{1}{\sqrt{M}}\sqrt{\frac{n^2 \sigma_x^2}{\hat y^2} + \frac{n^2 \hat x^2 \sigma_y^2}{\hat y^4}}.
\]

\vspace{5mm}

\section{Additional Experimental Results}
\label{app:addtional-experiments}

\vspace{1.25cm}

\begin{figure*}[h!]
\centering
\hspace{0.01\linewidth}
    \begin{subfigure}{\subfigfracinone\linewidth}
        \includegraphics[width=\imgfrac\linewidth]{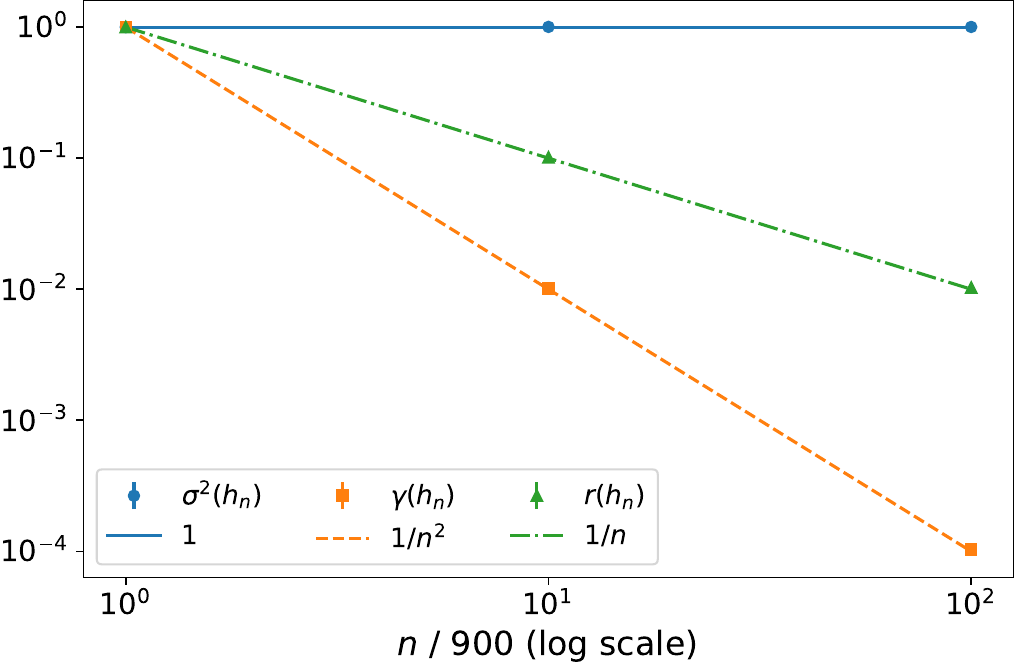}
    \end{subfigure}\hspace{\imgspaceone\linewidth}%
     \begin{subfigure}{\subfigfracinone\linewidth}
             \includegraphics[width=\imgfrac\linewidth]{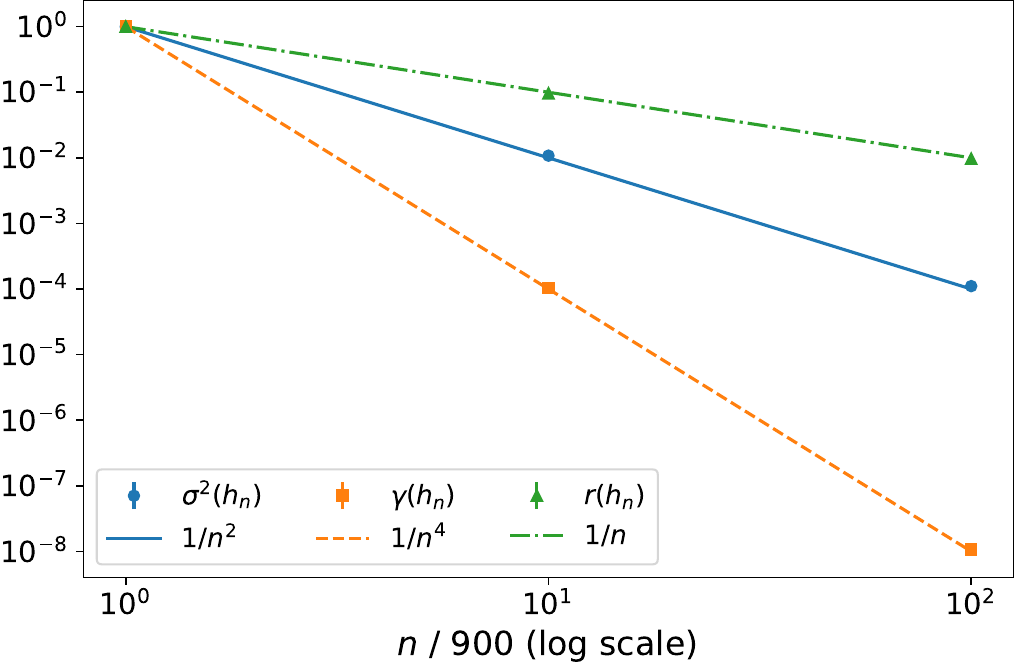}
    \end{subfigure}

    \vspace{\vgap\linewidth}
    \begin{subfigure}{\subfigfracintwo\linewidth}
        \includegraphics[width=\imgfrac\linewidth]{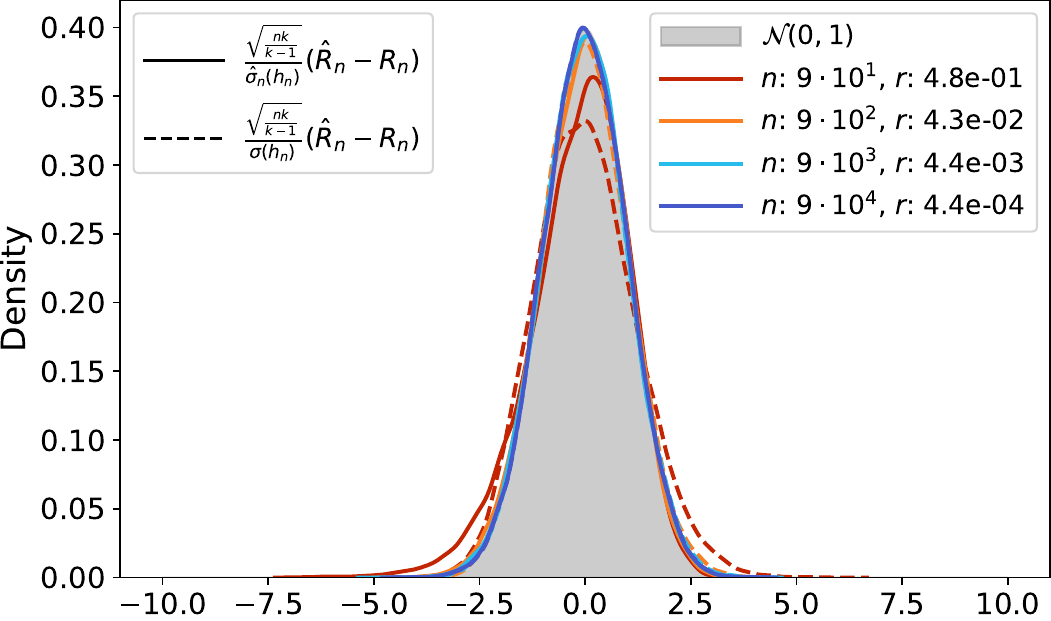}
    \caption{Relatively stable single algorithm ($\hsing$)}
    \end{subfigure}\hspace{\imgspacetwo\linewidth}%
    \begin{subfigure}{\subfigfracintwo\linewidth}
        \includegraphics[width=\imgfrac\linewidth]{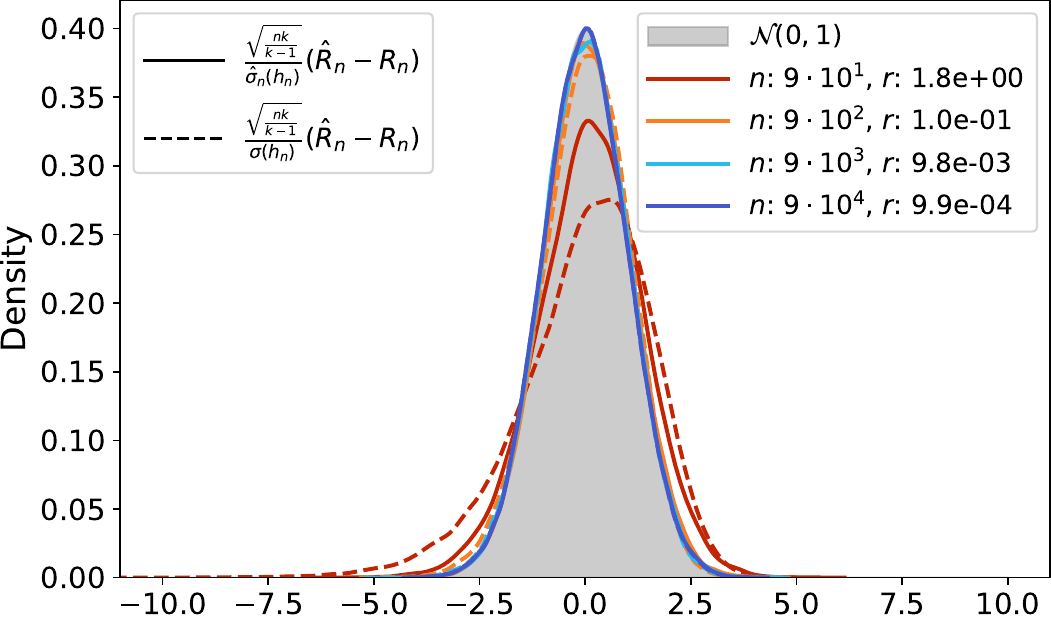}
    \caption{Relatively stable comparison ($\hdiff$)}
    \end{subfigure}
    \caption{
    Ridge regression with $\lambda_n = \sqrt{n}$ when $\betatrue = (3, 1, -5, 3, 0, 0, 0, 0, 0, 0)$.
    \textbf{Top:} $\sigma^2(h_n)$, $\gamma(h_n)$ and $\rstab(h_n)$ all normalized by their values at $n = 900$.
    \textbf{Bottom:} (best viewed in color) KDE plots for $\frac{\sqrt{\frac{n k}{k-1}}}{\hat\sigma_n(h_n)} (\Rhat - \Rcondcv)$ (solid curves) and $\frac{\sqrt{\frac{n k}{k-1}}}{\sigma(h_n)} (\Rhat - \Rcondcv)$ (dashed curves).
    }
    \label{fig:Ridge-det-lambda}
\end{figure*}

\begin{figure*}[t!]
\centering
\hspace{0.01\linewidth}
    \begin{subfigure}{\subfigfracinone\linewidth}
        \includegraphics[width=\imgfrac\linewidth]{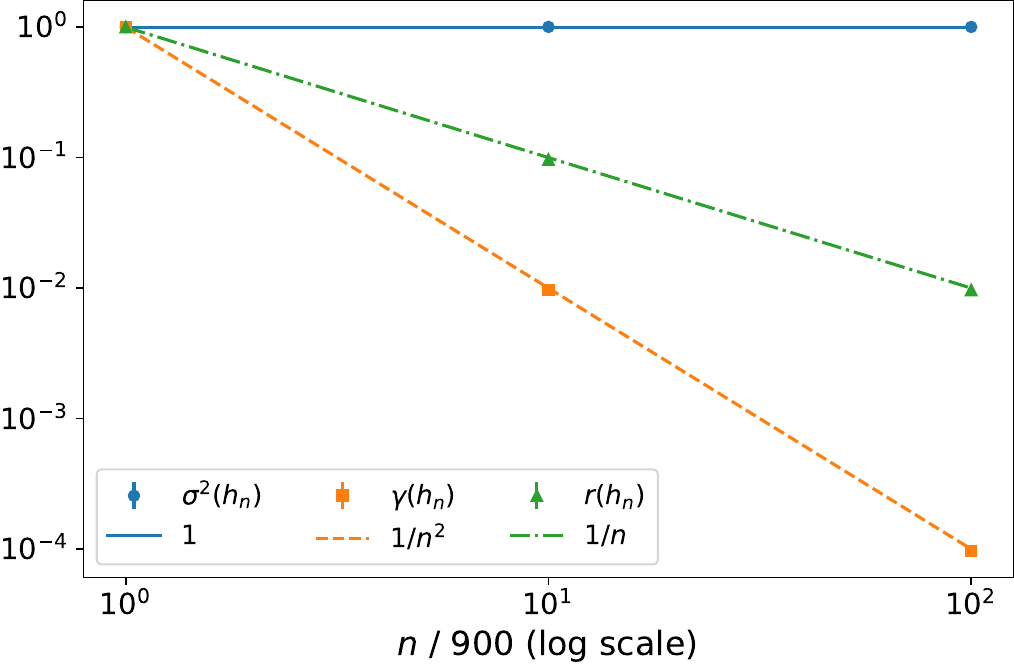}
    \end{subfigure}\hspace{\imgspaceone\linewidth}%
     \begin{subfigure}{\subfigfracinone\linewidth}
             \includegraphics[width=\imgfrac\linewidth]{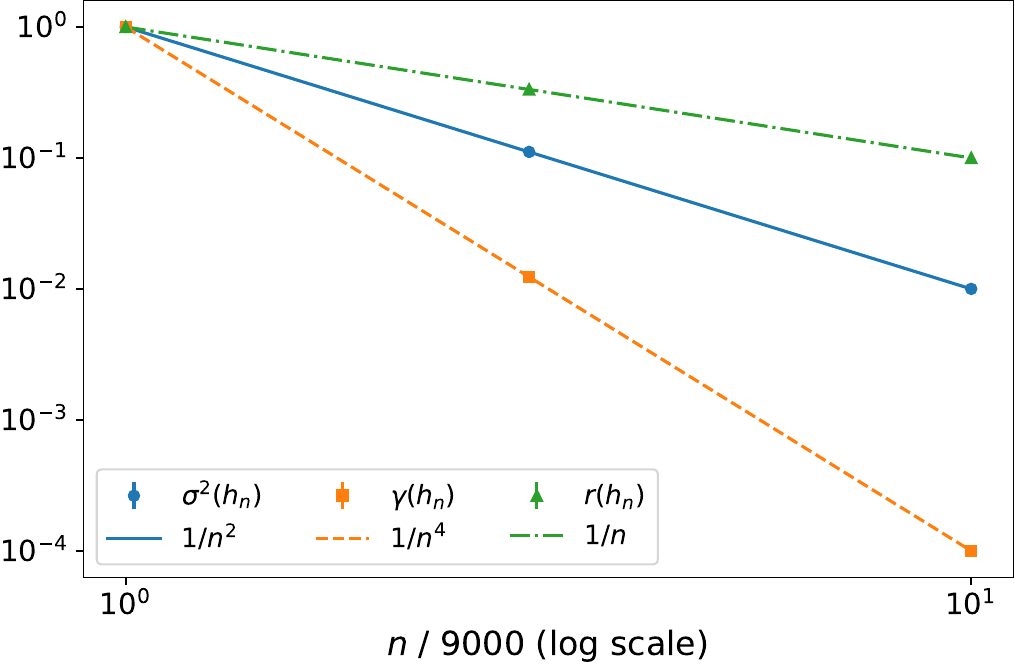}
    \end{subfigure}

    \vspace{\vgap\linewidth}
    \begin{subfigure}{\subfigfracintwo\linewidth}
        \includegraphics[width=\imgfrac\linewidth]{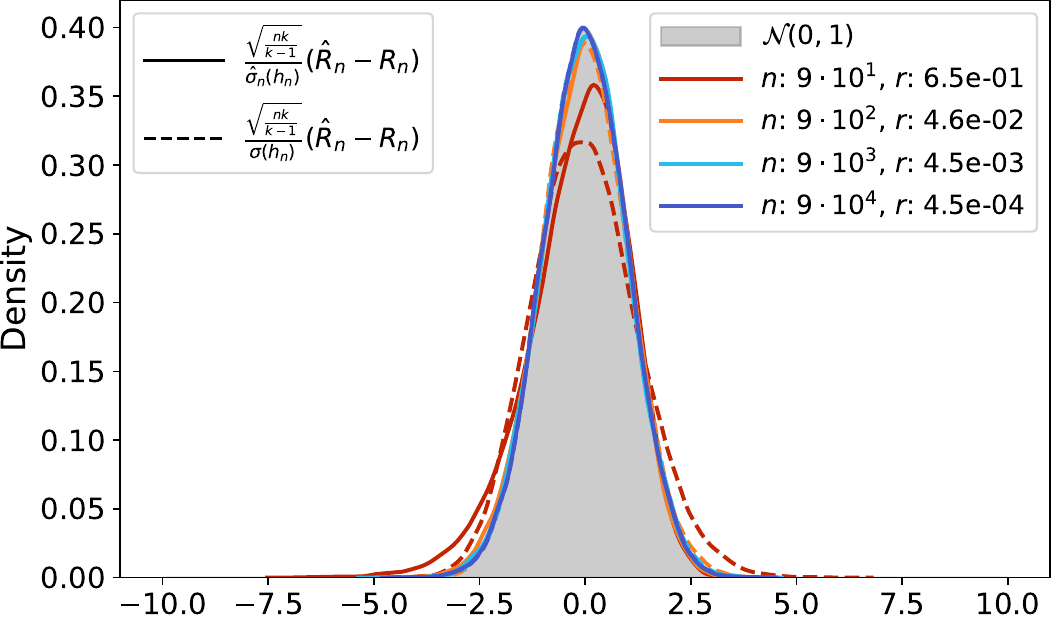}
    \caption{Relatively stable single algorithm ($\hsing$)}
    \end{subfigure}\hspace{\imgspacetwo\linewidth}%
    \begin{subfigure}{\subfigfracintwo\linewidth}
        \includegraphics[width=\imgfrac\linewidth]{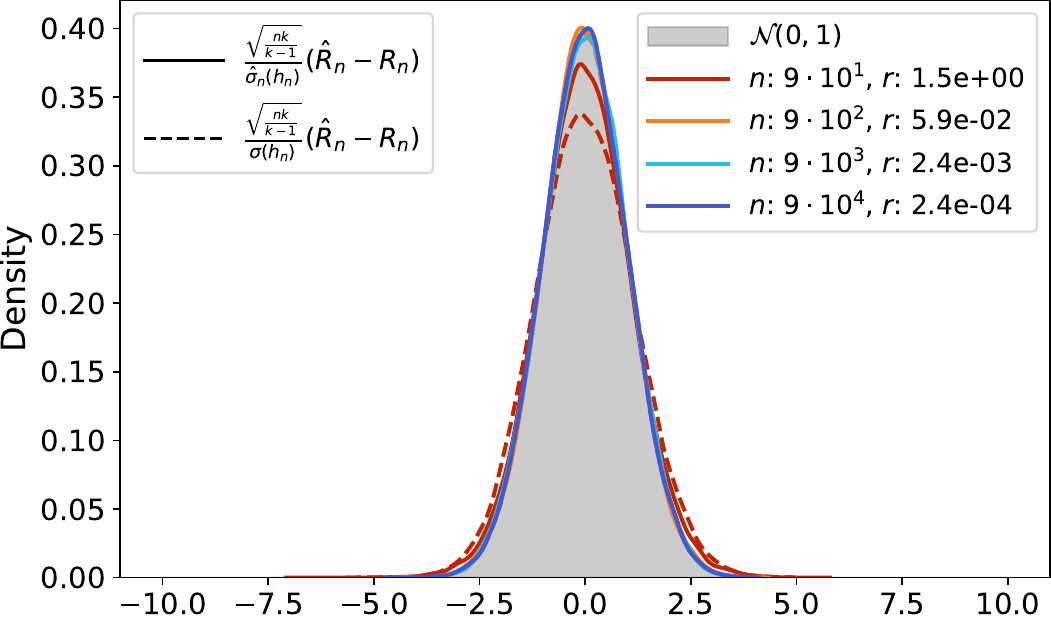}
    \caption{Relatively stable comparison ($\hdiff$)}
    \end{subfigure}
    \caption{
    ST with $\lambda_n = \sqrt{n}$ when $\betatrue = (3, 1, -5, 3, 4, -3, 10, 8, 5, 2)$.
    \textbf{Top:} $\sigma^2(h_n)$, $\gamma(h_n)$ and $\rstab(h_n)$ all normalized by their values at $n = 900$ for single algorithm and at $n = 9000$ for comparison.
    \textbf{Bottom:} (best viewed in color) KDE plots for $\frac{\sqrt{\frac{n k}{k-1}}}{\hat\sigma_n(h_n)} (\Rhat - \Rcondcv)$ (solid curves) and $\frac{\sqrt{\frac{n k}{k-1}}}{\sigma(h_n)} (\Rhat - \Rcondcv)$ (dashed curves).
    }
    \label{fig:STLasso-det-lambda-no-sparsity}
\end{figure*}

\begin{figure*}[t!]
\centering
\hspace{0.01\linewidth}
    \begin{subfigure}{\subfigfracinone\linewidth}
        \includegraphics[width=\imgfrac\linewidth]{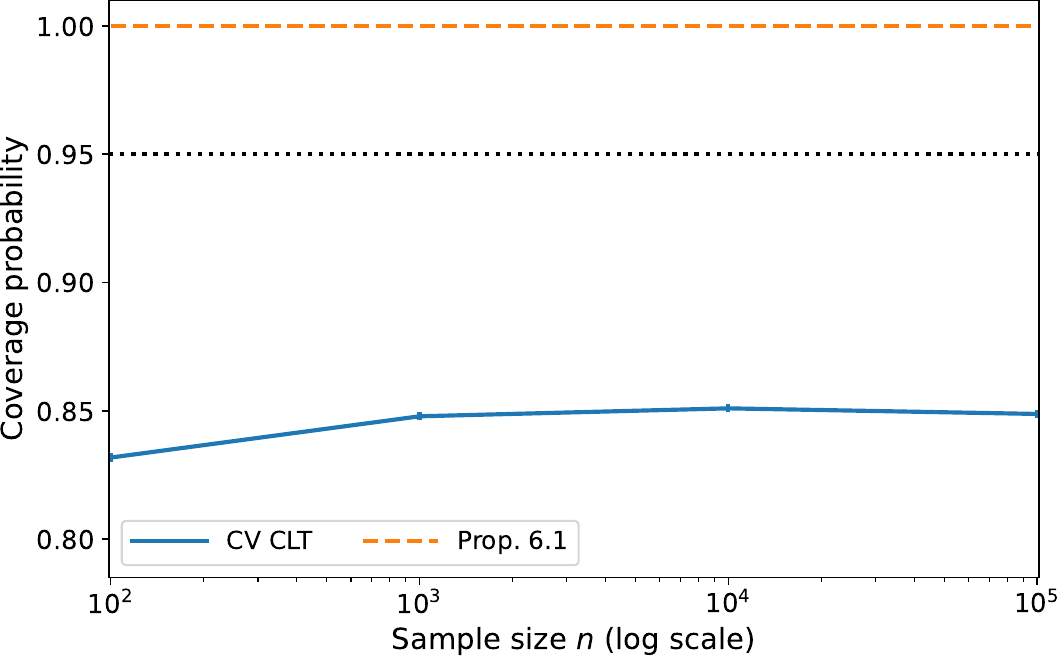}
    \end{subfigure}\hspace{\imgspaceone\linewidth}%
     \begin{subfigure}{\subfigfracinone\linewidth}
             \includegraphics[width=\imgfrac\linewidth]{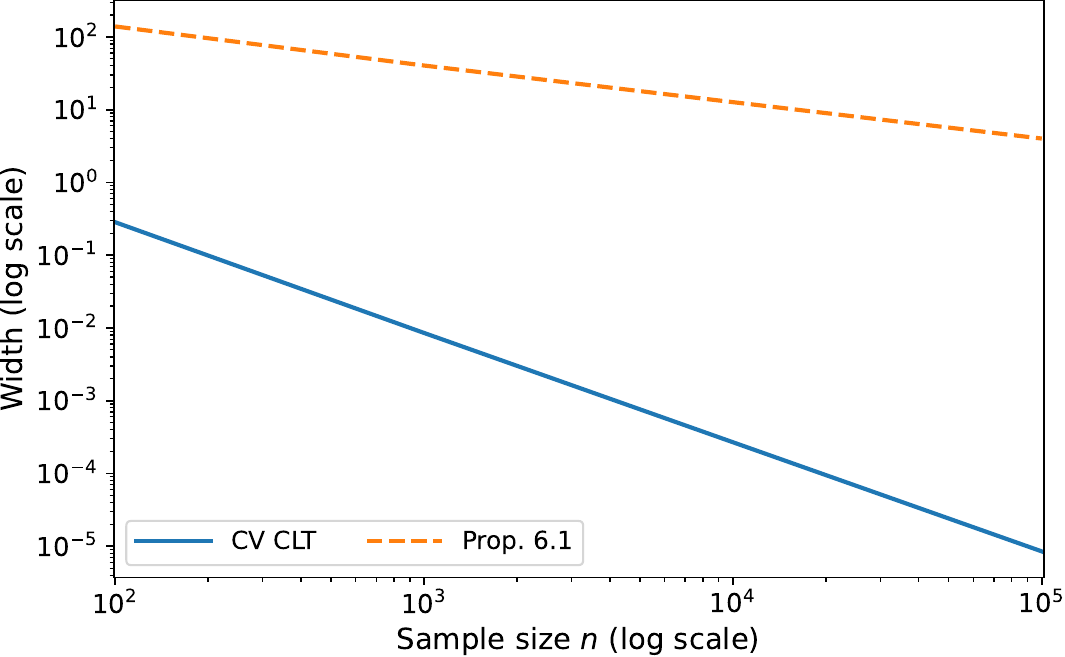}
    \end{subfigure}

    \caption{
    ST with $\lambda_n = \sqrt{n}$ when $\betatrue = (3, 1, -5, 3, 0, 0, 0, 0, 0, 0)$. Test error coverage (left) and width (right) of the confidence interval derived from the cross-validation central limit theorem \citep{BBJM:2020} and the confidence interval built following \cref{prop: comparison_coverage}, with target coverage $1-\alpha$ where $\alpha = 0.05$.
    The asymptotic ($1-\alpha$)-coverage CV CLT confidence interval undercovers for the relatively unstable comparison of two ST fits, while the confidence interval from \cref{prop: comparison_coverage} overcovers with width multiple orders of magnitude larger than the CV CLT interval.
    }
    \label{fig:STLasso-CVCLT-Prop-6-1-coverage-width}
\end{figure*}

\end{document}